\numberwithin{equation}{section}
\numberwithin{figure}{section}
\numberwithin{table}{section}
\renewcommand{\p@subfigure}{\thefigure}
\newtheorem{theorem}{Theorem}[section]
\newcommand{\repeatable}[2]{\makeatletter \global\expandafter\def\csname repText@#1\endcsname {#2} \makeatother #2}
\newcommand{\repeatxt}[1]{\makeatletter \expandafter\csname repText@#1\endcsname \makeatother}
\newcommand{\usecrop}[2]
{
	\newlength{\cropwidth}
	\setlength{\cropwidth}{\the\textwidth}
	\addtolength{\cropwidth}{#1}
	\newlength{\cropheight}
	\setlength{\cropheight}{\the\textheight}
	\addtolength{\cropheight}{#2}
	\usepackage[width=\the\cropwidth,height=\the\cropheight,center]{crop}
}
\DeclareMathAlphabet{\mathpzc}{OT1}{pzc}{m}{it}
\DeclareMathOperator{\Vol}{Vol}
\newcommand{\Rn}[1]{{\mathbb{R}^{#1}}}
\newcommand {\myvec}[1] {{\mbox{\boldmath $#1$}}}
\newcommand {\mymat}[1]  {{\mbox{\boldmath $#1$}}}
\newtheorem{prop}{Proposition}[subsection]
\newtheorem{cor}{Corollary}[subsection]
\newtheorem{D1}{Defenition}[subsection]
\newtheorem{asump}{Assumption}
\newcommand {\defeq}{\triangleq}
\begin{document}
\begin{frontmatter}

\title{Gaussian Bandwidth Selection for Manifold Learning and Classification}
\author{Ofir~Lindenbaum\fnref{label1}}
\ead{ofirlin@gmail.com}
\author{      Moshe~Salhov\fnref{label2}}
\author{       Arie~Yeredor\fnref{label1}}
\author{      Amir~Averbuch\fnref{label2}}
\address[label1]{School of Electrical Engineering, Tel Aviv University, Israel}
\address[label2]{School of Computer Science, Tel Aviv University, Israel}


\begin{abstract}
Kernel methods play a critical role in many machine learning algorithms. They are useful in manifold learning, classification, clustering and other data analysis tasks. Setting the kernel's scale parameter, also referred to as the kernel's bandwidth, highly affects the performance of the task in hand. We propose to set a scale parameter that is tailored to one of two types of tasks: classification and manifold learning. For manifold learning, we seek a scale which is best at capturing the manifold's intrinsic dimension. For classification, we propose three methods for estimating the scale, which optimize the classification results in different senses. The proposed frameworks are simulated on artificial and on real datasets. The results show a high correlation between optimal classification rates and the estimated scales. Finally, we demonstrate the approach on a seismic event classification task.  
\end{abstract}

\begin{keyword}
	Dimensionality reduction, Kernel methods, Diffusion Maps, Classification.
	
\end{keyword}

\end{frontmatter}

\section{Introduction}
Dimensionality reduction is an essential step in numerous machine learning tasks. Methods such as Principal
Component Analysis (PCA) \cite{jolliffe2002principal}, Multidimensional Scaling (MDS)
\cite{MDS}, Isomap~\cite{tenenbaum:Isomap} and Local Linear Embedding \cite{LLE} aim to extract essential information from high-dimensional data points based on their pairwise connectivities. Graph-based kernel methods such as Laplacian Eigenmaps \cite{belkin2001laplacian} and Diffusion Maps (DM) \cite{Lafon}, construct a positive semi-definite kernel based on the multidimensional data points to recover the underlying structure of the data. Such methods have been proven effective for tasks such as clustering \cite{Luo}, classification \cite{lindenbaum2015musical}, manifold learning \cite{lin2006riemannian} and many more.

Kernel methods rely on computing a distance function (usually Euclidean) between all pairs of data points $\myvec{x}_i,\myvec{x}_j\in \myvec{X} \in \mathbb{R}^{D \times N}$ and application of a data dependent kernel function. This kernel should encode the inherited relations between high-dimensional data points. An example for a kernel that encapsulates the Euclidean distance takes the form  
\begin{equation}\label{eq:KernelK1}
{\cal{K}}(\myvec{x}_i,\myvec{x}_j)\defeq{\cal{K}}\left( \frac{||\myvec{x}_i-\myvec{x}_j||^2}{\epsilon} \right)=K_{i,j},
\end{equation}
(where $\|\cdot\|$ denotes the Euclidean norm).
As shown, for example, in \cite{LLE, Lafon}, spectral analysis of such a kernel provides an efficient representation of the lower ($d$-) dimensional data (where $d\ll D$) embedded in the ambient space. Devising the kernel to work successfully in such contexts requires expert knowledge for setting two parameters, namely the scaling $\epsilon$ (Eq.\ (\ref{eq:KernelK1})) and the inferred dimension $d$ of the low-dimensional space. We focus in this paper on setting the scale parameter $\epsilon$, sometimes also called the {\em kernel bandwidth}. 

The scale parameter is related to the statistics and to the geometry of the data points. The Euclidean distance, which is often used for learning the geometry of the data, is meaningful only locally when applied to high-dimensional data points. Therefore, a proper choice of \begin{math} \epsilon \end{math} should preserve local connectivities and neglect large distances. If \begin{math} \epsilon \end{math} is too large, there is almost no preference for local connections and the kernel method is reduced essentially to PCA \cite{lindenbaum2015musical}. On the other hand, if $\epsilon$ is too small, the matrix \begin{math} \mymat{K} \end{math} (Eq.\ (\ref{eq:KernelK1})) has many small off-diagonal elements, which is an indication of a poor connectivity within the data.

Several studies have proposed different approaches for setting $\epsilon$. A study by Lafon {\em et al.} \cite{Coifman} suggests a method which enforces connectivity among most data points - a rather simple method, which is nonetheless sensitive to noise and to outliers. The sum of the kernel is used by Singer {\em et al.} \cite{Singer} to find a range of valid scales, this method provides a good starting value but is not fully automated. An approach by Zelnik-Manor and Perona \cite{zelnik} sets an adaptive scale for each point, which is applicable for spectral clustering but might deform the geometry. As a result, there is no guarantee that the rescaled kernel has real eigenvectors and eigenvalues. Others simply use the squared standard deviation (mean squared Euclidean distances from the mean) of the data as $\epsilon$, this again is very sensitive to noise and to outliers. 

Kernel methods are also used for Support Vector Machines (SVMs, \cite{SVM}), where the goal is to find a feature space that best separates given classes. Methods such as \cite{gaspar2012parameter,staelin2003parameter} use cross-validation to find the scale parameter which achieves peak classification results on a given training set. The study by Campbell {\em et al.} \cite{campbell1999dynamically} suggests an iterative approach that updates the scale until reaching maximal separation between classes. Chapelle {\em et al.} \cite{chapelle2002choosing} relate the scale parameter to the feature selection problem by using a different scale for each feature. This framework applies gradient descent to a designated error function to find the optimal scales. These methods are good for classification, but require to actually re-classify the points for testing each scale. 

In this paper we propose methods to estimate the scale which do not require the repeated application of a classifier to the data. Since the value of $\epsilon$ defines the connectivity of the resulted kernel matrix (Eq.\ (\ref{eq:KernelK1})), its value is clearly crucial for the performance of kernel based methods. Nonetheless, the performance of such methods depends on the training data and on the optimization problem in hand. Thus, in principle we cannot define an 'optimal' scaling parameter value independently of the data. We therefore focus on developing tools to estimate a scale parameter based on a given training set. We found that there are almost no simple methods focusing on finding element-wise (rather than one global) scaling parameters dedicated for manifold learning. Neither are there methods that try to maximize classification performance without directly applying a classifier. For these reasons we propose new methodologies for setting  $\epsilon$, dedicated either to manifold learning or to classification.

For the manifold learning task, we start by estimating the manifold's intrinsic dimension. Then, we introduce a vector of scaling parameters $\myvec{\epsilon}=[\epsilon_1,...,\epsilon_D]$, such that each value $\epsilon_i,i=1,...,D$, rescales each feature. We propose a special greedy algorithm to find the scaling parameters which best capture the estimated intrinsic dimension. This approach is analyzed and simulated to demonstrate its advantage.

For the classification task, we propose three methods for finding a scale parameter. In the first, by extending \cite{lindenbaum2016bandwidth} we seek a scale which provides the maximal separation between the classes in the extracted low-dimensional space. The second is based on the eigengap of the kernel. It is justified based on the analysis of a perturbed kernel. The third method sets the scale which maximizes the within-class transition probability. This approach does not require to compute an eigendecomposition. 

Additionally, we provide new theoretical justifications for the eigengap-based method, as well as new simulations to support all methods. Interestingly, we also show empirically that all the three methods converge to a similar scale parameter $\epsilon$. 

The structure of the paper is as follows: Preliminaries are given in
section \ref{sec:Background}. Section \ref{sec:EpsEst} presents and analyzes two frameworks for setting the scale parameter: the first is dedicated to a manifold learning task while the second fits a classification task. Section
\ref{sec:Exp} presents experimental results. Finally, in section \ref{sec:application} we demonstrate the applicability of the proposed methods for the task of learning seismic parameters from raw seismic signals.

\section{Preliminaries}
\label{sec:Background} 

We begin by providing a brief description of two methods used in this study: A kernel-based method for dimensionality reduction called {\em Diffusion Maps} \cite{Lafon}; and {\em Dimensionality from Angle and Norm Concentration} (DANCo, \cite{danco}), which estimates the intrinsic dimension of a manifold based on the ambient high-dimensional data.
In the following, vectors and matrices are denoted by bold letters, and their components are denoted by the respective plain letters, indexed using subscripts or parentheses.

\subsection{Diffusion Maps (DM)}
\label{SecDiff} DM \cite{Lafon} is a nonlinear dimensionality reduction framework that extracts the intrinsic geometry from a high-dimensional dataset. This framework is based on the construction of a stochastic matrix from the graph of the data. The eigendecomposition of the stochastic matrix provides an efficient representation of the data.
Given a high-dimensional dataset \begin{math} \myvec{X} \in \mathbb{R}^{D \times N}\end{math}, the DM framework construction consists of the following steps:
\begin{enumerate}

\item A kernel function  \begin{math} {{\cal{K}} : \myvec{X}\times{\myvec{X}}\longrightarrow{\mathbb{R}}  }
\end{math} is chosen, so as to compute a matrix $\myvec{K} \in {\mathbb{R}^{N \times N}}$ with elements $K_{i,j}={\cal{K}}(\myvec{x}_i,\myvec{x}_j)$, satisfying the following properties: (i) Symmetry: $\mymat{K}=\mymat{K}^T$; (ii) Positive semi-definiteness: $\mymat{K}\succeq\mymat{0}$, namely $\myvec{v}^T  \myvec{K}  \myvec{v} \geq 0$ for all $\myvec{v} \in
\mathbb{R}^N$; and (iii) Non-negativity: $\mymat{K}\ge\mymat{0}$, namely $K_{i,j}\ge 0$ $\forall i,j\in\{1\ldots N\}$. These properties guarantee that $\myvec{K}$ has real-valued eigenvectors and non-negative real-valued eigenvalues. 

In this study, we focus on the common choice of a {\em Gaussian kernel} (see Eq.\ (\ref{eq:KernelK1}))
	\begin{equation}\label{GKernel}
	{\cal{K}}(\myvec{x}_i,\myvec{x}_j)\defeq K_{i,j}=\exp\left( {-\frac{||\myvec{x}_i-\myvec{x}_j||^2}{2
			\epsilon}  }\right),i,j\in\{1\ldots N\},
	\end{equation} as the affinity measure between
two multidimensional data vectors $\myvec{x}_i$ and $\myvec{x}_j$;

Obviously, choosing the kernel function entails the selection of an appropriate scale $\epsilon$, which determines the degrees of connectivities expressed by the kernel.
\item By normalizing the rows of $\mymat{K}$, the row-stochastic matrix
    \begin{equation}
    \label{eq:DefP}
    \mymat{P}\defeq\mymat{D}^{-1}\mymat{K}\in\mathbb{R}^{N\times N}
    \end{equation}
    is computed, where $\mymat{D}\in\mathbb{R}^{N\times N}$ is a diagonal matrix with $D_{i,i}=\sum_jK_{i,j}$. $\mymat{P}$ can be interpreted as the transition probabilities of a (fictitious) Markov chain on $\mymat{X}$, such that $\left[(\mymat{P})^t\right]_{i,j}\defeq p_t(\myvec{x}_i,\myvec{x}_j)$ (where $t$ is an integer power) describes the implied probability of transition from point $\myvec{x}_i$ to point $\myvec{x}_j$ in $t$ steps.
\item{Spectral decomposition is applied to $\mymat{P}$, yielding a set of $N$ eigenvalues \begin{math}{\lbrace {\lambda_n}\rbrace }
	\end{math} \\(in descending order) and associated normalized eigenvectors \begin{math}{\lbrace{{{\myvec{\psi}}}_n}\rbrace }
	\end{math}\\ satisfying ${ {\myvec{P}}  {{\myvec{\psi}}_n} =\lambda_n{{\myvec{\psi}}}_n, n\in\{0\ldots N-1\}}
	$; }
\item{A new representation for the dataset $\myvec{X}$ is defined by
	\begin{equation}\label{eq:Psi}{ \myvec{\Psi}_{\epsilon}{(\myvec{x}_i)}:   \myvec{x}_i
		\longmapsto \begin{bmatrix} { \lambda_1\psi_1(i)} , {
			\lambda_2\psi_2(i)} , { \lambda_3\psi_3(i)} , {.} {.} {.}
		,
		
		{\lambda_{N-1}\psi_{N-1}(i)}\\
		
		\end{bmatrix}^T \in{\mathbb{R}^{N-1}} },
	\end{equation}
	where $\epsilon$ is the scale parameter of the Gaussian kernel (Eq.\  (\ref{GKernel})) and $\psi_m(i)$ denotes the $i^{\rm{th}}$ element of ${{\myvec{\psi}}_m}$. Note that $\lambda_0=1$ and $\myvec{\psi}_0=\myvec{1}$ were excluded as the constant eigenvector $\myvec{\psi}_0=\myvec{1}$ doesn't carry information about the data.
	
	{The main idea behind this representation is that the Euclidean distance between two multidimensional data points in the new representation is equal to the weighted $L_2$ distance between the conditional probabilities ${{p}(\myvec{x}_i,:)}$ and ${{p}(\myvec{x}_j,:)}$, $i,j=1,...,N$, where $i$ and $j$ are the $i$-th and $j$-th rows of $\myvec{P}$.
		The diffusion distance is defined by
		\begin{equation}{ \label{EqDist} { {\cal{D}}^2_{\epsilon}( {x}_i,{x}_j)=||{{\Psi}_{\epsilon}{({x}_i)}}-{{\Psi}_{\epsilon}{({x}_j)}}||^2={  \sum_{m\geq{1}} {\lambda}_m (\psi_m(i)-\psi_m(j))^2 }}=||{p}({x}_i,:)-{p}({x}_j,:)||^2_{\tiny{W}^{-1}}},
		\end{equation}
		where ${W}$ is a diagonal matrix with
		$W_{i,i}=\frac{D_{i,i}}{\sum_{i=1}^M D_{i,i}}$. This
		equality is proved in \cite{Lafon}.}}
\item{A low-dimensional mapping $\myvec{\Psi}^{d}_{\epsilon}(\myvec{x}_i),i=1,...,N$ is defined by \\ \begin{equation} \label{eq:PsiLow} {\myvec{\Psi}^{d}_{\epsilon}(\myvec{x}_i) :  X \rightarrow \begin{bmatrix}
		{ \lambda_1\psi_1(i)} , { \lambda_2\psi_2(i)} , {
			\lambda_3\psi_3(i)} , {.} {.} {.}   ,
		
		{\lambda_{d}\psi_{d}(i)}\\
		
		\end{bmatrix}^T \in \mathbb{R}^{d}}, \end{equation}  such that $d\ll D$, where $\lambda_{d+1},...,\lambda_{N-1}\longrightarrow 0$. }
\end{enumerate}
We chose to use DM in our analysis as it provides an intuitive interpretation based on its Markovian construction. Nonetheless, the methods in this manuscript could also be adapted to Laplacian Eigenmaps \cite{belkin2001laplacian} and to other kernel methods.
\subsection{Intrinsic Dimension Estimation}
\label{sec:ID}
Given a high-dimensional dataset $\myvec{X}=\{\myvec{x}_1,\myvec{x}_2,...,\myvec{x}_N\} \subseteq \mathbb{R}^{D \times N} $, which describes an ambient space with a manifold $\cal{M}$ containing the data points $\myvec{x}_1,\myvec{x}_2,...,\myvec{x}_N$, the {\em intrinsic dimension} $\bar{d}$ is the minimum number of parameters needed to represent the manifold. 

\begin{D1}
\label{def:ID}
Let $\cal{M}$ be a manifold. The intrinsic dimension $\bar{d}$ of the manifold is a positive integer determined by how many independent ``coordinates'' are needed to describe $\cal{M}$. By using a parametrization to describe the manifold, the intrinsic dimension is the smallest integer $\bar{d}$ such that there exists a smooth map $f({\xi})$ for all data points on the manifold ${\cal{M}} ={f}({\xi})$, ${\xi} \subseteq {\cal{R}}^{\bar{d}}$.\\
\end{D1} 
Methods proposed by Fukunaga \& Olsen \cite{fukunaga1971algorithm} or by Verveer \& Duin \cite{verveer1995evaluation} use local or global PCA to estimate the intrinsic dimension $\bar{d}$. The dimension is set as the number of eigenvalues greater than some threshold. Others, such as Trunk \cite{trunk1976stastical} or Pettis {\em et al.} \cite{pettis1979intrinsic}, use $k$-neaserst-neighbors (k-NN) distances to find a subspace around each point and based on some statistical assumption estimate $\bar{d}$. A survey of different approaches is provided in \cite{camastra2003data}. In this study we use Dimensionality from Angle and Norm Concentration (DANCo, by Ceruti {\em et al.} \cite{danco}) based algorithm (which we observed to be the most robust approach in our experiments) to estimate $\bar{d}$.

DANCo jointly uses the normalized distances and mutual angles to extract a robust estimate of $\bar{d}$. This is done by finding the dimension that minimizes the Kullback–Leibler divergence between the estimated probability distribution functions (pdf-s) of artificially-generated data and the observed data. A full description of DANCo is presented in the Appendix of this manuscript. In section \ref{sec:OptMani}, we propose a framework which exploits the resulting estimate $\hat{d}$ of $\bar{d}$ for choosing the scale parameter $\epsilon$.

\section{Setting the Scale Parameter $\epsilon$}
\label{sec:EpsEst}
DM as described in Section \ref{sec:Background} is an efficient method for dimensionality reduction. The method is almost completely automated, and does not require tuning many hyper-parameters. Nonetheless, its performance is highly dependent on proper choice of $\epsilon$ (Eq.\ (\ref{eq:KernelK1})), which, along with the decaying property of Gaussian affinity kernel $\myvec{K}$, defines the affinity between all points in $\myvec{X}\in \mathbb{R}^{D \times N} $. If the ambient dimension $D$ is high, the Euclidean distance becomes meaningless once it takes large values. Thus, a proper choice of $\epsilon$ should preserve local connectivities and neglect large distances. We argue that there is no one 'optimal' way for setting the scale parameter; rather, one should define the scale based on the data and on the task in hand.

In the following subsection we describe several existing method for setting $\epsilon$. In subsection \ref{sec:OptMani}, we propose a novel algorithm for setting $\epsilon$ in the context of manifold learning. Finally in subsection \ref{sec:OptClass}, we present three methods for setting $\epsilon$ in the context of classification tasks, so as to optimize the classification performance (in certain senses) in the low-dimensional space. Our goal is to optimize the scale prior to the application of the classifier.
\subsection{Existing Methods}
\label{sec:exist}
Several studies propose methods for setting the scale parameter $\epsilon$. Some choose \begin{math} \epsilon \end{math} as the empirical squared standard deviation (mean squared Euclidean deviations from the empirical mean) of the data. This approach is reasonable when the data is sampled from a uniform distribution. 

A max-min measure is suggested in \cite{Keller} where the scale is set to
\begin{equation} \label{eq:MaxMin}
\epsilon_{\text{MaxMin}}={\cal{C}}\cdot \underset{j}{\max} [ \underset{i,i\neq j}{\min} (||\myvec{x}_i-\myvec{x}_j||^2)],i,j=1,...N,
\end{equation}
and ${\cal{C}} \in [2,3]$. This approach attempts to set a small scale to maintain local connectivities.

Another scheme \cite{Singer} aims to find a range of values for $\epsilon$. The idea is to compute the kernel $\mymat{K}$ from Eq.\ (\ref{GKernel}) at various values of $\epsilon$. Then, search for the range of values which give rise to a well-pronounced Gaussian bell shape. The scheme in \cite{Singer} is implemented using Algorithm \ref{alg:Singer}.\\
\begin{algorithm}[H] 
\caption{$\epsilon$ range selection} \textbf{Input:} dataset
$\myvec{X} = \{\myvec{x}_1, \myvec{x}_2, \ldots, \myvec{x}_N\}, \myvec{x}_i \in \mathbb{R}^{D}$.\\ \textbf{Output:} Range of values for the scale $\epsilon$, $\bar{\epsilon}=[\epsilon_0,\epsilon_1]$.
\begin{algorithmic}[1]
	\STATE Compute Gaussian kernels        \begin{math}{\myvec{K}(\epsilon) }
	\end{math}      for several values of \begin{math} {\epsilon} \end{math}.
	\STATE Compute:    \begin{math}{L(\epsilon)=\underset{i}{{\sum}}\underset{j}{{\sum}}K_{i,j}(\epsilon) }
	\end{math} (Eq.\ (\ref{GKernel})).
	\STATE Plot a logarithmic plot of \begin{math} {L(\epsilon)} \end{math} (vs. $\epsilon$).
	\STATE Set \begin{math} \bar{\epsilon}=[\epsilon_0,\epsilon_1] \end{math} as the maximal linear range of $L(\epsilon)$.
\end{algorithmic}
\label{alg:Singer}
\end{algorithm}
Note that $L(\epsilon)$ consists of two asymptotes, $L(\epsilon)\overset{\epsilon\rightarrow 0}{\longrightarrow} \text{log}(N)$ and $L(\epsilon)\overset{\epsilon\rightarrow \infty}{\longrightarrow} \text{log}(N^2)=2\text{log}(N)$, since when $\epsilon\rightarrow 0$, $\myvec{K}$ (Eq.\ (\ref{GKernel})) approaches the Identity matrix, whereas for $\epsilon\rightarrow \infty$, $\myvec{K}$ approaches an all-ones matrix. We denote by $\epsilon_0$ the minimal value within the range $\bar{\epsilon}$ (defined in Alg.\ \ref{alg:Singer}). This value is used in the simulations presented in Section \ref{sec:Exp}. 

A dynamic scale is proposed in \cite{zelnik}, suggesting to calculate a local-scale $\sigma_i$ for each data point $\myvec{x}_i,i=1,...,N$. The scale is chosen using the $L_1$ distance from the $r$-th nearest neighbor of the point $\myvec{x}_i$. Explicitly, the calculation for each point is 
\begin{equation}
\sigma_i=||\myvec{x}_i-\myvec{x}_r||,i=1,...,N,
\end{equation} where $\myvec{x}_r$ is the $r$-th nearest (Euclidean) neighbor of the point $\myvec{x}_i$. The value of the kernel for points $\myvec{x}_i \text{ and } \myvec{x}_j$ is
\begin{equation}\label{GKernelZelnik}
{\cal{K}}(\myvec{x}_i,\myvec{x}_j)\defeq K_{i,j}=\exp\left( {-\frac{||\myvec{x}_i-\myvec{x}_j||^2}{
	\sigma_i \sigma_j}  }\right),i,j\in\{1\ldots N\}.
\end{equation} This dynamic scale guarantees that at least half of the points are connected to $r$ neighbors.

All the methods mentioned above treat $\epsilon$ as a scalar. Thus, when data is sampled from various types of sensors these methods may be dominated by the features (vector elements) with highest energy (or variance). In such cases, each feature $\ell=1,..,D,$ in a data vector $x_i[\ell]$ may require a different scale. In order to re-scale the vector, a diagonal $D\times D$ positive-definite (PD) scaling matrix $\myvec{A} \succ \mymat{0}$ is introduced. The rescaling of the feature vector $\myvec{x}_i$ is set as $\widehat{\myvec{x}}_i = \myvec{A} \myvec{x}_i, 1 \leq i \leq N$. The kernel matrix is rewritten as
\begin{equation}
\label{eq:KernelK}
K_{i,j}={\cal K} \left(\hat{\myvec{x}}_i,\hat{\myvec{x}}_j \right)  = 
\exp \left(-\frac{1}{2\epsilon}\|\hat{\myvec{x}}_i-\hat{\myvec{x}}_j\|^2\right)=
\exp \left( 
- \frac{1}{2\epsilon}
\left(\myvec{x}_i - \myvec{x}_j\right)^T \myvec{A}^T\mymat{A} \left(\myvec{x}_i - \myvec{x}_j\right)
\right).
\end{equation} 
A standard way to set the scaling elements $A_{\ell,\ell}$ is to use the empirical standard deviation of the respective elements and then set $\epsilon=\epsilon_{\rm std}\triangleq 1$. More specifically,
\begin{equation} \label{eq:StdN}
A_{\ell,\ell}=\sqrt{\frac{1}{N}\sum^N_{i=1}
\left(x_i(\ell)-\mu_\ell\right)^2}, \;\;\; \mu_\ell\defeq\frac{1}{N}\sum_{i=1}^Nx_i(\ell)  \;\;\; \ell=1,...,D, \;\;\; \epsilon=\epsilon_{\rm std}=1.
\end{equation}

\subsection{Setting $\epsilon$ for Manifold Learning}
\label{sec:OptMani}

In this subsection we propose a framework for setting the scale parameter $\epsilon$ when the dataset $\myvec{X}$ has some low-dimensional manifold structure $\cal{M}$.
We start by revisiting an analysis from \cite{YoelTomo2008,hein2005intrinsic}, which relate the scale parameter \begin{math} {\epsilon} \end{math} to the intrinsic dimension $\bar{d}$ (Definition \ref{def:ID}) of the manifold.  In \cite{YoelTomo2008} a range of valid values is suggested for $\epsilon$, here we expand the results from \cite{YoelTomo2008,hein2005intrinsic} by introducing a diagonal PD scaling matrix $\myvec{A} $ (as used in Eq.\ (\ref{eq:KernelK})). This diagonal matrix enables a feature selection procedure which emphasizes the latent structure of the manifold. 

Let $\myvec{K}(\epsilon)\in\mathbb{R}^{N\times N}$, $\myvec{A}\in\mathbb{R}^{D\times D}$ be the kernel matrix and diagonal PD matrix (resp.) from Eq.\ (\ref{eq:KernelK}). By taking the double sum of all elements in Eq.\ (\ref{eq:KernelK}), we get
\begin{equation}
\label{eq:sum_w}
S(\epsilon)\triangleq\sum_{i,j} K_{i,j}(\epsilon)  = \sum_{i,j} \exp \left( -\frac{1}{2\epsilon}
\left(\myvec{x}_i - \myvec{x}_j\right)^T \myvec{A}^T\myvec{A} \left(\myvec{x}_i - \myvec{x}_j\right)\right).
\end{equation}
By assuming that the data points in $\myvec{X}$ are independently uniformly distributed over the manifold $\mathcal{M}$, this sum can be approximated using the mean value theorem as
\begin{equation}
\label{eq:sum_w_mv}
S(\epsilon)  \approx \frac{N^2}{\Vol^2\left( \mathcal{M} \right)} \int_ {\mathcal{M}}  \int_{ \mathcal{M} } 
\exp \left( -\frac{1}{2\epsilon} \lVert \myvec{y}' - \myvec{y} \rVert^2 \right) {\rm d}\myvec{y}' {\rm d}\myvec{y},
\end{equation} 
where $\Vol \left(\mathcal{M} \right)\triangleq  \int_{ \mathcal{M} }{\rm d} \myvec{y} $ is the (weighted) volume of the $\bar{d}$-dimensional manifold $\mathcal{M}$,
with ${\rm d} \myvec{y} $, ${\rm d} \myvec{y'} $ being infinitesimal parallelograms on the manifold, carrying the dependence on $\mymat{A}$ (see \cite{moser1965volume} for a more detailed discussion).
When $\epsilon$ is sufficiently small, the integrand in the internal integral in Eq.\ (\ref{eq:sum_w_mv}) takes non-negligible values only when $\myvec{y}'$ is close to the hyperplane tangent to the manifold at $\myvec{y}$. Thus, the integration over $\myvec{y}'$ within a small patch around each $\myvec{y}$ can be approximated by integration in $\mathbb{R}^{\bar{d}}$, so that
\begin{equation}
\label{eq:sum_w_mv2}
S(\epsilon)  \approx
\frac{N^2}{\Vol^2\left( \mathcal{M} \right)} \int_ {\mathcal{M}}  \int_{ \Rn{\bar{d}} } \exp \left( 
-\frac{1}{2\epsilon} \lVert\myvec{y} - \myvec{t} \rVert^2 \right) {\rm d}\myvec{t} {\rm d}\myvec{y},
\end{equation} where $\bar{d}$ is the intrinsic dimension of $\mathcal{M}$ and $\myvec{t}$ is a $\bar{d}$-dimensional vector of coordinates on the tangent plane.

The integral in Eq.\ (\ref{eq:sum_w_mv2}) has a closed-form solution (the internal integral yields $(2\pi \epsilon)^{\bar{d}/2}$, so that the outer integral yields $(2\pi\epsilon)^ {\bar{d}/2}\Vol(\mathcal{M})$ ), and we therefore obtain the relation
\begin{equation}
\label{eq:sum_w_mv4}
S \left( \epsilon \right) = \sum_{i,j} \exp\left(-{r_{i,j}(\mymat{A})}/{2\epsilon}\right) \approx \frac{N^2\left(2 \pi \epsilon \right)^{\bar{d}/2}}{\Vol\left( \mathcal{M} \right)},
\end{equation}
where we have used $ r_{i,j}\left(\myvec{A} \right) \defeq \left(\myvec{x}_i - \myvec{x}_j\right)^T \myvec{A}^T\myvec{A} \left(\myvec{x}_i - \myvec{x}_j\right)$ for shorthand.

The key observation behind our suggested selection of $\mymat{A}$ and $\epsilon$, is that due to the approximation in Eq.\ (\ref{eq:sum_w_mv4}), an ``implied" intrinsic dimension $d_\epsilon(\mymat{A})$ can be obtained for different selections of $\mymat{A}$ and $\epsilon$ as follows. Taking the $\log$ of Eq.\ (\ref{eq:sum_w_mv4}) we have
\begin{equation}
\label{eq:sum_w_log}
\log S \left( \epsilon \right)=
\log \left( \sum_{i,j} \exp\left(-{r_{i,j}(\mymat{A}) }/{2\epsilon}\right) \right) \approx  \frac{\bar{d}}{2} \log\left( \epsilon \right)  + \log\left( \frac{N^2\left(2 \pi \right)^{\bar{d}/2}}{\Vol\left( \mathcal{M} \right)}\right).
\end{equation}
Differentiating w.r.t.\ $\epsilon$ we obtain
\begin{equation}
\frac{\partial \log S(\epsilon)}{\partial \epsilon}
=\frac{\sum_{i,j}r_{i,j}(\mymat{A})\exp\left(-r_{i,j}(\mymat{A})/{2\epsilon}\right)}{2\epsilon^2\sum_{i,j}\exp\left(-r_{i,j}(\mymat{A})/\frac{1}{2\epsilon}\right)}
\approx\frac{\bar{d}}{2\epsilon},
\end{equation}
leading to the ``implied" dimension
\begin{equation}
\label{eq:deA}
    d_\epsilon(\mymat{A})\approx
    \frac{\sum_{i,j}r_{i,j}(\mymat{A})\exp\left(-r_{i,j}(\mymat{A})/{2\epsilon}\right)}{\epsilon\sum_{i,j}\exp\left(-r_{i,j}(\mymat{A})/{2\epsilon}\right)}.
\end{equation}
We propose to choose the scaling so as to minimize the difference between the estimated dimension $\hat{d}$ (see section \ref{sec:ID}) and the implied dimension $d_{\epsilon}(\mymat{A})$.
We therefore set $\myvec{A}$ (and $\epsilon$) based on solving the following optimization problem
\begin{equation}
\label{eq:sum_w_optimization}
\myvec{A} = \arg \min_{\footnotesize{\mymat{A}},\epsilon}  |d_{\epsilon}(\mymat{A})-\hat{d}|\;\;\;\text{s.t. }
\mymat{A} \text{ is diagonal and PD}, \epsilon>0.
\end{equation}
We note that this minimization problem has one degree of freedom, which can be resolved, e.g., by arbitrarily setting $A_{1,1}=1$ (see Algorithm \ref{alg:solv_global} below). When working in a sufficently small ambient dimension $D$, the minimizaion can be solved using an exhaustive search (e.g., on some pre-defined grid of scaling values). However, for large $D$ an exhaustive search may become unfeasible in practice, so we propose a greedy algorithm, outlined below as Algorithm~\ref{alg:solv_global}, for computing both the scaling matrix $\mymat{A}$ and the scale parameter $\epsilon$.

\begin{algorithm}[h!]
\caption{Manifold Based Vector Scaling}
\label{alg:solv_global}
\KwIn{dataset: $\myvec{X}=\{\myvec{x}_1,\myvec{x}_2,...,\myvec{x}_N\} \in \mathbb{R}^{D \times N} $.\\ Intrinsic dimension estimate: $\hat{d}$ (optional).}
\KwOut{Normalized dataset $\widehat{\myvec{X}}$
}
\begin{algorithmic}[1]
	\IF {isempty($\hat{d}$)} 
	\STATE Apply DANCo \cite{danco} to $\myvec{X}$ to estimate $\hat{d}$ (description in Appendix).
    \ENDIF
	\STATE Set $\widehat{\myvec{X}}^{(\hat{d})} = \left(\myvec{X}_{1:\hat{d},:} - {\rm mean}(\myvec{X}_{1:\hat{d},:}) \right)./{\rm std}(\myvec{X}_{1:\hat{d},:})$.  \\
	\FOR{$\ell=\hat{d}+1$ \TO $D$}
		
		\STATE Construct $\widehat{\myvec{X}}^{(\ell)}\triangleq[\widehat{\myvec{X}}^{(\ell-1)};\myvec{X}_{\ell,:}]$\\
		\STATE Find $A_{\ell,\ell}>0$ and $\epsilon>0$ minimizing $|d_\epsilon(\mymat{A})-\hat{d}|$, where $\mymat{A}\in\mathbb{R}^{\ell\times\ell}$ is an identity matrix with its $(\ell,\ell)$-th element (only) replaced by $A_{\ell,\ell}$, and where $r_{i,j}(\mymat{A})$ in Eq.\ (\ref{eq:deA}) operates on $\widehat{\myvec{X}}^{(\ell)}$\\
		\STATE Update $\widehat{\myvec{X}}^{(\ell)} = [\widehat{\myvec{X}}^{(\ell-1)};\myvec{X}_{\ell,:}\cdot A_{\ell,\ell}]/\sqrt{\epsilon}$ \\
	\ENDFOR
\end{algorithmic}
\end{algorithm}

The proposed algorithm (Algorithm \ref{alg:solv_global}) operates by iteratively constructing the normalized dataset $\widehat{\mymat{X}}\defeq \mymat{A}\mymat{X}/\sqrt{\epsilon}$ row by row. To this end, the algorithm is first initialized by normalizing the first $\hat{d}$ rows (coordinates) using their empirical standard deviations (note that the estimated (or known) intrinsic dimension $\hat{d}$ is either provided as an input or estimated using DANCo \cite{danco}). Then, in the $\ell$-th iteration ($\ell=\hat{d}+1,\ldots D$) only the scaling factor $A_{\ell,\ell}$ for the next ($\ell$-th) row of $\mymat{X}$ and a new overall scaling $\epsilon$ are found, using a (two-dimensional) exhaustive search. The resulting $A_{\ell,\ell}$ is then applied to the $\ell$-th row, and the entire $\ell\times N$ data block is normalized by $\sqrt{\epsilon}$ before the next iteration. 

The computational complexity of this algorithm with $k$ hypotheses of $\epsilon$ and $A_{l,l}$ for each iteration is $O \left( N^2 k D \right)$, since $N^2$ operations are required in the computation of a single scaling hypothesis, and this is required for each coordinate $d=1,...,D$. 

Due to the greedy nature of the algorithm, its performance depends on the order of the $D$ features. We further propose to reorder the features using a soft feature-selection procedure. The studies in \cite{cohen2002feature,lu2007feature,song2010feature} suggest an unsupervised feature selection procedure based on PCA. The idea is to use the features which are most correlated with the top principle components. We propose an algorithm for reordering the features based on their correlation with the leading coordinates of the DM embedding. The algorithm (Algorithm \ref{alg:fear_perm} below) is called Correlation Based Feature Permutation (CBFP), and uses the correlation between the $D$ features and $\hat{d}$ embedding coordinates. This correlation value provides a natural measure for the influence of each feature on the extracted embedding.
\begin{algorithm}[h!]
\caption{Correlation Based Feature Permutation (CBFP)}
\label{alg:fear_perm}
\KwIn{dataset: $\myvec{X}=\{\myvec{x}_1,\myvec{x}_2,...,\myvec{x}_N\} \in \mathbb{R}^{D \times N} $.\\ Intrinsic dimension estimate: $\hat{d}$ (optional).}
\KwOut{Feature permutation vector $\myvec{j}$, such that $\myvec{j}= {\sigma} ([1,...,D])$ and $\sigma()$ is a permutation operation. }
\begin{algorithmic}[1]
\IF {isempty($\hat{d}$)} 
	\STATE Apply DANCo \cite{danco} to $\myvec{X}$ to estimate $\hat{d}$ (described in the Appendix).
    \ENDIF
	\STATE Compute $\epsilon_{\text{MaxMin}}$ based on Eq.\ (\ref{eq:MaxMin}).
	\STATE Using $\epsilon_{\text{MaxMin}}$ for the kernel scaling, compute DM representation $\myvec{\Psi}^{\hat{d}}$ using Eq.\ (\ref{eq:PsiLow}).
	\STATE Compute a vector $\myvec{c}\in\mathbb{R}^D$ of the feature-embedding correlation scores, defined as
	\begin{equation}
	c_i \defeq \sum^{\hat{d}}_{\ell=1} |{\rm corr}(\myvec{X}_{i,:},\myvec{\Psi}^{\hat{d}}_{\ell,:})|,i=1,...,D,
	\end{equation}
	where ${\rm corr}(\cdot,\cdot)$ denotes the correlation coefficient between its two vector agruments.
	\STATE Set $[\myvec{v},\myvec{j}]=\text{sort}(\myvec{c})$, where $\myvec{v}$, $\myvec{j}$ are the sorted values and corresponding indices of $\myvec{c}$.
	\STATE Reorder the $D$ features by $\myvec{\widetilde{X}}=\myvec{X}(\myvec{j},:)$.
	
\end{algorithmic}
\end{algorithm}

In subsection \ref{sec:ExpMani} below, we evaluate the performance of Algorithms \ref{alg:solv_global} and \ref{alg:fear_perm} when applied to synthetic data embedded in artificial manifolds.

\subsection{Setting $\epsilon$ for Classification}
\label{sec:OptClass}
Classification algorithms use a metric space and an induced distance to compute the category of unlabeled data points. Dimensionality reduction is effective for capturing the essential intrinsic geometry of the data and neglecting the undesired information (such as noise). Therefore, dimensionality reduction can drastically improve classification results \cite{lindenbaum2015musical}. As previously mentioned, $\epsilon$ playes a crucial role in the performance of kernel methods for dimensionality reduction. Various methods have been proposed for finding a scale $\epsilon$ that would potentially optimize classification performance. 

Studies such as by Gaspar {\em et al.} \cite{gaspar2012parameter} and by Staelin \cite{staelin2003parameter} use a cross-validation procedure and select the scale $\epsilon$ that maximizes the performance on the validation data. In \cite{chapelle2002choosing}, Chapelle {\em et al.} apply gradient descent to a classification-error function to find an 'optimal' scale $\epsilon$. Although these methods share our goal, they require performing classification on a validation set for selecting the scale parameter. To the best of our knowledge, the only method that estimates the scale without using a validation set was proposed by Campbell {\em et al.} in \cite{campbell1999dynamically}, where for binary classificationit was suggested to use the scale that maximizes the margin between the support vectors. The authors show empirically that their suggested value correlates with peak classification performance on a validation set. 

In this subsection we focus on DM for dimensionality reduction and demonstrate the influence of the scale parameter $\epsilon$ on classification performance in the low-dimensional space. The contribution in this section is threefold:
\begin{itemize}
    \item We use the Davis-Kahan theorem to analyze a perturbed version of ideally separated classes. This allows us to optimize the choice of $\epsilon$ merely based on the eigengap of the perturbed kernel.  
    \item Based on our study in \cite{lindenbaum2015musical}, we present an intuitive geometric metric to evaluate the separation in a multi-class setting. We show empirically that the scale which maximizes the ratio between class separation and the average class spread also optimizes classification performance.
    \item Finally, to reduce the computational complexity involved in the spectral decomposition of the affinity kernel, we present a heuristic that allows to estimate the scale parameter based on the stochastic version of the affinity kernel.
\end{itemize}

Next, we develop tools to estimate a scale parameter based on a given training set. The training set denoted as $\myvec{T}\subset \mathbb{R}^{D\times N}$ consists of $N_C$ classes. The classes are denoted by $\myvec{C}_1,...,\myvec{C}_{N_C}$. In this study we focus on the balanced setting, where the number of samples in each class is $N_P$, thus the total number of data points is $N=N_PN_C$.  We use a scalar scaling factor $\epsilon$. However, the analysis provided in this subsection could be expanded to a vector scaling (namely, to the use of a diagonal PD scaling matrix) in a straightforward way.

\begin{figure}[H]
	\centering
	\includegraphics[scale = 1.5]{./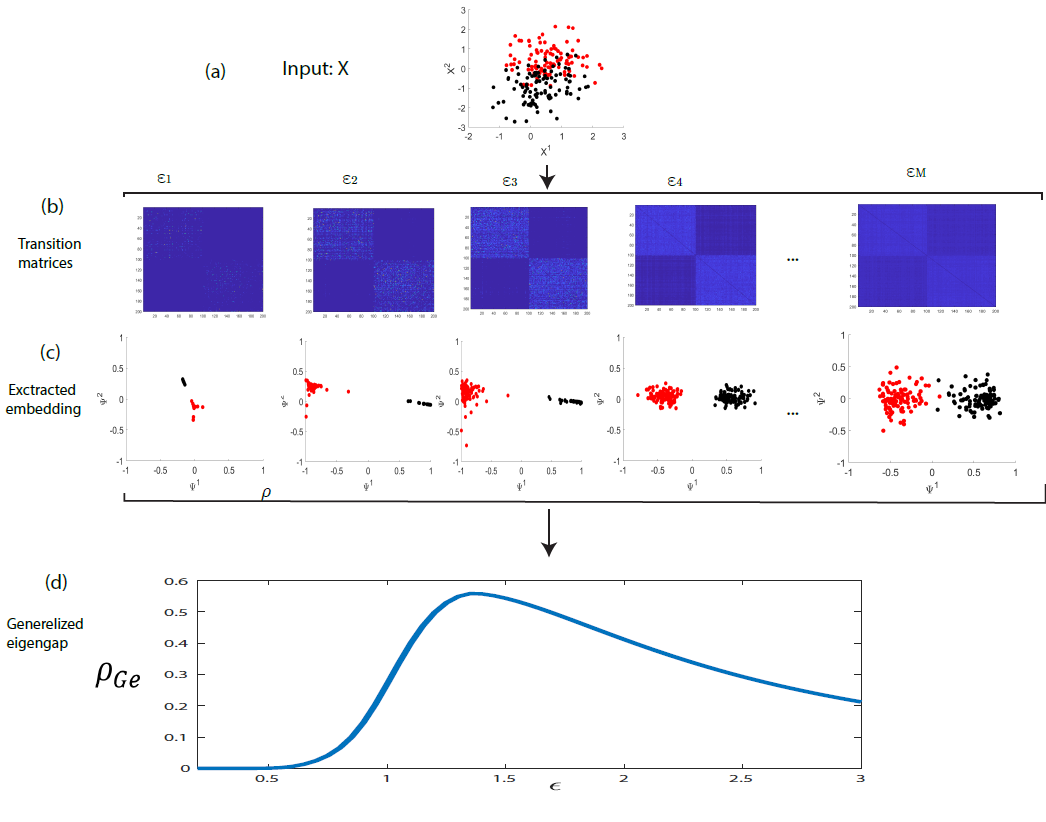} 
	
		\caption{A schematic of the three proposed methods for estimated $\epsilon$ dedicated for classification. Given an input \myvec{X} we use range of hypothesis $\bar{\myvec{\epsilon}}$. (a) The first two dimensions of high-dimensional Gaussian classes. (b) The probabilistic approach- the scale is estimated based on modified transition matrices. (c) The geometric approach- the scale is estimated based on the geometry of the embedding. (d) The spectral approach- the scale is estimated based on a generalized eigengap computed for each scale within the hypothesis range. }
		\label{Cartoon}
\end{figure}

{\subsubsection{The Geometric Approach}}
The following approach for setting $\epsilon$ is based on the geometry of the extracted embedding. The idea is to extract low-dimensional representations for a range of candidate $\epsilon$'s. Then choose $\epsilon$ which maximizes the among-classes to within-class variances ratio. In other words, choose a scale such that the classes are dense and far apart from each other in the resulting low-dimensional embedding space. This is done by maximizing the ratio between the inter-class variance and the sum of  intera-class variances. We have explored a similar approach for an audio based classification task in \cite{lindenbaum2015musical}. This geometric approach is implemented using the following steps:
\begin{enumerate}
\item{Compute DM-based embeddings $\myvec{\Psi}^d_{\epsilon}(\myvec{x}_n),n=1,...,N$, (Eq.\ (\ref{eq:PsiLow})) for various candidate-values of $\epsilon$.}
\item{Denote by \begin{math} {{{\myvec{\mu}}}}_i \end{math} the center of mass for class $\myvec{C}_i,i=1,...,N_C$, and by \begin{math} {{\myvec{\mu}}}_a \end{math} the center of mass for all the data points. All $\myvec{\mu}_i$ and $\myvec{\mu}_a$ are computed in the low-dimensional DM-based embedding $\myvec{\Psi}^d_{\epsilon}(\myvec{x}_n)$ - see step 1.}
\item{For each class $\myvec{C}_i$, the average square distance (in the embedding space) is computed for the $N_P$ data points from the center of mass $\myvec{\mu}_i$ such that
	\begin{equation} D_{c_i}=\frac{1}{N_P}{\sum_{{{\myvec{x}}_n}\in \myvec{C}_i}}{||{{{{\myvec{\Psi}^d_{\epsilon}({{\myvec{x}}_n})}}}}-{{{\myvec{\mu}}}}_i||^2},i=1,...,N_C.  \end{equation}}
\item{The same measure is computed for all data points such that
	\begin{equation} D_a=\frac{1}{N}{\sum_{{{\myvec{x}}_n} \in X}}{||{{{{\myvec{\Psi}^d_{\epsilon}({{\myvec{x}}_n})}}}}-{{{\myvec{\mu}}}}_a||^2} .  \end{equation}}
\item{Define 
	\begin{equation}
	\label{SigOpt}
	{\rho_{\Psi}\defeq \frac{D_a}{\overset{N_C}{\underset{i=1}\sum }{D_{c_i}}}} .
	\end{equation} }
\item{Choose \begin{math} \epsilon \end{math} which maximizes $\rho_{\Psi}$
	\begin{equation}
	\label{eq:rhopsi}
	{\epsilon}_{\rho_{\Psi}}=\arg\max_\epsilon \rho_{\Psi}.
	\end{equation}}
\end{enumerate}
The idea is that \begin{math} {\epsilon}_{\rho_{\Psi}} \end{math} (Eq.\ (\ref{eq:rhopsi})) inherits the inner structure of the classes and neglects the mutual structure. In subsection \ref{sec:ExpClass}, we describe experiments that empirically evaluate the influence of $\epsilon$ on the performance of classification algorithms. We note, however, that this approach requires an eigendecomposition computation for each $\epsilon$, thus, its computational complexity is of order of ${\cal{O}}(N^2 d)$ ($d$ being the number of required eigenvectors).
\subsubsection{The Spectral Approach}
\label{sec:spectral}
In this subsection, we analyze the relation between the spectral properties of the kernel and its corresponding low-dimensional representation. We start the analysis by constructing an ideal training set, with well separated classes. Then, we add a small perturbation to the training set and compute the perturbed affinity matrix $\myvec{K}$. Based on the spectral properties of the perturbed kernel we suggest a scaling $\epsilon$ to capture the essential information for class separation.\\
\begin{description}
\item[The Ideal Case:] 
We begin the discussion by considering an ideal classification setting, in which the $N_C$ classes are assumed to be well-separated in the ambient space $\myvec{X}$ (a similar setting for spectral clustering was described in \cite{ng2002spectral}). The separation is formulated using the following definitions:
\begin{enumerate}
	\item The {\em Euclidean gap} is defined as
	\begin{equation}\label{eq:Dgap}{
		D_{\rm Gap}(\myvec{X})\defeq \min_{\stackrel{\myvec{x}_i\in \myvec{C}_\ell, \myvec{x}_j \in \myvec{C}_m}{\ell,m=1,...,N_C,\;\;\ell\neq m}}||\myvec{x}_i-\myvec{x}_j||^2}.
	\end{equation}
	This is the Euclidean distance between the two closest data points belonging to two different classes.
	\item The {\em maximal class width} is defined as
	\begin{equation}\label{eq:Dclass}{
		D_{\rm Class}(\myvec{X})\defeq\max_{\stackrel{\myvec{x}_i,\myvec{x}_j\in \myvec{C}_\ell}{\ell=1,...,N_C}}||\myvec{x}_i-\myvec{x}_j||^2}. 
	\end{equation} This is the maximal Euclidean distance between two data points belonging to the same class.
\end{enumerate}   We assume that $D_{\rm Class} \ll D_{\rm Gap}$ such that the classes are well separated. Using this assumption and the decaying property of the Gaussian kernel, the matrix $\myvec{K}$ (Eq.\ (\ref{GKernel})) converges to the following block form
\begin{equation} \label{eq:KMAT} {{\bar{\myvec{K}}}}= \begin{bmatrix}
\myvec{K}^{(1)}    &    0    &    . . .    &    0        \\
0    &    \myvec{K}^{(2)}    &    . . .    &    0        \\
:    &    :    &    :    &    :    \\
0    &    0    &    . . .    &    \myvec{K}^{(N_C)}     
\end{bmatrix},\text{ } \bar{\myvec{P}}=\bar{\myvec{D}}^{-1}\bar{\myvec{K}},\text{ } \bar{\myvec{P}}=\begin{bmatrix}
\myvec{P}^{(1)}    &    0    &    . . .    &    0        \\
0    &    \myvec{P}^{(2)}    &    . . .    &    0        \\
:    &    :    &    :    &    :    \\
0    &    0    &    . . .    &    \myvec{P}^{(N_C)}     
\end{bmatrix} , \end{equation} where $\bar{D}_{i,i}=\underset{j}{\sum}{\bar{K}_{i,j}}$. For the ideal case, we further assume that the elements of ${\myvec{K}}^{(i)},i=1,...,N_P$, are non-zeros because $\epsilon \sim D_{\rm Class}$ and the classes are connected.
\begin{prop}
	\label{prop:ideal}
	Assume that  $D_{\rm Class} \ll D_{\rm Gap}$, then, the matrix $\bar{\myvec{P}}$ (Eq.\ (\ref{eq:KMAT})) has an eigenvalue $\lambda=1$ with multiplicity $N_C$. Furthermore, the first $N_C$ coordinates of the DM mapping (Eq.\ (\ref{eq:Psi})) are piecewise constant. The explicit form of the first nontrivial eigenvector $\myvec{\psi}_1$ is given by  \[ \myvec{\psi}_1=[    \underbrace{1,...,1}_{\text{$N_P$ 1's}},\underbrace{0,...,0}_{\text{$N-N_P$ 0's}}]^T/\sqrt{N_P}.\] The eigenvectors $\myvec{\psi}_i,i=2,...,N_C$ have the same structure but cyclically shifted to the right by $(i-1) \cdot N_P$ bins.
\end{prop}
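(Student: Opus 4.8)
The plan is to exploit the block-diagonal structure of $\bar{\myvec{P}}$ (Eq.~\ref{EQKMAT}) and reduce the spectral question to a per-block Perron--Frobenius argument. First I would record the elementary fact that the spectrum of a block-diagonal matrix is the union of the spectra of its blocks, and that each eigenvector of a block lifts to an eigenvector of $\bar{\myvec{P}}$ by padding it with zeros on the remaining coordinates. This localises the entire analysis to a single diagonal block $\myvec{P}^{(i)}=(\myvec{D}^{(i)})^{-1}\myvec{K}^{(i)}$.

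Next I would analyse one block. By construction $\myvec{P}^{(i)}$ is row-stochastic, so $\myvec{P}^{(i)}\myvec{1}=\myvec{1}$ and $\lambda=1$ is an eigenvalue with the constant vector as its right eigenvector. Under the standing assumption that the entries of $\myvec{K}^{(i)}$ are strictly positive, $\myvec{P}^{(i)}$ is a positive (hence irreducible, aperiodic) stochastic matrix, so Perron--Frobenius guarantees that $\lambda=1$ is simple within the block and strictly dominant. Lifting back, each of the $N_C$ blocks contributes exactly one eigenvector with eigenvalue $1$, namely the class-indicator vector $\myvec{e}_i$ that is constant on $\myvec{C}_i$ and zero elsewhere; these $N_C$ indicator vectors are linearly independent, so $\lambda=1$ has multiplicity $N_C$ and its eigenspace is exactly $\spn\{\myvec{e}_1,\dots,\myvec{e}_{N_C}\}$.

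Piecewise constancy then follows for free: every vector in this eigenspace is a linear combination of the $\myvec{e}_i$ and is therefore constant on each class, so the DM coordinates associated with the eigenvalue $1$ (Eq.~\ref{eq:PsiLow}) are piecewise constant. For the explicit form it suffices to select the indicator basis itself: taking $\myvec{\psi}_1=\myvec{e}_1=[1,\dots,1,0,\dots,0]^T$ with $N_P$ ones (using that each class has $N_P$ points in the ordering of Eq.~\ref{EQKMAT}) is a legitimate eigenvector, and the remaining $\myvec{\psi}_i=\myvec{e}_i$ are recovered by shifting the block of ones to the right by $(i-1)N_P$ coordinates.

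The main obstacle is not the computation but the inherent non-uniqueness of the eigenbasis: because $\lambda=1$ is a repeated eigenvalue, any recombination of the $\myvec{e}_i$ (orthogonal in the $\myvec{D}$-weighted inner product in which $\myvec{P}$ is symmetrisable) is equally valid, so the stated shifted-indicator vectors must be presented as one convenient choice spanning the eigenspace rather than as canonically determined eigenvectors. I would also take care to separate the trivial all-ones eigenvector $\myvec{\psi}_0=\myvec{1}$ from the nontrivial $\myvec{\psi}_1$, and to make explicit that the multiplicity of $\lambda=1$ equals the number of connected blocks, i.e.\ $N_C$, so that the proof remains consistent with the block count in Eq.~\ref{EQKMAT}.
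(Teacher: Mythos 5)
Your proof follows essentially the same route as the paper's: both exploit the block-diagonal structure of $\bar{\myvec{P}}$ and the row-stochasticity of each diagonal block $\myvec{P}^{(i)}$ to exhibit the class-indicator vectors, padded with zeros, as right eigenvectors with eigenvalue $\lambda=1$. You go somewhat further than the paper by invoking Perron--Frobenius on each positive block to pin the multiplicity of $\lambda=1$ at exactly $N_C$ (the paper's stated multiplicity $N_P$ reads as a typo for $N_C$) and by noting that the shifted indicator vectors are only one admissible basis of the degenerate eigenspace; these are worthwhile refinements but do not change the underlying argument.
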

\begin{proof}
	Recall that ${\myvec{\bar{P}}=\myvec{\bar{D}}^{-1}\myvec{\bar{K}}}$ (row-stochastic). Due to the special block structure of $\myvec{\bar{K}}$ (Eq.\ (\ref{eq:KMAT})), each block $\myvec{P}^{(i)}, i=1,...,N_P$, is row stochastic. Thus, \[ \myvec{\psi}_i=[ \underbrace{0,...,0}_{\text{$(i-1)\cdot N_P$ 0's}}   \underbrace{1,...,1}_{\text{$N_P$ 1's}},\underbrace{0,...,0}_{\text{$N-i \cdot N_P$ 0's}}]^T/\sqrt{N_P}.\] Each eigenvector $\myvec{\psi}_i,i=1,..,N_p$ consists of a block of $1$-s at the row indices that correspond to $\myvec{P}^{(i)}$ (Eq.\ (\ref{eq:KMAT})), padded with zeros. $\myvec{\psi}_i$ is the right eigenvector (${1\cdot \myvec{\psi}_i=\bar{\myvec{P}}\cdot \myvec{\psi}_i }$), with the eigenvalue $\lambda=1$. We now have an eigenvalue $\lambda=1$ with multiplicity $N_P$ and piecewise constant eigenvectors denoted as $\myvec{\psi}_i,i=1,...,N_P$. 
\end{proof}
Each data point $\myvec{x}_i \in \myvec{C}_\ell,\;\ell=1,...,N_C$, corresponds to a row within the respective sub-matrix ${\mymat{P}}^{(\ell)}$. Therefore, using $\Psi^{N_C}_{\epsilon}(\myvec{T})=[1\cdot \myvec{\psi}_1,...,1\cdot \myvec{\psi}_{N_C}]^T$ as the low-dimensional representation of $\myvec{T}$, all the data points from within a class are mapped to a point in the embedding space.  
\begin{cor}\label{cor1}
	Using the first $N_C$ eigenvectors of $\bar{\myvec{P}}$ (Eq.\ (\ref{eq:KMAT})) as a representation for $\myvec{T}$ such that $ \Psi^{N_C}_{\epsilon}(\myvec{T})=[1\cdot \myvec{\psi}_1,...,1\cdot \myvec{\psi}_{N_C}]^T$ yields that the distances $D_{\rm Gap}(\myvec{\Psi}^{N_C}_{\epsilon})=2\text{ and }D_{\rm Class}(\myvec{\Psi}^{N_C}_{\epsilon})=0$ (defined in Eqs.\ (\ref{eq:Dgap}) and (\ref{eq:Dclass}), respectively).
\end{cor}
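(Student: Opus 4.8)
The plan is to substitute the explicit eigenvectors from Proposition \ref{prop:ideal} into the $N_C$-dimensional mapping and then evaluate $D_{Gap}$ and $D_{Class}$ directly on the resulting images, which turn out to be the standard basis vectors of $\mathbb{R}^{N_C}$.

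First I would observe that all the relevant eigenvalues equal $\lambda=1$, so the scaling factors in the mapping $\myvec{\Psi}^{N_C}_{\epsilon}$ are inert and the embedded coordinate of the $n$-th point reduces to $[\psi_1(n),\ldots,\psi_{N_C}(n)]^T$. By Proposition \ref{prop:ideal}, each $\myvec{\psi}_i$ is the indicator of class $\myvec{C}_i$, i.e. it equals $1$ on the rows belonging to the block $\myvec{P}^{(i)}$ and $0$ elsewhere. Consequently, for any $\myvec{x}_n\in\myvec{C}_l$ we have $\psi_l(n)=1$ and $\psi_i(n)=0$ for $i\neq l$, so that $\myvec{\Psi}^{N_C}_{\epsilon}(\myvec{x}_n)=\myvec{e}_l$, the $l$-th standard basis vector of $\mathbb{R}^{N_C}$. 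Thus the mapping collapses every class onto a distinct vertex of the canonical orthonormal basis.

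Next I would evaluate the two quantities separately. For $D_{Class}$ (Eq. \ref{eq:Dclass}), any two points of a common class $\myvec{C}_l$ are mapped to the same vector $\myvec{e}_l$, so every within-class squared distance is $0$ and the maximum over such pairs is $0$. For $D_{Gap}$ (Eq. \ref{eq:Dgap}), points $\myvec{x}_i\in\myvec{C}_l$ and $\myvec{x}_j\in\myvec{C}_m$ with $l\neq m$ are mapped to $\myvec{e}_l$ and $\myvec{e}_m$, whose squared distance is $\norm{\myvec{e}_l-\myvec{e}_m}^2=2$; since this value is identical for every cross-class pair, the minimum over $l\neq m$ is also $2$.

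I expect no genuine obstacle here: the statement follows immediately once the eigenvectors are made explicit. The only point meriting care is the bookkeeping that the $\lambda=1$ eigenvalue renders the coordinate scaling trivial, and that the chosen basis of block-indicator eigenvectors sends each class to a separate basis vector, so that within-class distances vanish while all cross-class distances simultaneously attain the value $2$.
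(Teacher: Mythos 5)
Your proposal is correct and follows essentially the same route as the paper: both arguments substitute the block-indicator eigenvectors from Proposition \ref{prop:ideal} (with all relevant eigenvalues equal to $1$, so the scaling is trivial) and compute the squared distances coordinate-by-coordinate, obtaining $0$ within a class and $2$ across classes because exactly two coordinates differ by $1$. Your phrasing via standard basis vectors $\myvec{e}_l$ is just a cleaner packaging of the paper's explicit sum.
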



\begin{proof}
	Based on the representation in Proposition \ref{prop:ideal} along with Eq.\ (\ref{eq:Dgap}), we get
	\begin{multline}
	D_{\rm Gap}(\myvec{\Psi}^{N_C}_{\epsilon})=\min_{\stackrel{\myvec{x}_i\in \myvec{C}_\ell, \myvec{x}_j \in \myvec{C}_m}{\ell,m=1,...,N_C,\;\;\ell\neq m}}||\myvec{\Psi}^{N_C}_{\epsilon}(\myvec{x}_i)-\myvec{\Psi}^{N_C}_{\epsilon}(\myvec{x}_j)||^2=\sum^{N_C}_{r=1}\lambda_r\cdot(\psi_r(i)-\psi_r(j))^2\\ =1\cdot(1-0)^2+1\cdot(0-1)^2+\sum^{N_C}_{r=3}1\cdot(0-0)^2=2.
	\end{multline} 
	In a similar manner, we get by Eq.\ (\ref{eq:Dclass}) 
	\begin{multline}
	D_{\rm Class}(\myvec{\Psi}^{N_C}_{\epsilon})=\max_{\stackrel{\myvec{x}_i,\myvec{x}_j \in \myvec{C}_\ell}{\ell=1,...,N_C}}||\myvec{\Psi}^{N_C}_{\epsilon}(\myvec{x}_i)-\myvec{\Psi}^{N_C}_{\epsilon}(\myvec{x}_j)||^2=\sum^{N_C}_{r=1}\lambda_r\cdot(\psi_r(i)-\psi_r(j))^2\\
	=1\cdot(0-0)^2+1\cdot(1-1)^2+\sum^{N_C}_{r=3}1\cdot(0-0)^2=0.
	\end{multline}
\end{proof}
Corollary \ref{cor1} implies that we can compute an efficient representation for the $N_C$ classes. We denote this representation by $\myvec{\bar{\Psi}}_{\epsilon}^{N_C}=[\myvec{\psi}_1,\myvec{\psi}_2,...,\myvec{\psi}_{N_C}]$.\\
\item[The Perturbed Case:] 

In real datasets, we cannot expect that the off block-diagonal elements of the affinity matrix $\myvec{K}$ would be zero. The data points from different classes in real datasets are not completely disconnected, and we can assume they are weakly connected. This low connectivity implies that off-block-diagonal values of $\myvec{K}$ are non-zeros. We analyze this more realistic scenario by assuming that $\mymat{K}$ is a perturbed version of the ``Ideal'' block form of $\myvec{\bar{K}}$.
Perturbation theory addresses the question of how a small change in a matrix relates to a change in its eigenvalues and eigenvectors. 
In the perturbed case, the off-block-diagonal terms are non-zeros and the obtained (perturbed) matrix $\widetilde{\myvec{K}}$ takes the form 
\begin{equation} \label{eq:Pert}
\widetilde{\myvec{K}}=\bar{\myvec{K}}+\widehat{\myvec{W}},
\end{equation} where $\widehat{\myvec{W}}$ is assumed to be a symmetrical small perturbation of the form \begin{equation} \label{EQWMAT} {\widehat{\myvec{W}}}= \begin{bmatrix}
-\myvec{W}^{(1,1)}    &    \myvec{W}^{(1,2)}    &    . . .    &    \myvec{W}^{(1,N_C)}        \\
\myvec{W}^{(2,1)}   &     -\myvec{W}^{(2,2)}   &    . . .    &    \myvec{W}^{(2,N_C)}       \\
:    &    :    &    :    &    :    \\
\myvec{W}^{(N_C,1)}   &    \myvec{W}^{(N_C,2)}    &    . . .    &    - \myvec{W}^{(N_C,N_C)}     
\end{bmatrix}, \myvec{W}^{(\ell,m)}=\myvec{W}^{(m,\ell)},\;\;\;\ell,m=1,...,N_C. \end{equation} 

The analysis of the ``Ideal case'' has provided an efficient representation for classification tasks as described in Proposition \ref{prop:ideal}. We propose to choose the scale parameter $\epsilon$ such that the extracted representation based on ${\bar{\myvec{K}}}$ (Eq.\ (\ref{eq:KMAT})) is similar to the extracted representation using $\tilde{\myvec{K}}$ (Eq.\ (\ref{eq:Pert})). For this purpose we use the following theorem.

\begin{theorem} {(\bf{Davis-Kahan}) \cite{stewart1990matrix}} \label{theo:DK}
	Let $\bar{\myvec{A}}$ and $\widehat{\myvec{B}}$ be Hermitian matrices of the same dimensions, and let $\widetilde{\myvec{A}}\triangleq\bar{\myvec{A}}+\widehat{\myvec{\myvec{B}}}$ be a perturbed version of $\bar{\myvec{A}}$. Set an interval $S$, denote the eigenvalues within $S$ as $\lambda_S(\bar{\myvec{A}})$ and $\lambda_S(\widetilde{\myvec{A}})$ with a corresponding set of eigenvectors $\myvec{\bar{V}}_1$ and $\widetilde{\myvec{V}_1}$ for $\myvec{\bar{A}}$ and $\widetilde{\myvec{A}}$, respectively. Define $\delta$ as 
	\begin{equation}
	\label{eq:DavisKahan}
	\delta\triangleq\min \{ |\lambda(\widetilde{\myvec{A}})-s|; \lambda(\widetilde{\myvec{A}}) \notin S, s\in S\}.
	\end{equation} 
	Then the distance 
	\begin{equation}
	d(\bar{\myvec{V}}_1,\widetilde{\myvec{V}_1})\triangleq\left\|\sin\Theta \left(\bar{\myvec{V}}_1,\widetilde{\myvec{V}_1}\right)\right\|_F
	\leq \frac{1}{\delta} \left\|\widehat{\myvec{B}}\right\|_F,
    \end{equation}
where $\Theta \left(\bar{\myvec{V}}_1,\widetilde{\myvec{V}_1}\right)$ is a diagonal matrix with the principal angles on the diagonal, and $\|\cdot\|_F$ denotes the Frobenius norm.
\end{theorem}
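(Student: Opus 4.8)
This statement is the classical Davis--Kahan $\sin\Theta$ theorem, so the plan is to reproduce the standard Sylvester-equation argument rather than to invent anything new. First I would fix orthonormal bases adapted to the interval $S$: let the columns of $\bar{\myvec{V}}_1$ span the eigenspace of $\bar{\myvec{A}}$ associated with the eigenvalues lying in $S$, and let $\bar{\myvec{V}}_2$ span the orthogonal complement, so that $\bar{\myvec{A}}\bar{\myvec{V}}_1=\bar{\myvec{V}}_1\bar{\myvec{L}}_1$ and $\bar{\myvec{A}}\bar{\myvec{V}}_2=\bar{\myvec{V}}_2\bar{\myvec{L}}_2$, with $\bar{\myvec{L}}_1,\bar{\myvec{L}}_2$ the diagonal matrices of the corresponding eigenvalues. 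I would do the same for the perturbed matrix, writing $\tilde{\myvec{A}}\tilde{\myvec{V}}_1=\tilde{\myvec{V}}_1\tilde{\myvec{L}}_1$ and $\tilde{\myvec{A}}\tilde{\myvec{V}}_2=\tilde{\myvec{V}}_2\tilde{\myvec{L}}_2$, where $\tilde{\myvec{V}}_2$ carries the eigenvalues of $\tilde{\myvec{A}}$ lying \emph{outside} $S$. (Throughout I write $^{T}$ for the adjoint, which is the genuine transpose in the real symmetric application of interest; the complex Hermitian case is identical with conjugate transposes.) The elementary fact I would invoke is that the canonical angles between $\mathrm{range}(\bar{\myvec{V}}_1)$ and $\mathrm{range}(\tilde{\myvec{V}}_1)$ are encoded in the off-diagonal block: the singular values of $\tilde{\myvec{V}}_2^{T}\bar{\myvec{V}}_1$ are exactly the sines of those angles, so $\norm{\sin\Theta(\bar{\myvec{V}}_1,\tilde{\myvec{V}}_1)}_F=\norm{\tilde{\myvec{V}}_2^{T}\bar{\myvec{V}}_1}_F$. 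This reduces the whole theorem to bounding this one Frobenius norm.

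The central step is to derive a Sylvester equation for $\myvec{Y}\defeq\tilde{\myvec{V}}_2^{T}\bar{\myvec{V}}_1$. Starting from $\bar{\myvec{A}}\bar{\myvec{V}}_1=\bar{\myvec{V}}_1\bar{\myvec{L}}_1$, substituting $\bar{\myvec{A}}=\tilde{\myvec{A}}-\widehat{\myvec{B}}$, multiplying on the left by $\tilde{\myvec{V}}_2^{T}$, and using $\tilde{\myvec{V}}_2^{T}\tilde{\myvec{A}}=\tilde{\myvec{L}}_2\tilde{\myvec{V}}_2^{T}$ (Hermiticity), I obtain
\begin{equation}
\tilde{\myvec{L}}_2\myvec{Y}-\myvec{Y}\bar{\myvec{L}}_1=\tilde{\myvec{V}}_2^{T}\widehat{\myvec{B}}\bar{\myvec{V}}_1.
\end{equation}
Because $\tilde{\myvec{L}}_2$ and $\bar{\myvec{L}}_1$ are diagonal, this equation decouples entrywise into $(\tilde{\mu}_i-\bar{\lambda}_j)Y_{ij}=[\tilde{\myvec{V}}_2^{T}\widehat{\myvec{B}}\bar{\myvec{V}}_1]_{ij}$, where $\tilde{\mu}_i$ ranges over the eigenvalues of $\tilde{\myvec{A}}$ outside $S$ and $\bar{\lambda}_j\in S$ over the eigenvalues of $\bar{\myvec{A}}$ inside $S$. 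By the definition of $\delta$ in Eq.~(\ref{eq:DavisKahan}) we have $|\tilde{\mu}_i-\bar{\lambda}_j|\ge\mathrm{dist}(\tilde{\mu}_i,S)\ge\delta$ for every admissible pair, so dividing and summing squares gives $\norm{\myvec{Y}}_F\le\frac{1}{\delta}\norm{\tilde{\myvec{V}}_2^{T}\widehat{\myvec{B}}\bar{\myvec{V}}_1}_F$.

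Finally I would close the estimate using the fact that multiplication by a matrix with orthonormal columns does not increase the Frobenius norm, that is $\norm{\tilde{\myvec{V}}_2^{T}\widehat{\myvec{B}}\bar{\myvec{V}}_1}_F\le\norm{\widehat{\myvec{B}}}_F$. Chaining the three displays yields $\norm{\sin\Theta(\bar{\myvec{V}}_1,\tilde{\myvec{V}}_1)}_F=\norm{\myvec{Y}}_F\le\norm{\widehat{\myvec{B}}}_F/\delta$, which is the claim. The main obstacle is not any single calculation but the two structural facts on which the argument rests: the identification of $\norm{\sin\Theta}_F$ with $\norm{\tilde{\myvec{V}}_2^{T}\bar{\myvec{V}}_1}_F$, which requires the canonical-angle (CS-decomposition) characterization of the relative position of two subspaces, and the verification that the spectral separation controlling the Sylvester operator is precisely the $\delta$ of Eq.~(\ref{eq:DavisKahan}). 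One must be careful here that the decoupled denominators $\tilde{\mu}_i-\bar{\lambda}_j$ pair an eigenvalue of the \emph{perturbed} matrix outside $S$ with a \emph{point} of $S$, matching the stated gap rather than an eigenvalue-to-eigenvalue gap. For a general unitarily invariant norm the entrywise decoupling would have to be replaced by the operator bound involving $\mathrm{sep}(\tilde{\myvec{L}}_2,\bar{\myvec{L}}_1)$, but for the Frobenius norm claimed here the elementary diagonal argument above suffices.
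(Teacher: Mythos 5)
The paper itself gives no proof of this statement: Theorem \ref{theo:DK} is quoted as a known result from Stewart and Sun's \emph{Matrix Perturbation Theory} \cite{stewart1990matrix} and is used purely as a black box in the proof of Theorem \ref{Theorem:32}. So there is no in-paper argument to compare yours against; what you have written is a self-contained proof of the cited classical theorem.

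Your argument is the standard and correct one. The reduction of $\norm{\sin\Theta(\bar{\myvec{V}}_1,\tilde{\myvec{V}}_1)}_F$ to $\norm{\tilde{\myvec{V}}_2^{T}\bar{\myvec{V}}_1}_F$ via the CS decomposition, the Sylvester equation $\tilde{\myvec{L}}_2\myvec{Y}-\myvec{Y}\bar{\myvec{L}}_1=\tilde{\myvec{V}}_2^{T}\widehat{\myvec{B}}\bar{\myvec{V}}_1$ obtained by substituting $\bar{\myvec{A}}=\tilde{\myvec{A}}-\widehat{\myvec{B}}$ and using Hermiticity, the entrywise decoupling with denominators $\tilde{\mu}_i-\bar{\lambda}_j$ bounded below by the $\delta$ of Eq.~(\ref{eq:DavisKahan}) (correctly pairing a perturbed eigenvalue outside $S$ with a point of $S$), and the final contraction $\norm{\tilde{\myvec{V}}_2^{T}\widehat{\myvec{B}}\bar{\myvec{V}}_1}_F\leq\norm{\widehat{\myvec{B}}}_F$ are all sound. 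The only tacit assumption worth flagging is that the two eigenspaces $\bar{\myvec{V}}_1$ and $\tilde{\myvec{V}}_1$ have equal dimension, which is needed for $\sin\Theta$ to be the square diagonal matrix the statement describes; this is implicit in the theorem as stated (and holds in the paper's application, where both spaces have dimension $N_C$), but a complete write-up should say so.
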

In other words, the theorem states that the eigenspace spanned by the perturbed kernel $\widetilde{\mymat{K}}$ is similar, to some extent, to the eigenspace spanned by the ideal kernel $\bar{\mymat{K}}$. The distance between these eigenspaces is bounded by $\frac{1}{\delta}\|\widehat{\myvec{W}}\|_F$. Theorem \ref{Theorem:32} provides a measure which helps to minimize the distance between the ideal representation $\bar{\myvec{\Psi}}^{N_C}_{\epsilon}$ (proposition \ref{prop:ideal}) and the realistic (perturbed) representation $\widetilde{\myvec{\Psi}}^{N_C}_{\epsilon}$. 
\begin{theorem} \label{Theorem:32}
	The distance between $\bar{\myvec{\Psi}}^{N_C}_{\epsilon}\in \mathbb{R}^{N_C}$ and $\widetilde{\myvec{\Psi}}^{N_C}_{\epsilon}\in \mathbb{R}^{N_C}$ in the DM representations based on the matrices $\bar{\myvec{P}}$ and $\widetilde{\myvec{P}}$, respectively, is bounded such that
	\begin{equation}
	d\left(\bar{\myvec{\Psi}}^{N_C}_{\epsilon},\widetilde{\myvec{\Psi}}^{N_C}_{\epsilon}\right) =\left\|\sin\Theta \left(\bar{\myvec{\Psi}}^{N_C}_{\epsilon},\widetilde{\myvec{\Psi}}^{N_C}_{\epsilon}\right)\right\|_F\leq \frac{||\widehat{\myvec{W}}||_F||\bar{\myvec{D}}^{-1/2}||^2_F}{\widetilde{\lambda}_{N_{C}}-\widetilde{\lambda}_{N_{C}+1}},
	\end{equation} where $\widehat{\myvec{W}}$ is the perturbation matrix defined in Eq.\ (\ref{eq:Pert}) and $\myvec{\bar{D}}$ is a diagonal matrix whose elements are the sums of rows ${\bar{D}}_{i,i}=\sum_j{\bar{K}}_{i,j}$.
\end{theorem}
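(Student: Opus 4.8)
The plan is to reduce the statement to a direct application of the Davis--Kahan theorem (Theorem \ref{theo:DK}), which is stated for Hermitian matrices, whereas the diffusion operators $\bar{\myvec{P}}=\bar{\myvec{D}}^{-1}\bar{\myvec{K}}$ and $\tilde{\myvec{P}}=\tilde{\myvec{D}}^{-1}\tilde{\myvec{K}}$ are only row-stochastic and not symmetric. First I would symmetrize: define $\bar{\myvec{P}}_{sym}=\bar{\myvec{D}}^{-1/2}\bar{\myvec{K}}\bar{\myvec{D}}^{-1/2}$ and $\tilde{\myvec{P}}_{sym}=\tilde{\myvec{D}}^{-1/2}\tilde{\myvec{K}}\tilde{\myvec{D}}^{-1/2}$. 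These matrices are symmetric (using that $\widehat{\myvec{W}}$ is symmetric by Eq.~(\ref{EQWMAT})), they share the spectra of $\bar{\myvec{P}}$ and $\tilde{\myvec{P}}$ respectively, and their eigenvectors $\myvec{v}$ relate to the DM eigenvectors by $\myvec{\psi}=\myvec{D}^{-1/2}\myvec{v}$. This places us in the setting of Theorem \ref{theo:DK} with $\bar{\myvec{A}}=\bar{\myvec{P}}_{sym}$ and $\tilde{\myvec{A}}=\tilde{\myvec{P}}_{sym}$.

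Next I would compute the symmetric perturbation $\widehat{\myvec{B}}=\tilde{\myvec{P}}_{sym}-\bar{\myvec{P}}_{sym}$. The sign pattern of $\widehat{\myvec{W}}$ in Eq.~(\ref{EQWMAT})---negative diagonal blocks offsetting the positive off-diagonal blocks---is exactly what makes each row of $\widehat{\myvec{W}}$ sum to zero, so the degree matrix is unchanged, $\tilde{\myvec{D}}=\bar{\myvec{D}}$. Using $\tilde{\myvec{K}}=\bar{\myvec{K}}+\widehat{\myvec{W}}$ (Eq.~(\ref{eq:Pert})) then yields the clean identity $\widehat{\myvec{B}}=\bar{\myvec{D}}^{-1/2}\widehat{\myvec{W}}\bar{\myvec{D}}^{-1/2}$. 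Applying submultiplicativity of the Frobenius norm twice gives $||\widehat{\myvec{B}}||_F\leq ||\bar{\myvec{D}}^{-1/2}||_F\,||\widehat{\myvec{W}}||_F\,||\bar{\myvec{D}}^{-1/2}||_F=||\widehat{\myvec{W}}||_F\,||\bar{\myvec{D}}^{-1/2}||^2_F$, which is precisely the numerator of the claimed bound.

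It then remains to identify the denominator with the Davis--Kahan gap $\delta$. I would take the interval $S$ to enclose the top cluster of $N_C$ eigenvalues, which in the ideal case all equal $1$ by Proposition \ref{prop:ideal} (one stochastic eigenvalue per block). The eigenvalues of $\tilde{\myvec{A}}$ excluded from $S$ are then led by $\tilde{\lambda}_{N_C+1}$, so $\delta=1-\tilde{\lambda}_{N_C+1}$, which matches the denominator of the stated bound. Theorem \ref{theo:DK} now gives $||\sin\Theta(\bar{\myvec{V}}_1,\tilde{\myvec{V}}_1)||_F\leq ||\widehat{\myvec{B}}||_F/\delta$, and combining this with the Frobenius bound above completes the estimate---once the canonical angles of the DM representations $\bar{\myvec{\Psi}}^{N_C}_{\epsilon},\tilde{\myvec{\Psi}}^{N_C}_{\epsilon}$ are identified with those of the symmetric eigenspaces $\bar{\myvec{V}}_1,\tilde{\myvec{V}}_1$.

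I expect that last identification to be the main obstacle. Because $\myvec{\psi}=\bar{\myvec{D}}^{-1/2}\myvec{v}$, the common factor $\bar{\myvec{D}}^{-1/2}$ maps the two orthonormal eigenbases into the DM coordinates, and this map is not an isometry, so in principle it distorts the canonical angles between the two subspaces. The cleanest route is to argue that, since the \emph{same} $\bar{\myvec{D}}^{-1/2}$ acts on both $\bar{\myvec{V}}_1$ and $\tilde{\myvec{V}}_1$ (using $\tilde{\myvec{D}}=\bar{\myvec{D}}$), the subspace distance $d(\bar{\myvec{\Psi}}^{N_C}_{\epsilon},\tilde{\myvec{\Psi}}^{N_C}_{\epsilon})$ is controlled by $||\sin\Theta(\bar{\myvec{V}}_1,\tilde{\myvec{V}}_1)||_F$, with the conditioning of $\bar{\myvec{D}}^{-1/2}$ already absorbed into the $||\bar{\myvec{D}}^{-1/2}||^2_F$ factor produced in the perturbation bound. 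Care is needed to make this identification rigorous rather than merely heuristic, since it is the one place where the non-orthogonality of the diffusion eigenbasis genuinely enters.
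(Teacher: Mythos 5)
Your proposal follows essentially the same route as the paper's proof: symmetrize via $\bar{\myvec{A}}=\bar{\myvec{D}}^{-1/2}\bar{\myvec{K}}\bar{\myvec{D}}^{-1/2}$, use the zero row sums of $\widehat{\myvec{W}}$ to obtain the perturbation $\widehat{\myvec{B}}=\bar{\myvec{D}}^{-1/2}\widehat{\myvec{W}}\bar{\myvec{D}}^{-1/2}$, bound its Frobenius norm by submultiplicativity, and apply Davis--Kahan with the gap $\delta=1-\tilde{\lambda}_{N_C+1}$. The one step you rightly flag as delicate---identifying the canonical angles of the DM representations with those of the orthonormal eigenspaces $\bar{\myvec{V}}_1,\tilde{\myvec{V}}_1$ despite $\bar{\myvec{D}}^{-1/2}$ not being an isometry---is simply asserted in the paper (via $\bar{\myvec{\Psi}}=\bar{\myvec{D}}^{-1/2}\bar{\myvec{V}}$ and the claim $d(\bar{\myvec{V}}_1,\tilde{\myvec{V}}_1)=d(\bar{\myvec{\Psi}}^{N_C}_{\epsilon},\tilde{\myvec{\Psi}}^{N_C}_{\epsilon})$) without further justification, so your attempt is at least as careful as the original on this point.
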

\begin{proof}
	Define $\bar{\myvec{A}}\defeq\myvec{\bar{D}}^{-1/2}\myvec{\bar{K}}\myvec{\bar{D}}^{-1/2}=\myvec{\bar{D}}^{1/2}\myvec{\bar{P}}\myvec{\bar{D}}^{-1/2}$. Based on Eq.\ \ref{eq:Pert}, we have 
	\begin{equation} \label{eq:PertB}
	\widetilde{\myvec{A}}=\bar{\myvec{A}}+\myvec{\bar{D}}^{-1/2}\widehat{\myvec{W}}\myvec{\bar{D}}^{-1/2}.
	\end{equation} 
	We are now ready to use Theorem \ref{theo:DK}. Assume that the eigenvalues $\{\bar{\lambda}_i\}$ of $\bar{\mymat{A}}$ and $\{\widetilde{\lambda}_i\}$ of $\widetilde{\mymat{A}}$ are ordered in descending order, and set $S=[{\lambda}(\widetilde{\myvec{A}})_{N_C},1]$, denoting the first $N_C$ eigenvectors of $\bar{\myvec{A}}$ and of $\widetilde{\myvec{A}}$ as $\bar{\myvec{V}}_1$ and $\widetilde{\myvec{V}}_1$, respectively. Obviously, by construction we have $\widetilde{\lambda}_i\in S$, $i=1,...,N_C$. Based on the analysis of the ``ideal'' matrix $\bar{\myvec{P}}$, we know that its first $N_C$ eigenvalues are equal to 1. Noting that $\bar{\mymat{A}}$ is algebraically similar to $\bar{\mymat{P}}$, they have the same eigenvalues, implying that $\bar{\lambda}_i\in S$, $i=1...,N_C$, as well. Using the definition of $\delta$ from Eq.\ (\ref{eq:DavisKahan}), we conclude that $\delta=\widetilde{\lambda}_{N_{C}}-\widetilde{\lambda}_{N_{C}+1}$. Setting $\bar{\myvec{A}}\defeq\myvec{\bar{D}}^{-1/2}\myvec{\bar{K}}\myvec{\bar{D}}^{-1/2}$ and $\myvec{\widehat{B}}=\myvec{\bar{D}}^{-1/2}\widehat{\myvec{W}}\myvec{\bar{D}}^{-1/2}$, the Davis-Kahan Theorem \ref{theo:DK} asserts that the distance between the eigenspaces $\myvec{\bar{V}}_1$ and $\myvec{\widetilde{V}}_1$ is bounded such that 
		\begin{equation}
	d\left(\bar{\myvec{\Psi}}^{N_C}_{\epsilon},\widetilde{\myvec{\Psi}}^{N_C}_{\epsilon}\right) =\left\|\sin\Theta \left(\bar{\myvec{\Psi}}^{N_C}_{\epsilon},\widetilde{\myvec{\Psi}}^{N_C}_{\epsilon}\right)\right\|_F\leq \frac{||\widehat{\myvec{W}}||_F||\bar{\myvec{D}}^{-1/2}||^2_F}{\widetilde{\lambda}_{N_{C}}-\widetilde{\lambda}_{N_{C}+1}},
	\end{equation}
	The eigen-decomposition of $\myvec{\bar{A}}$ is written as $\myvec{\bar{A}}=\myvec{\bar{V} \bar{\Sigma} \bar{V}}^T$. Note that $\myvec{\bar{P}}=\myvec{\bar{D}}^{-1/2}\myvec{\bar{A}}\myvec{\bar{D}}^{1/2}$ which means that the eigen-decomposition of $\myvec{\bar{P}}$ could be written as $\myvec{\bar{P}}=\myvec{\bar{D}}^{-1/2}\myvec{\bar{V} \bar{\Sigma} \bar{V}}^T\myvec{\bar{D}}^{1/2}$ and the right eigenvectors of $\myvec{\bar{P}}$ are $\myvec{\bar{\Psi}}=\myvec{\bar{D}}^{-1/2}\myvec{\bar{V}}$. Using the same argument for $\myvec{\widetilde{A}}$ and choosing the eigenspaces using the first $N_C$ eigenvectors, we get that decreasing the term $d(\bar{\myvec{V}}_1,\widetilde{\myvec{V}}_1)$ also decreases $d(\bar{\myvec{\Psi}}^{N_C}_{\epsilon},\widetilde{\myvec{\Psi}}^{N_C}_{\epsilon})$.
\end{proof}
\begin{asump}
	\label{assump1}
	The perturbation matrix $\widehat{\myvec{W}}$ (Eq.\ (\ref{eq:Pert})) changes only slightly over the range of values of $\epsilon\in(D_{\rm Class},D_{\rm Gap})$.\\
	{\bf{Explanation:}} For two data points $\myvec{x}_i,\myvec{x}_j$ from different classes $\myvec{x}_i\in C_\ell$, $\myvec{x}_j\in C_m,\ell\neq m$ and $\epsilon \sim D_{\rm Class}$, the values of  $\widehat{\myvec{W}}_{i,j} \ll 1$. The decaying property of the Gaussian kernel provides a range of values for $\epsilon\sim D_{\rm Class}$ such that the perturbation matrix $\widehat{\myvec{W}}$ is indeed small.
	In subsection \ref{sec:ExpClass} below we evaluate this assumption using a mixture of Gaussians.
\end{asump}
\begin{cor}
	\label{cor}
	Given $N_C$ classes under the perturbation assumption and assumption \ref{assump1}, the generalized eigengap is defined as $Ge=|\widetilde{\lambda}_{N_C}-\widetilde{\lambda}_{N_C+1}|$. The scale parameter $\epsilon$, which maximizes $Ge$ 
	\begin{equation} \label{eq:GE} \epsilon_{Ge}=\arg\max_\epsilon(Ge)=\arg\max_\epsilon(\widetilde{\lambda}_{N_C}-\widetilde{\lambda}_{N_C+1}) \end{equation} provides the best class separation using an $N_C$ coordinates embedding ($\widetilde{\myvec{\Psi}}^{N_C}_{\epsilon}$).
\end{cor}

\end{description}
This approach also requires computing an eigendecomposition for each $\epsilon$ value, thus, its computational complexity is of the order of ${\cal{O}}(N^2 N_C)$.

{\subsubsection{The Probabilistic Approach}}

We introduce here notations from graph theory to compute a measure of the class separation based on the stochastic matrix ${\myvec{P}}$ (Eq.\ (\ref{eq:DefP})). Based on the values of the matrix ${\myvec{P}}$, a {\em Cut} \cite{shi2000normalized} is defined for any two subsets $\myvec{A},\myvec{B} \subset \myvec{T}$
\begin{equation}
{\rm cut}(\myvec{A},\myvec{B})=\underset{\myvec{x}_i\in \myvec{A},\myvec{x}_j\in \myvec{B}}{\sum}P_{i,j}.
\end{equation}
Given $N_C$ classes $\myvec{C}_1,\myvec{C}_2,\myvec{C}_3,...,\myvec{C}_{N_C}\subset \myvec{T}$, we define the {\em Classification Cut} by the following measure
\begin{equation}
{\rm Ccut}(\myvec{C}_1,...,\myvec{C}_{N_C})=\frac{\sum_{\ell=1}^{N_C}{\rm cut}(\myvec{C}_\ell,{\myvec{C}_\ell})}{N}.
\end{equation}
In clustering problems, a partition is searched such that the normalized version of the cut is minimized \cite{dhillon2004kernel,ding2005equivalence}. We use this intuition for a more relaxed classification problem.

We first define a {\em Generalized cut} using the following matrix
\begin{equation}
\widehat{P}_{i,j}\defeq\begin{cases}
{P}_{i,j},& \text{if } i\neq j\\
0,              & \text{otherwise}
\end{cases}.
\end{equation}
Based on $\widehat{\mymat{P}}$ (which carries the dependence on $\epsilon$), the Generalized cut is then defined as 
\begin{equation}
    {\rm Gcut}(\myvec{A},\myvec{B})\defeq\underset{\myvec{x}_i\in \myvec{A},\myvec{x}_j\in \myvec{B}}{\sum}\widehat{P}_{i,j}.
\end{equation} 
The idea is to remove the probability of ``staying'' at a specific node from the within-class transition probability.
Now let
\begin{equation}
\rho_{P}\defeq {\rm GCcut}(\myvec{C}_1,...,\myvec{C}_{N_C})\defeq\frac{1}{N}\sum_{\ell=1}^{N_C}{\rm Gcut}(\myvec{C}_\ell,{\myvec{C}_\ell}).
\end{equation} We search for $\epsilon$ which maximizes $\rho_P$, namely
\begin{equation}
{\epsilon}_{\rho_P}=\arg\max_\epsilon(\rho_P).
\label{eq:rhop}
\end{equation} 

By the stochastic model, the implied probability of transition between point $\myvec{x}_i $ and point $\myvec{x}_j$ is equal to ${{p}(\myvec{x}_i,\myvec{x}_j)}=P_{i,j}$, therefore by maximizing $\rho_P$, the sum of within-class transition probabilities is maximized. Based on the definition of the diffusion distance (Eq.\ \ref{EqDist}), this implies that the within-class diffusion distance would be small, followed by a small Euclidean distance in the DM space. 
The heuristic approach entailed in Eq.\ (\ref{eq:rhop}) provides yet another criterion for setting a scale parameter which captures the geometry of the given classes. This approach does not require computing an eigendecomposition for each candidate $\epsilon$ value, thus, its computational complexity is of order of ${\cal{O}}(N^2)$.
\\
\section{Experimental Results}
\label{sec:Exp}
In this section we provide some experimental results, showing and comparing the different approaches (outlined in the previous section) in their respective contexts. We begin by demonstrating our proposed manifold-based scaling in subsection \ref{sec:ExpMani}, and then demonstrate the classification-based scaling approaches in subsection \ref{sec:ExpClass}.
\subsection{Manifold Learning}
\label{sec:ExpMani}
In this subsection we evaluate the performance of the proposed manifold-based approach by embedding a low-dimensional manifold which lies in a high-dimensional space. We consider two datasets: A synthetic set and a set based on an image taken from the MNIST database.
\subsubsection{A synthetic dataset}
The first experiment is constructed by projecting a $3$-dimensional synthetic manifold into a high-dimensional space, then concatenating it with Gaussian noise. Data generation is done according to the following steps: \begin{itemize}
\item First, a 3-dimensional Swiss Roll is constructed based on the following function \begin{equation}{
	\label{EQSwissYML}
	\myvec{y}_i=
	\begin{bmatrix}
	{y_i}(1)\\
	{y_i}(2)\\
	{y_i}(3)\\	
	\end{bmatrix}
	=
	\begin{bmatrix}
	{6\theta_i\cos(\theta_i)}\\
	{h_i}\\
	{6\theta_i\sin(\theta_i)}\\
	
	\end{bmatrix},i=1,...,N,
}
\end{equation} where $\theta_i,h_i$ ($i=1,...,N$), are drawn from Uniform distributions within the intervals $[\frac{3\cdot \pi}{2},\frac{9\cdot \pi}{2}], [0,100]$, respectively. In our experiment we chose $N=2000$.
\item We project the Swiss roll into a high-dimensional space by multiplying the data by a random matrix ${\myvec{N}_T} \in {{R}}^{D_1\times 3}, D_1>3$. The elements of $\myvec{N}_T$ are drawn from a Gaussian distribution with zero mean and variance of $\sigma^2_T$. 
\item Finally, we augment the projected Swiss Roll with a vector of Gaussian noise, obtaining
\begin{equation}
\label{EQSwissXML}
\myvec{x}_i=
\begin{bmatrix}
{\myvec{N}_T \cdot \myvec{y}}_i\\
\myvec{n}^1_i
\end{bmatrix},\;\;\;i=1,...,N,
\end{equation}  
where each component of $\myvec{n}^1_i \in \mathbb{R}^{D_2},i=1,...,2000,$ is an independent Gaussian variable with zero mean and variance of $\sigma^2_N$. 
\end{itemize}
We define the datasets $\mymat{Y}=[\myvec{y}_1,...,\myvec{y}_N]\in\mathbb{R}^{3\times N}$ and $\mymat{X}=[\myvec{x}_1,...,\myvec{x}_N]\in\mathbb{R}^{(D_1+D_2)\times N}$

\begin{figure}[H]
	\includegraphics[scale = 0.25]{./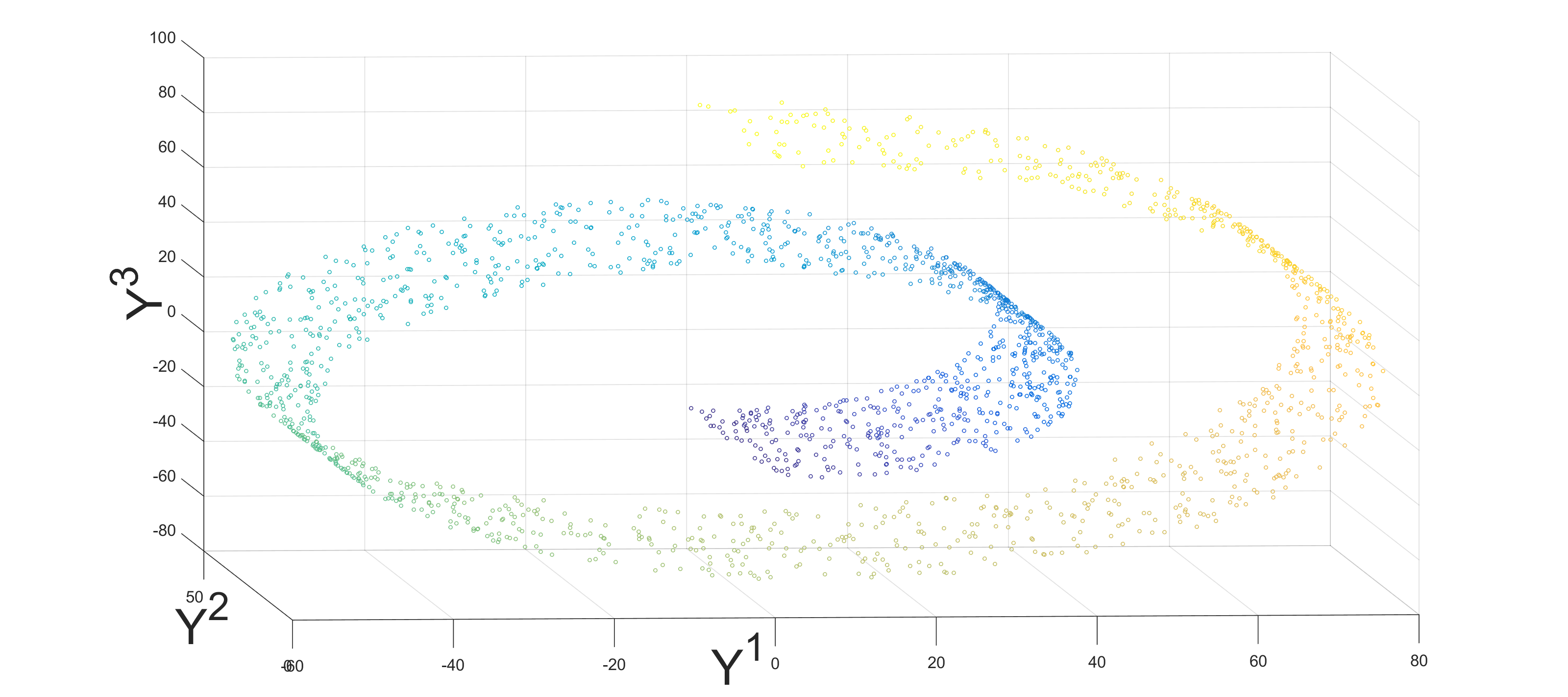} 
		\includegraphics[scale = 0.25]{./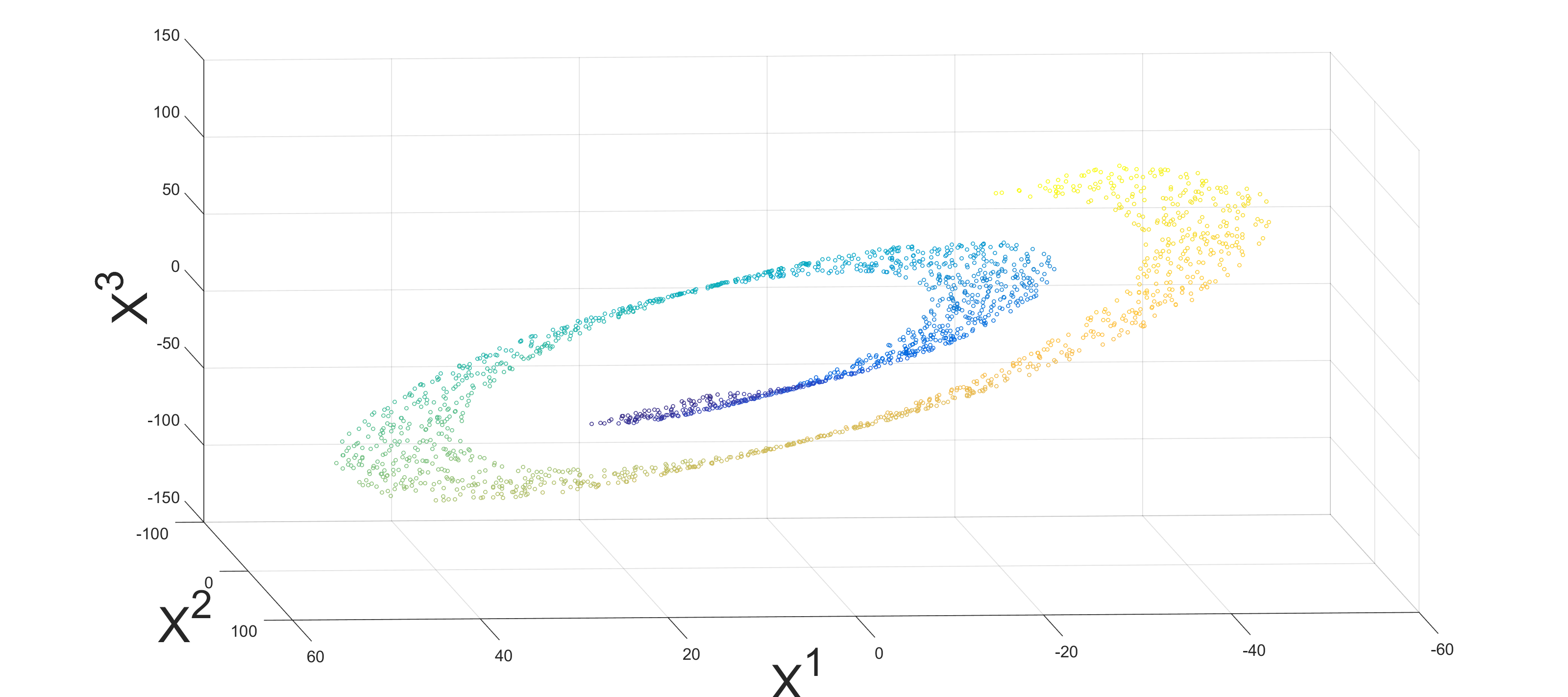} 
		\caption{Left: ``clean'' Swiss Roll ($\myvec{Y}$ in Eq.\ (\ref{EQSwissYML})). Right: 3 coordinates of the projected Swiss Roll ($\myvec{X}$ in Eq.\ (\ref{EQSwissXML})). Both figures are colored by the value of the underlying parameter $\theta_i,i=1,...,2000$ (Eq.\ (\ref{EQSwissYML})).}
		\label{SwissXY}
\end{figure}	
To evaluate the proposed framework, we apply Algorithm \ref{alg:Singer} followed by Algorithm \ref{alg:solv_global}, and extract a low-dimensional embedding. 
\begin{figure}[H]

	\includegraphics[scale = 0.25]{./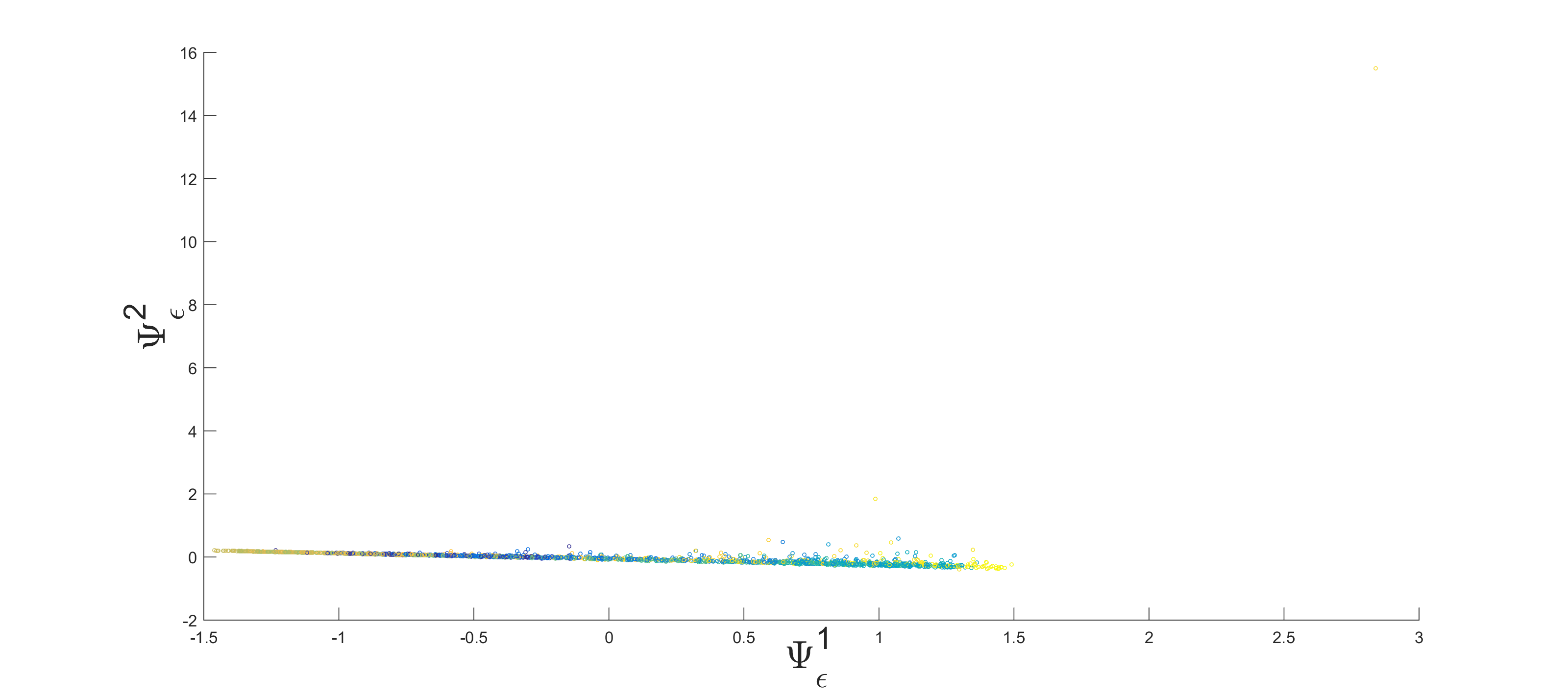}
	\includegraphics[scale = 0.25]{./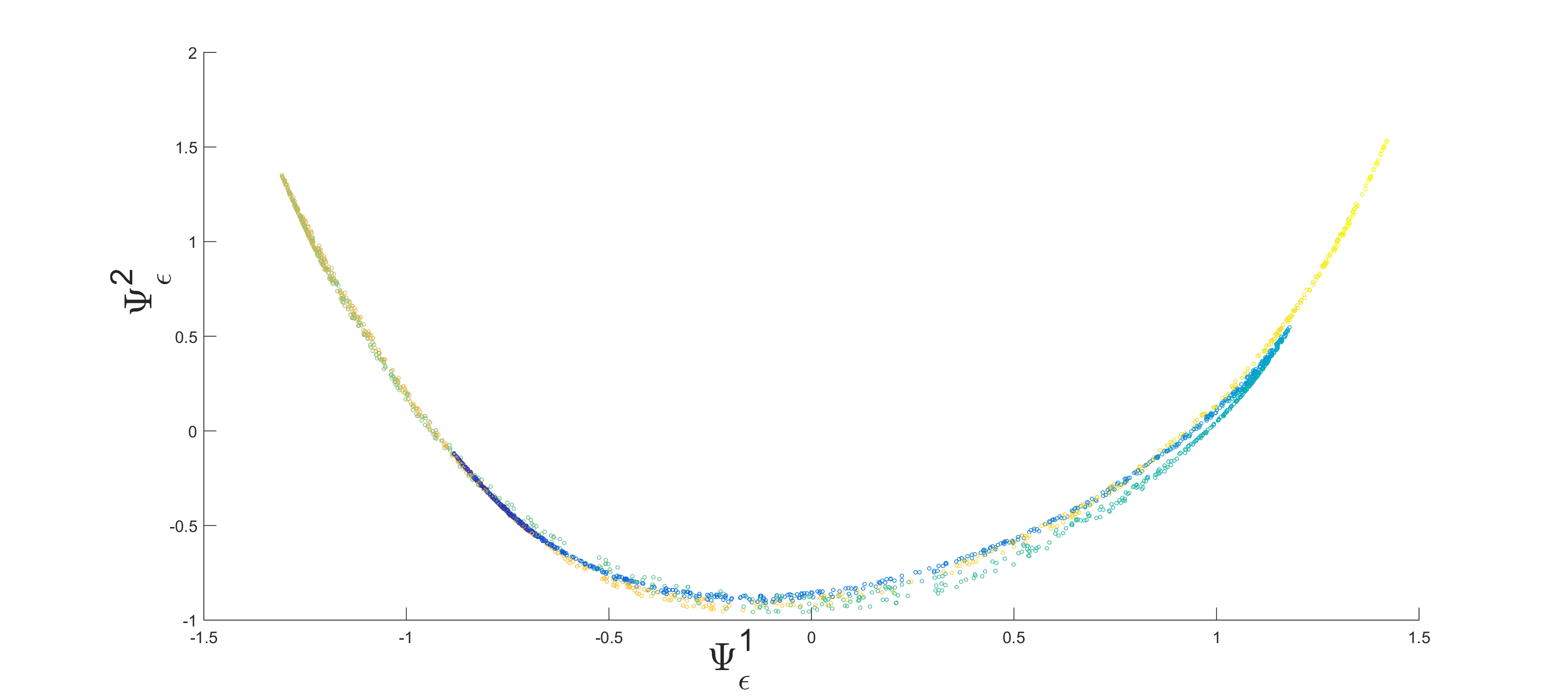}
	\includegraphics[scale = 0.25]{./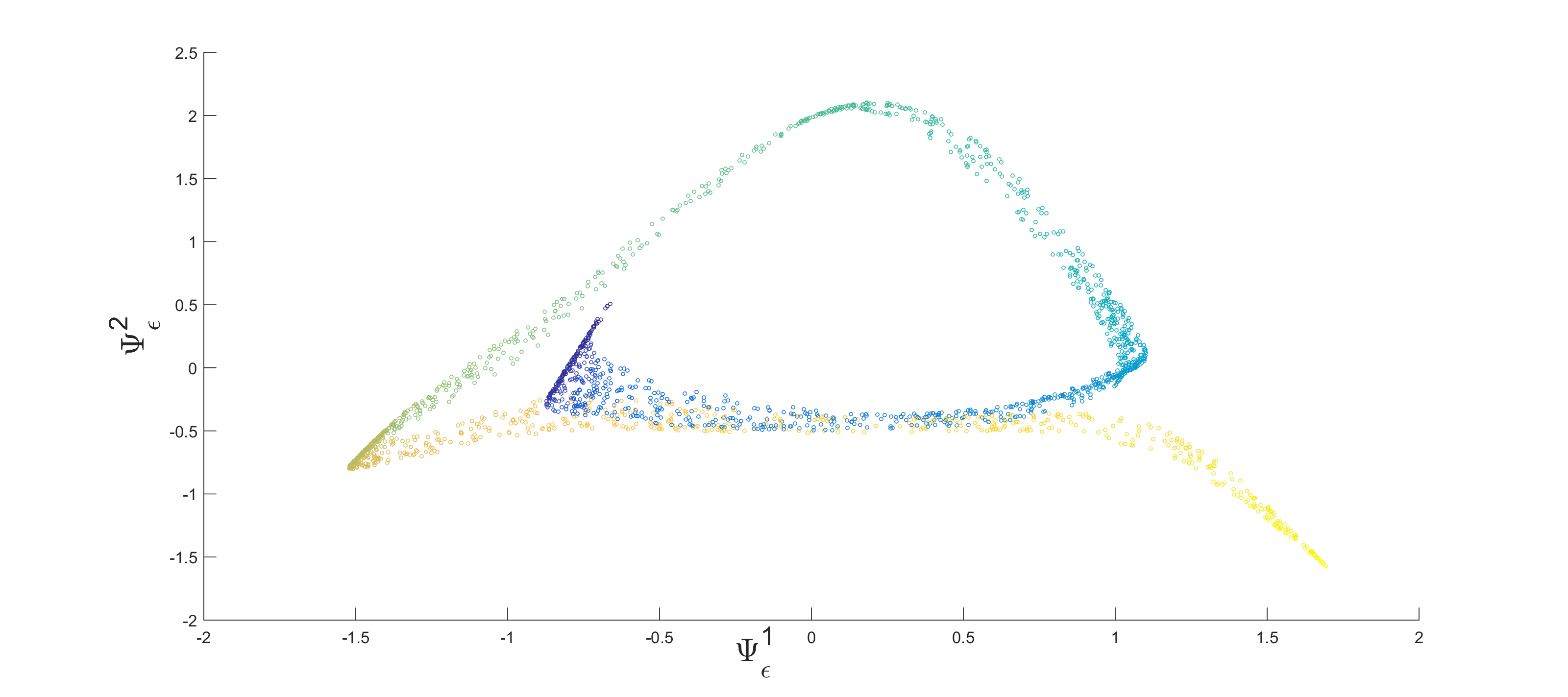}
		\includegraphics[scale = 0.25]{./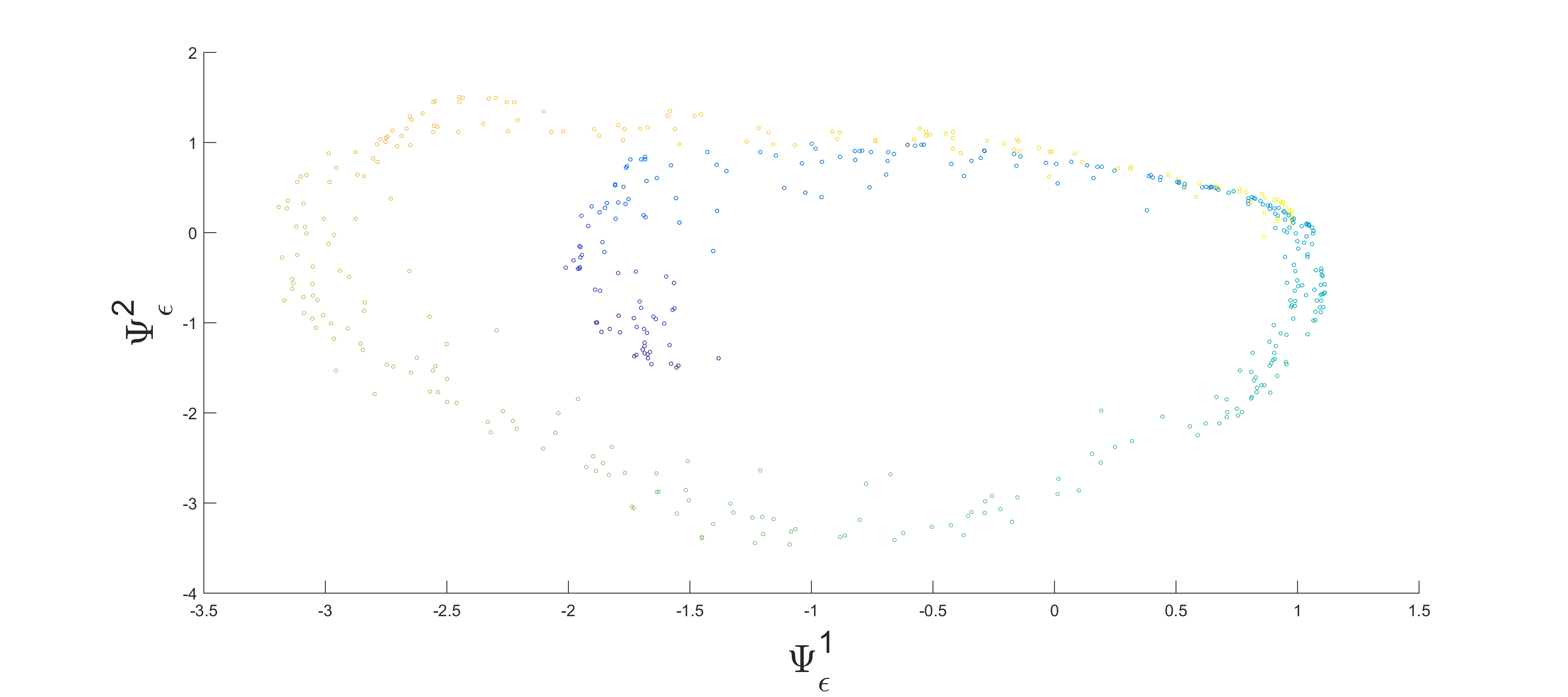}
		\includegraphics[scale = 0.25]{./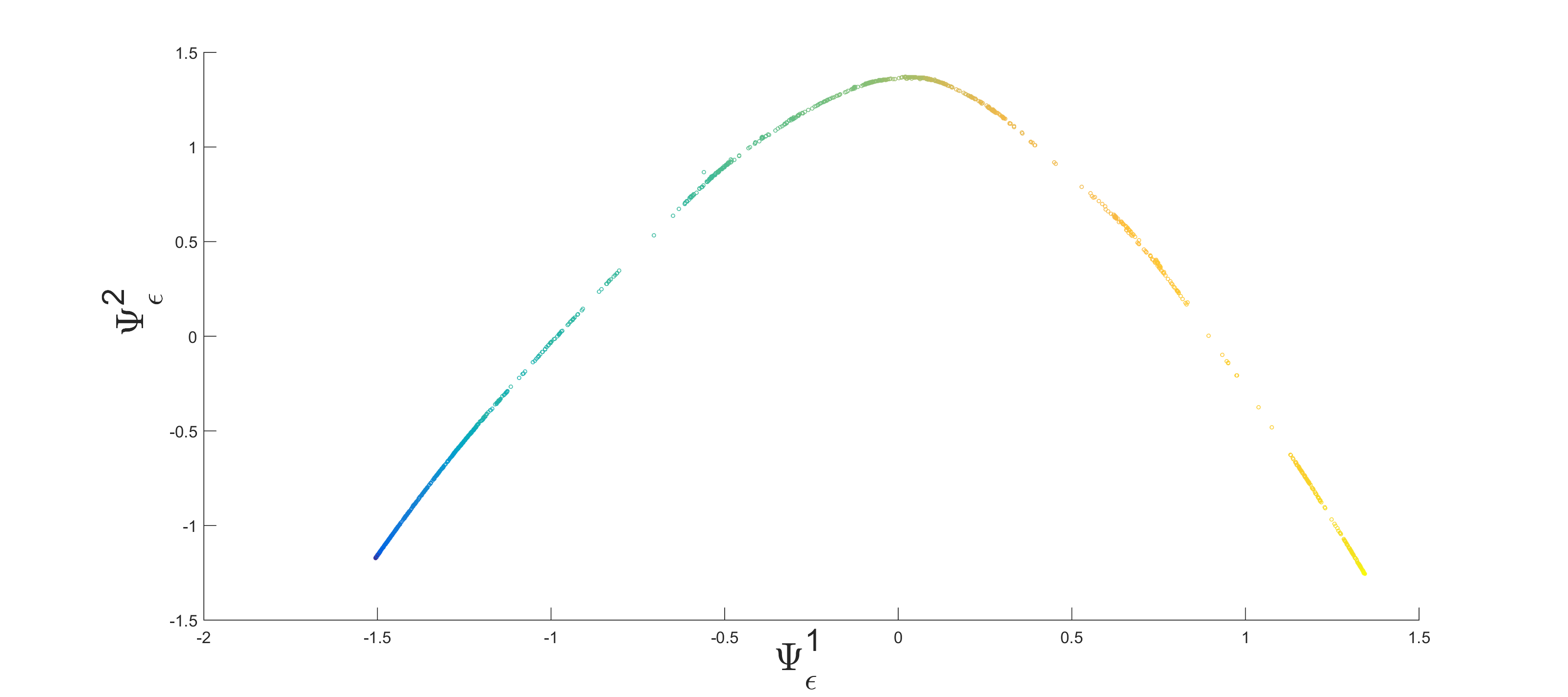}
		\includegraphics[scale = 0.25]{./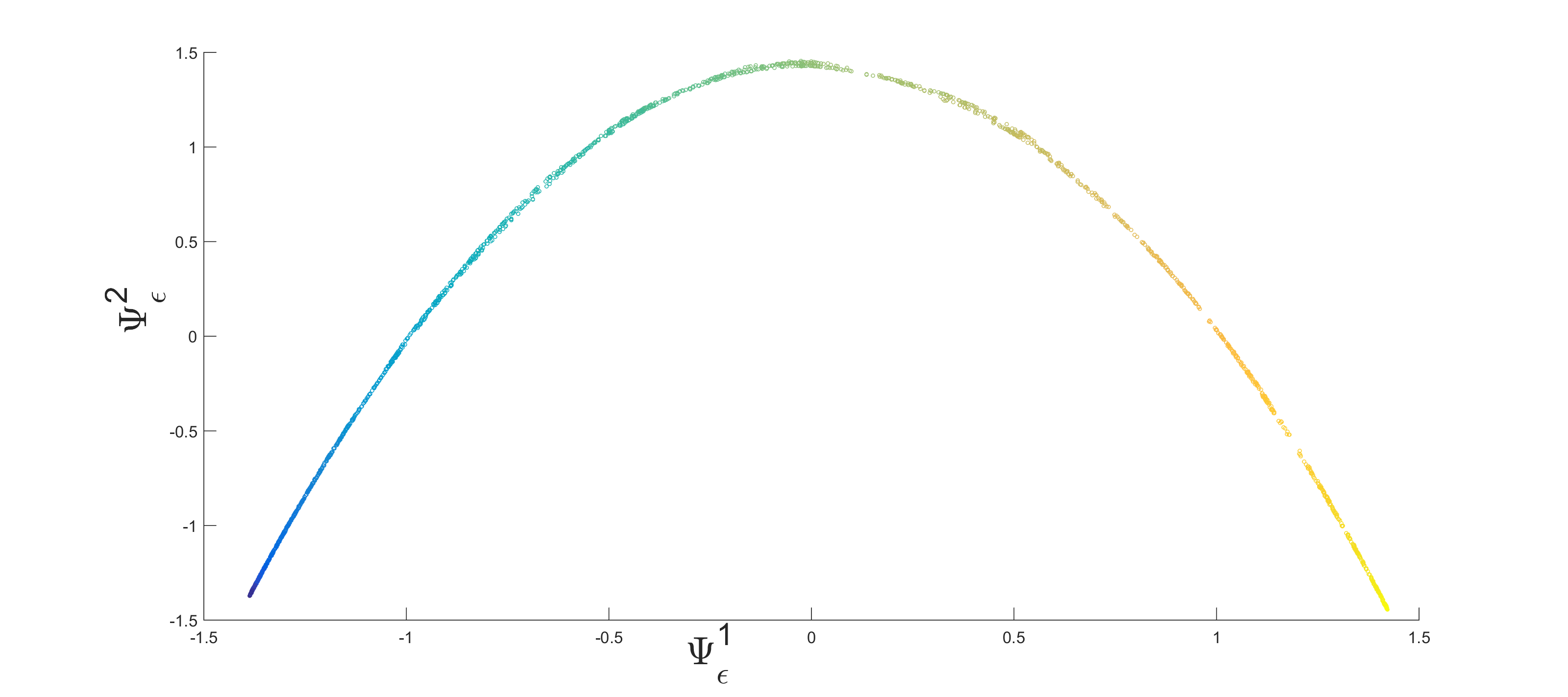}
	\caption{Extracted DM-based embedding of the ``noisy'' Swiss roll using different methods for choosing the scale parameter $\epsilon$. Top left: standard deviation scalings, the matrix $A \text{ and scaling }\epsilon_{std}$ are computed by Eq.\ (\ref{eq:StdN}). Top right: the $\epsilon_0$ scaling, the calculation of $\epsilon_0$ is described in Alg. (\ref{alg:Singer}) and \cite{Singer}. Mid left: the MaxMin scaling, the value $\epsilon_{\text{MaxMin}}$ is defined by Eq.\ (\ref{eq:MaxMin}). Mid right: k-NN based scaling \cite{zelnik}. Bottom left: the proposed scaling $\hat{\epsilon},\hat{A}$  which is described in Alg. (\ref{alg:solv_global}). Bottom right: scaling based on $\epsilon_0$ as described in Algorithm (\ref{alg:Singer}) and \cite{Singer} applied to the clean Swiss roll $Y$ that is defined by Eq.\ (\ref{EQSwissYML}).}	
	\label{fig:Swiss}
\end{figure}
Different high-dimensional datasets $\myvec{X}$ were generated using various values of $\sigma_{N}$, $\sigma_{N_T}$, $D_1$ and $D_2$. DM is applied to each $\myvec{X}$ using:
\begin{itemize}
	\item The standard deviation normalization as defined in Eq.\ (\ref{eq:StdN}).
	\item The $\epsilon_0$ scale, which is described in \ref{alg:Singer} and in \cite{Singer}.
	\item The MaxMin scale, as defined in Eq.\ (\ref{eq:MaxMin}) and in \cite{Keller}.
	\item The k-NN based scaling \cite{zelnik}.
	\item The proposed scale parameters $\mymat{A},\epsilon$, obtained using Algorithm \ref{alg:solv_global}. 
\end{itemize}  
The extracted embedding is compared to the embedding extracted from the clean Swiss roll $\myvec{Y}$ defined in Eq.\ (\ref{EQSwissYML}). 
\begin{figure}[H]
	\centering
	\includegraphics[scale = 0.4]{./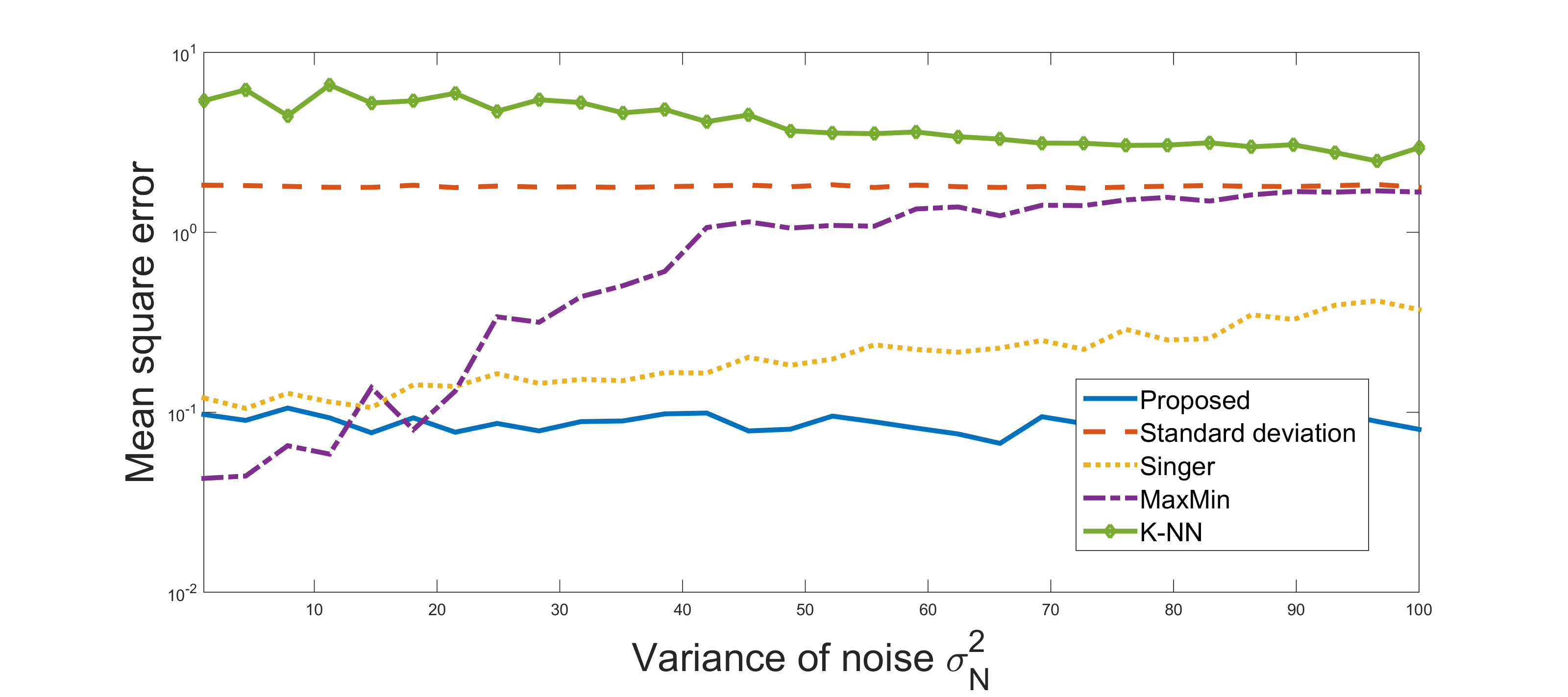}
	\caption{The mean square error of the extracted embedding. A comparison between the proposed normalization and alternative methods which are detailed in Section \ref{sec:EpsEst}.}
		\label{fig:MSE1}
\end{figure}
Each embedding is computed using an eigendecomposition, therefore, the embedding's coordinates could be the same up to scaling and rotation. To overcome this ambiguity, we search for an optimal translation and rotation matrix of the following form
\begin{equation}
\bar{\myvec{\Psi}}_{\epsilon}(\myvec{X})=\myvec{R} \cdot \myvec{\Psi}_{\epsilon}(\myvec{X})+\myvec{T},
\end{equation} where $\myvec{R}$ is the rotation matrix and $\myvec{T}$ is the translation matrix, which minimizes the mis-match error
\begin{equation}
\text{err}=||\bar{\myvec{\Psi}}_{\epsilon}(\myvec{X})-\myvec{\Psi}_{\epsilon}(\myvec{Y})||_F^2,
\end{equation} 
defined as the sum of square distances between values of the clean mapping $\myvec{\Psi}_{\epsilon}(\myvec{Y})$ and the ``aligned'' mapping $\bar{\myvec{\Psi}}_{\epsilon}(\myvec{X})$.
We repeat the experiment 40 times and compute the empirical Mean Square error in the embedding space defined as
\begin{equation}
\text{MSE}=\frac{1}{N}\sum_{i=1}^{N}(\myvec{\Psi}_{\epsilon}(\myvec{y}_i)-\bar{\myvec{\Psi}}_{\epsilon}(\myvec{x}_i))^2.
\end{equation}
An example of the extracted embedding based on all the different methods is presented in Fig.\ \ref{fig:Swiss}, followed by the MSE in Fig.\ \ref{fig:MSE1}. It is evident that Algorithm \ref{alg:solv_global} is able to extract a more precise embedding than the alternative scaling schemes. The strength of Algorithm \ref{alg:solv_global} is that it emphasizes the coordinates which are essential for the embedding and neglects the coordinates which were contaminated by noise.

\subsubsection{MNIST Manifold}
In the following experiment, we create an artificial low-dimensional manifold by rotating a handwritten image of a digit. First, we rotate the handwritten digit `6' from MNIST dataset by $N=320$ angles that are uniformly sampled over $[0,2\pi]$. Next, we add random zero-mean Gaussian noise with a variance of $\sigma_N^2$ independently to each pixel. An example of the original and noisy version of the handwritten `6' are shown in Fig.\ \ref{fig:mnist}. Note that the values of the original image are in the range $[0,1]$. In order to capture the circular structure of the manifold we apply DM to the rotated images. An example of the expected circular structure extracted by DM is depicted in Fig.\ \ref{fig:mnist}. 

We apply the different scaling schemes to the noisy images and extract a 2-dimensional DM-based embedding. For the vectorized scaling schemes (proposed and standard deviation approach) we apply the scalings to the top 50 principle components. This allows us to reduce the computational complexity, as the dimension of the feature space is reduced from $784$ to $50$.

To evaluate the performance of the different scaling schemes, we propose the following metric to compare the extracted embedding to a perfect circle. Given a $2$-dimensional representation $\myvec{\Psi}_{\epsilon}(\myvec{X})$, we use a polar transformation to evaluate the implied radius at each point. The squared radius is defined by $r^2_i=(\myvec{\Psi}^1_{\epsilon}(\myvec{x}_i))^2+(\myvec{\Psi}^2_{\epsilon}(\myvec{x}_i))^2$. Next, we normalize the radius values by their empirical mean, such that $\hat{r}_i=\frac{r_i}{\mu_r}$, and $\mu_r$ is the empirical mean of the radii $\mu_r=\sum_i\frac{r_i}{N}$. Finally, we compute the empirical variance of the normalized radius $\hat{r}$. Explicitly, this value is computed by
$ \sigma^2_{\hat{r}}= \frac{\sum_i(\hat{r}_i -1)^2}{N}  $. A scatter plot of $\sigma^2_{\hat{r}}$ vs.\ the variance of the additive noise $\sigma_N^2$ is presented in Fig.\ \ref{fig:mnist_error}. As evident in this figure, up to a certain variance of the noise, the proposed scaling scheme suppresses the noise and captures the correct circular structure of the data. At some level of noise our method breaks. It seems that the standard deviation and Singer's approach also break at a similar noise level. An explanation for this phenomenon could be that at the lower SNRs all these methods start to ``amplify'' the noise, rather than the signal.
	\begin{figure}[H]
		\centering
		\includegraphics[scale = 0.26]{./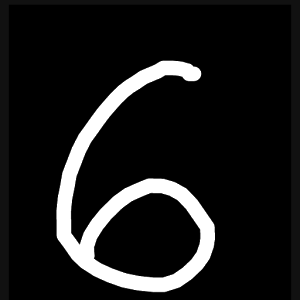}
		\hspace{5cm}
		\includegraphics[scale = 0.26]{./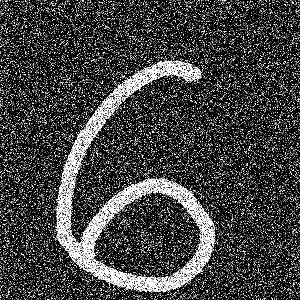}
		
		\includegraphics[scale = 0.15]{./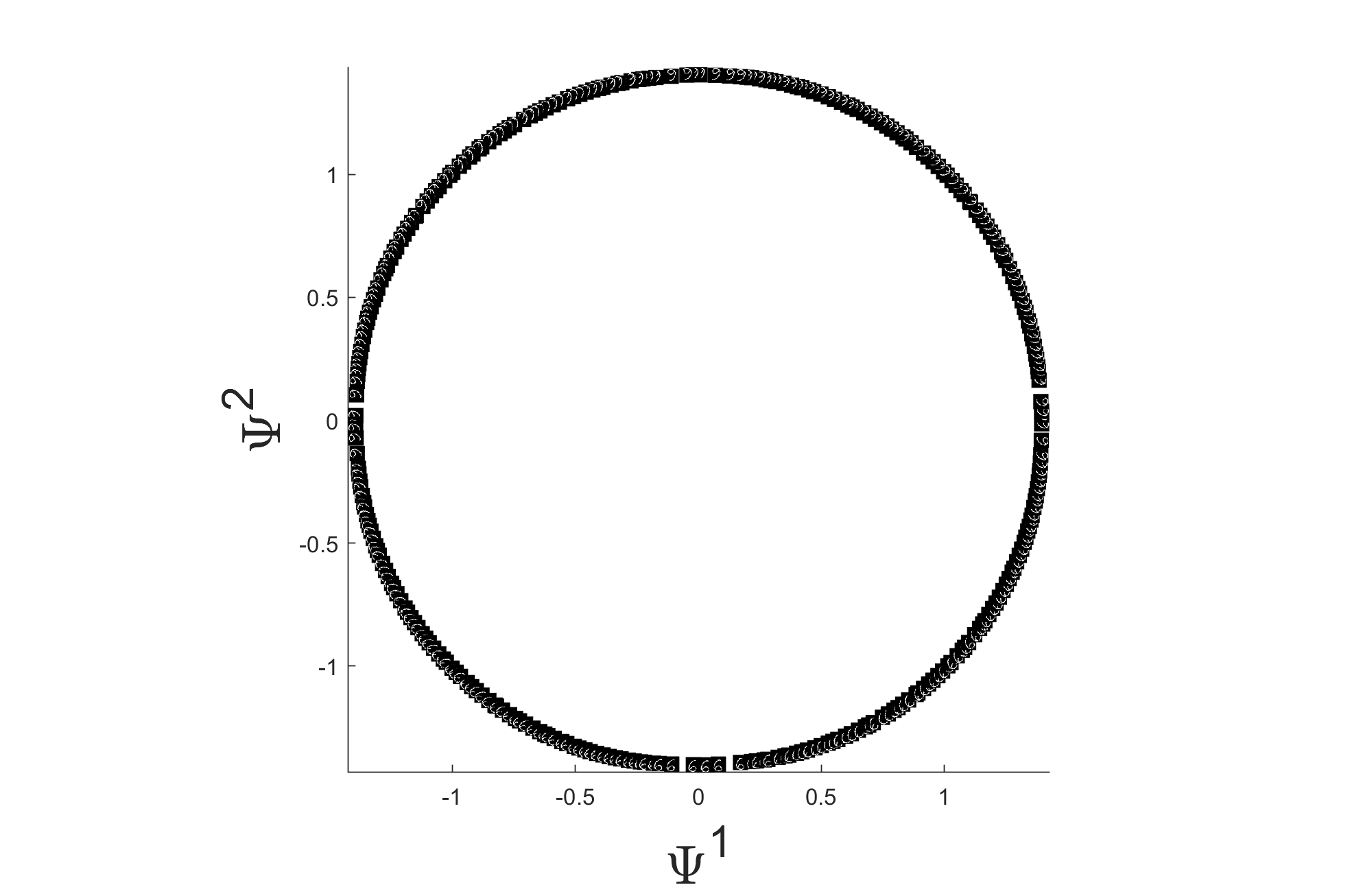}
		\includegraphics[scale = 0.15]{./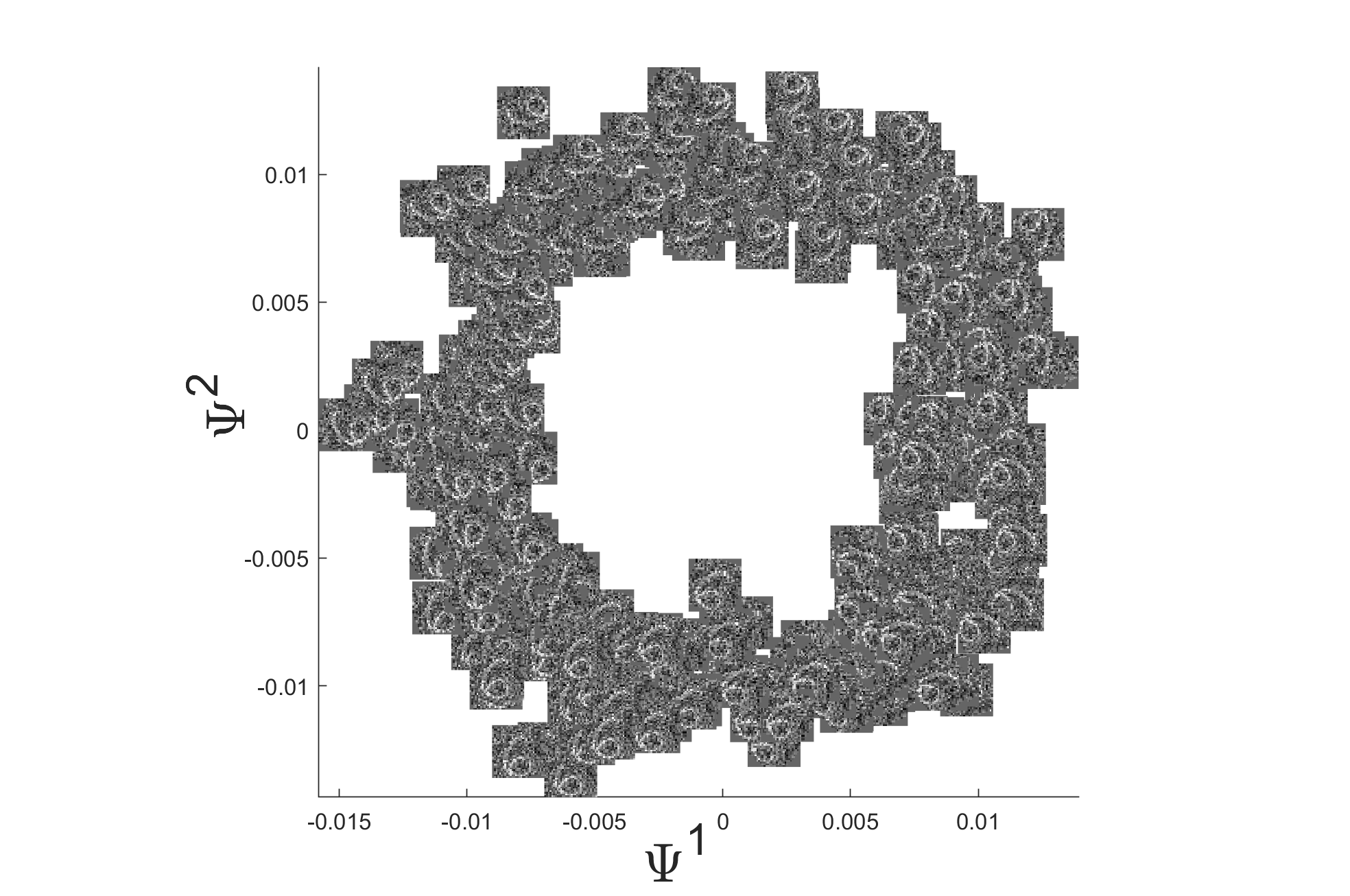}
		
		\caption{Top left: an example of a clean handwritten digit of `6'. Top right: a noisy example of the digit `6'. Each pixel is added by a Gaussian noise. The noise is i.i.d. drawn from $N(0,0.5)$. Bottom left: extracted DM-based embedding from 320 rotated images of the ``clean'' handwritten digit. Bottom right:  extracted DM-based embedding from 320 rotated images of the ``noisy'' handwritten digit.}
		\label{fig:mnist}
	\end{figure}

		\begin{figure}[H]
		\centering

		\includegraphics[scale = 0.3]{./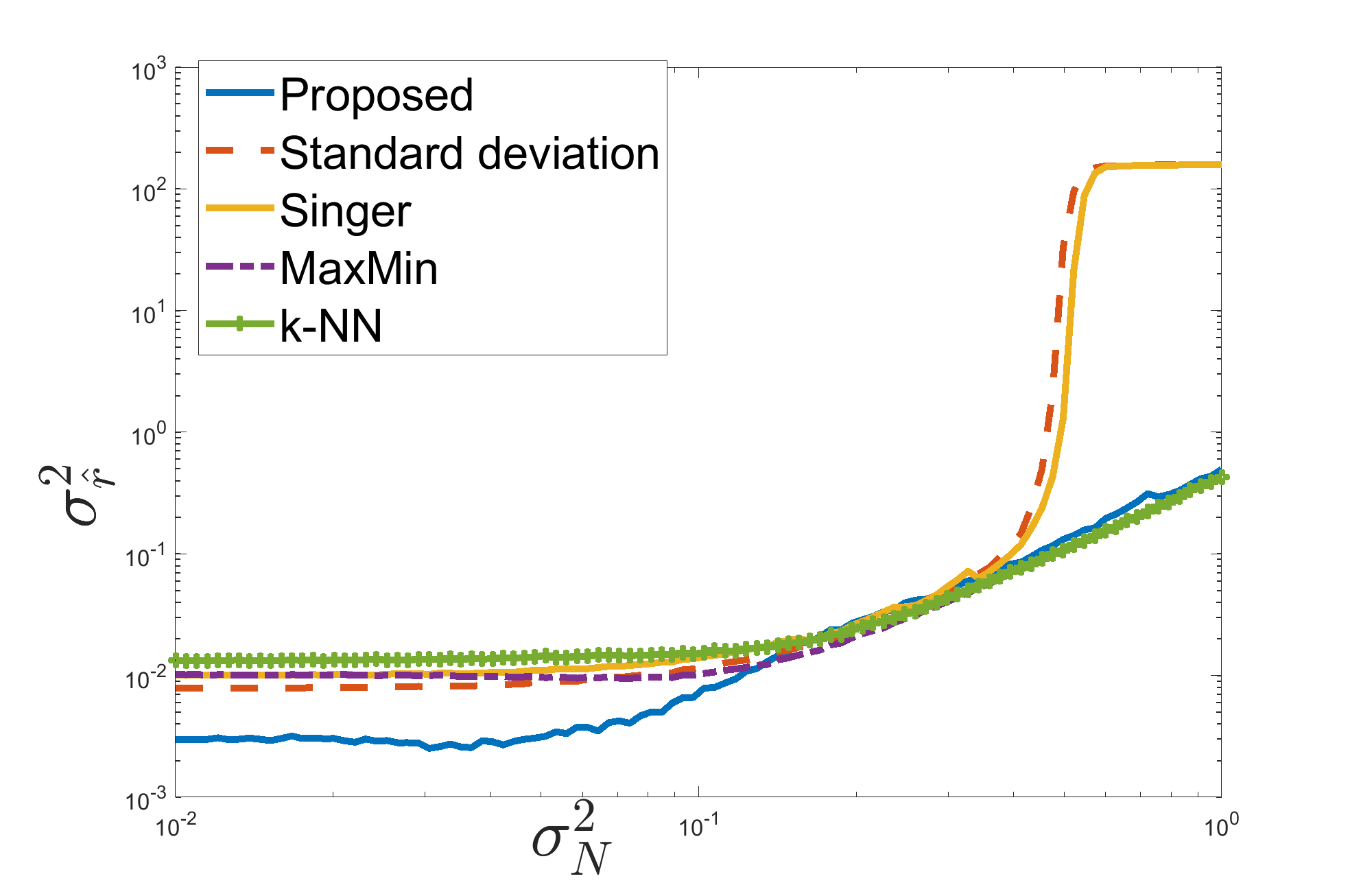}
		
		\caption{The normalized radius variance (NRV) of the extracted embedding from the noisy rotated digit manifold. A comparison between the proposed normalization and alternative methods which are detailed in Section \ref{sec:EpsEst}}
		\label{fig:mnist_error}
	\end{figure}
	
\subsection{Classification}
\label{sec:ExpClass}
In this subsection we provide empirical support for the theoretical analysis from Section (\ref{sec:OptClass}). We evaluate the influence of $\epsilon$ on the classification results using four datasets: a mixture of Gaussians, artificial classes lying on a manifold, handwritten digits and seismic recordings. We focus on evaluating how the proposed measures $\rho_P,\rho_{\myvec{\Psi}},\text{Ge}$ (Eqs.\ (\ref{eq:rhop}), (\ref{eq:rhopsi}), (\ref{eq:GE}), resp.) are correlated with the quality of the classification. 

\subsubsection{{Classification of a Gaussian Mixture}}
In the following experiment we focus on a simple classification test using a mixture of Gaussians. We generate two classes using two Gaussians, based on the following steps:
\begin{enumerate}
	\item{Two vectors $  \myvec{\mu}_1\text{ and }\myvec{\mu}_2 \in {\mathbb{R}}^{6}$ were drawn from a Gaussian distribution $N({0},\sigma_M\cdot {\myvec{I}_{6\times 6}})$. These vectors are the centers of masses for the generated classes $\myvec{C}_1$ and $\myvec{C}_2$. (resp.).}
	\item{$N=100$ data points were drawn for each class $\myvec{C}_1\text{ and }\myvec{C}_2$ with a Gaussian distribution
		$N(\myvec{\mu}_1,\sigma_V\cdot {\myvec{I}_{6\times 6}})\text{ and } N(\myvec{\mu}_2,\sigma_V\cdot {\myvec{I}_{6\times 6}})$, respectively. Denote these $2N$ data points by ${\myvec{C}_1 \cup \myvec{C}_2=\myvec{T}\subset \mathbb{R}^{6\times 200}}$.}
	\end{enumerate} 
	\begin{figure}[H]
		\includegraphics[scale = 0.26]{./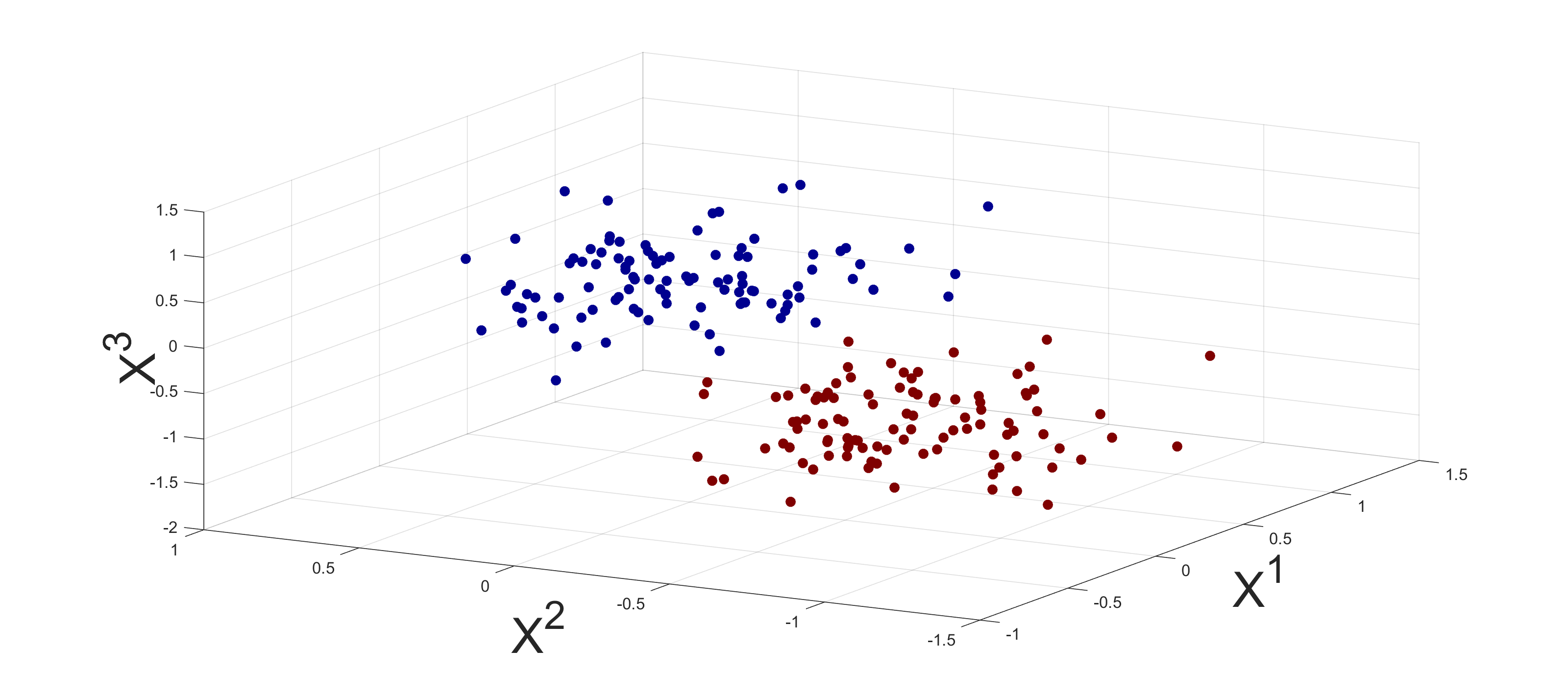}
		\includegraphics[scale = 0.11]{./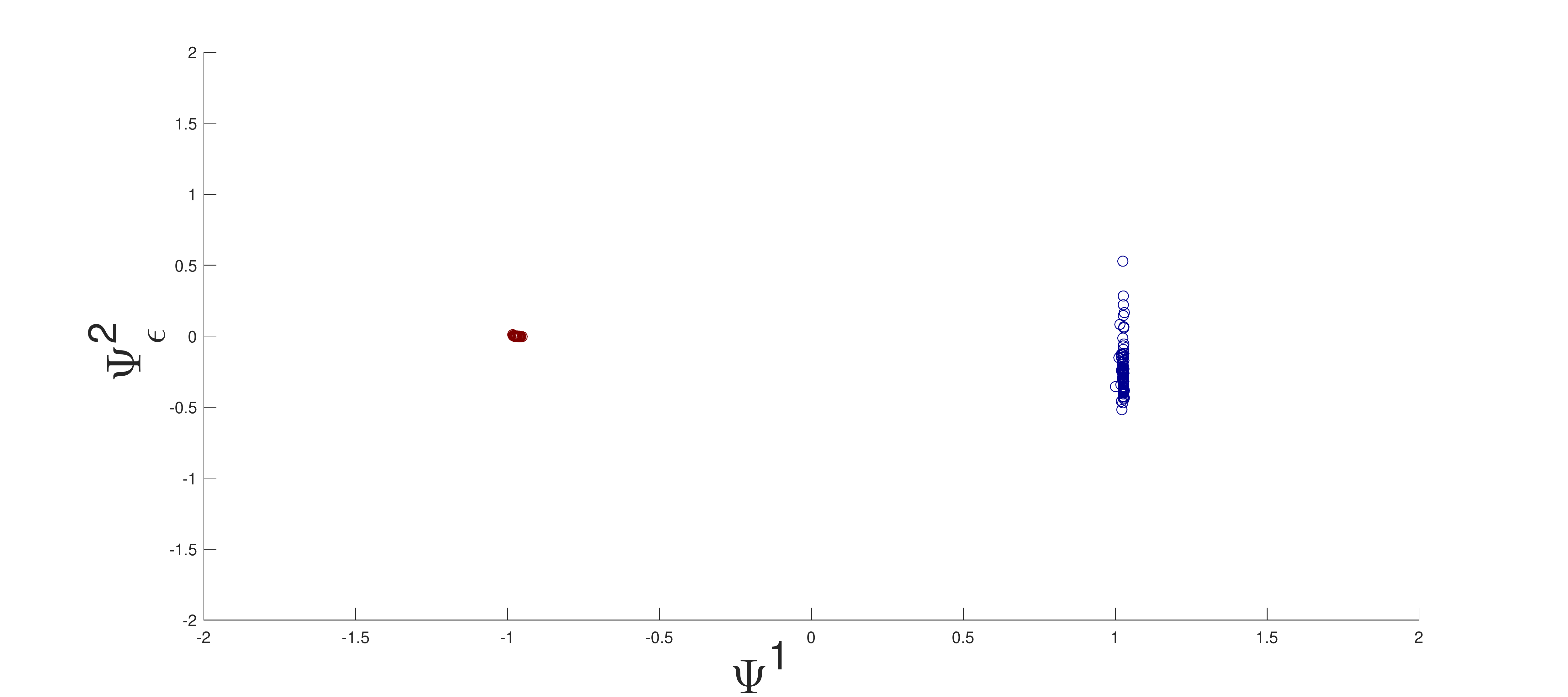}
		
		\caption{Left: an example of the Gaussian distributed data points. Right: a 2-dimensional mapping of the data points.}
		\label{fig:Gauss}
	\end{figure}
	
The first experiment evaluates the Spectral Approach (Section \ref{sec:spectral}). Therefore, we set $\sigma_v<\sigma_M$ such that the class variance is smaller than the variance of the center of mass. Then, we apply DM using a scale parameter $\epsilon$ such that $\epsilon \sim \sigma_v^2<\sigma_M^2$. In Fig \ref{fig:Ideal} (left), we present the first extracted diffusion coordinate using various values of $\epsilon$. It is evident that the separation between classes is highly influenced by $\epsilon$. A comparison between $\rho_P$,$\rho_{\Psi}$ and $Ge$ is presented in Fig.\ \ref{fig:Ideal} (right). This comparison provides evidence of the high correlation between $\rho_P$ (Eq.\ (\ref{eq:rhop})), $\rho_{\Psi}$ (Eq.\ (\ref{eq:rhopsi})) and the generalized eigengap (Eq.\ (\ref{eq:GE})).
\begin{figure}[H]
	\includegraphics[scale = 0.3]{./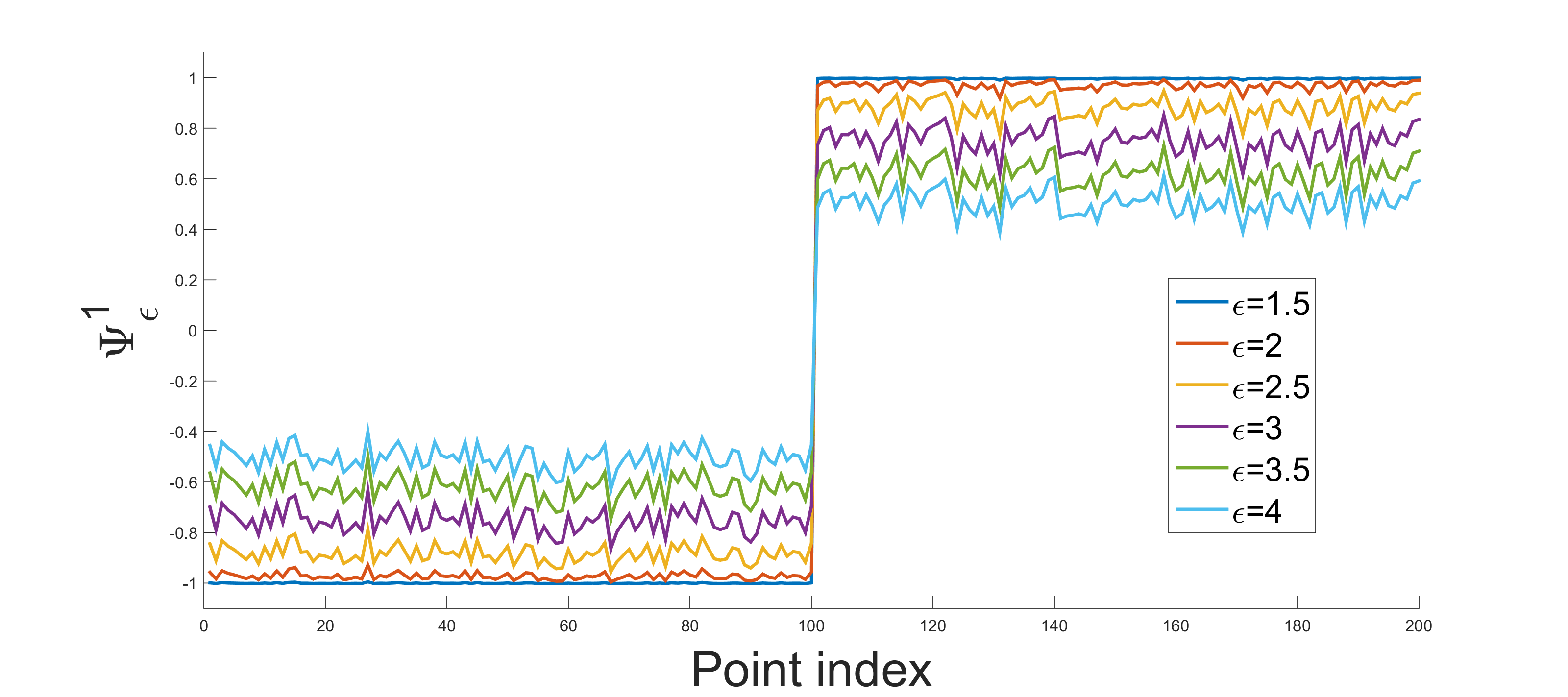}
	\includegraphics[scale = 0.32]{./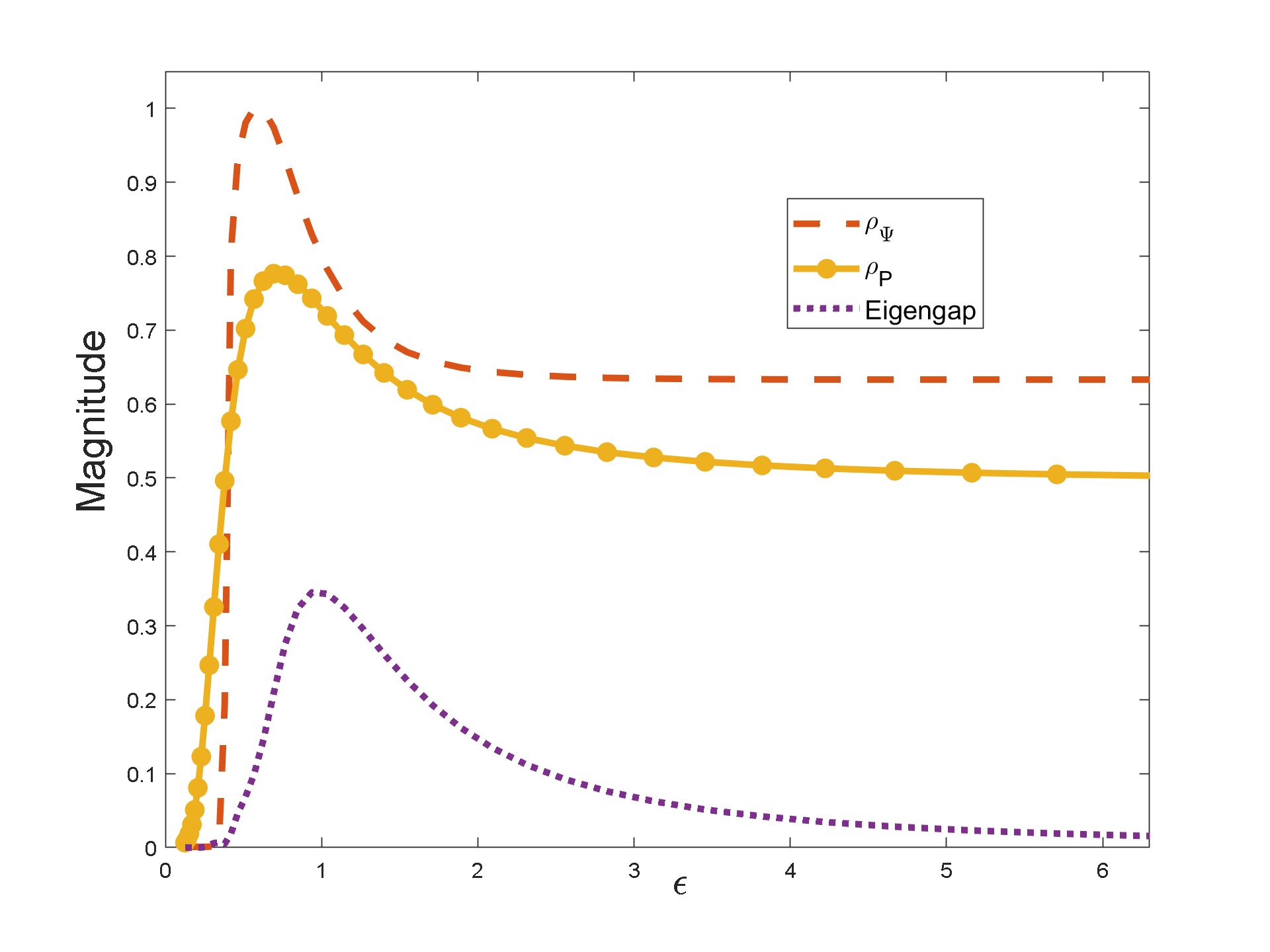}

	\caption{Left: the first eigenvector $\myvec{\Psi}^1$ computed for various values of $\epsilon$. Right: a comparison between $\rho_P,\rho_{\Psi}$ and $Ge$.}
	\label{fig:Ideal}
\end{figure}

To evaluate the validity of Assumption \ref{assump1}, we calculate the Frobenius norm of the perturbation matrix $\widehat{W}$ for various values of $\epsilon$. The results with the approximated $\epsilon_{Ge}$ are presented in Fig.\ \ref{fig:Wpert}. Indeed, as evident from Fig.\ \ref{fig:Wpert}, the value of $||\widehat{W}||_F$ is nearly constant for a small range of values around $\epsilon_{Ge}$.
\begin{figure}[H]
	\centering
	\includegraphics[scale = 0.25]{./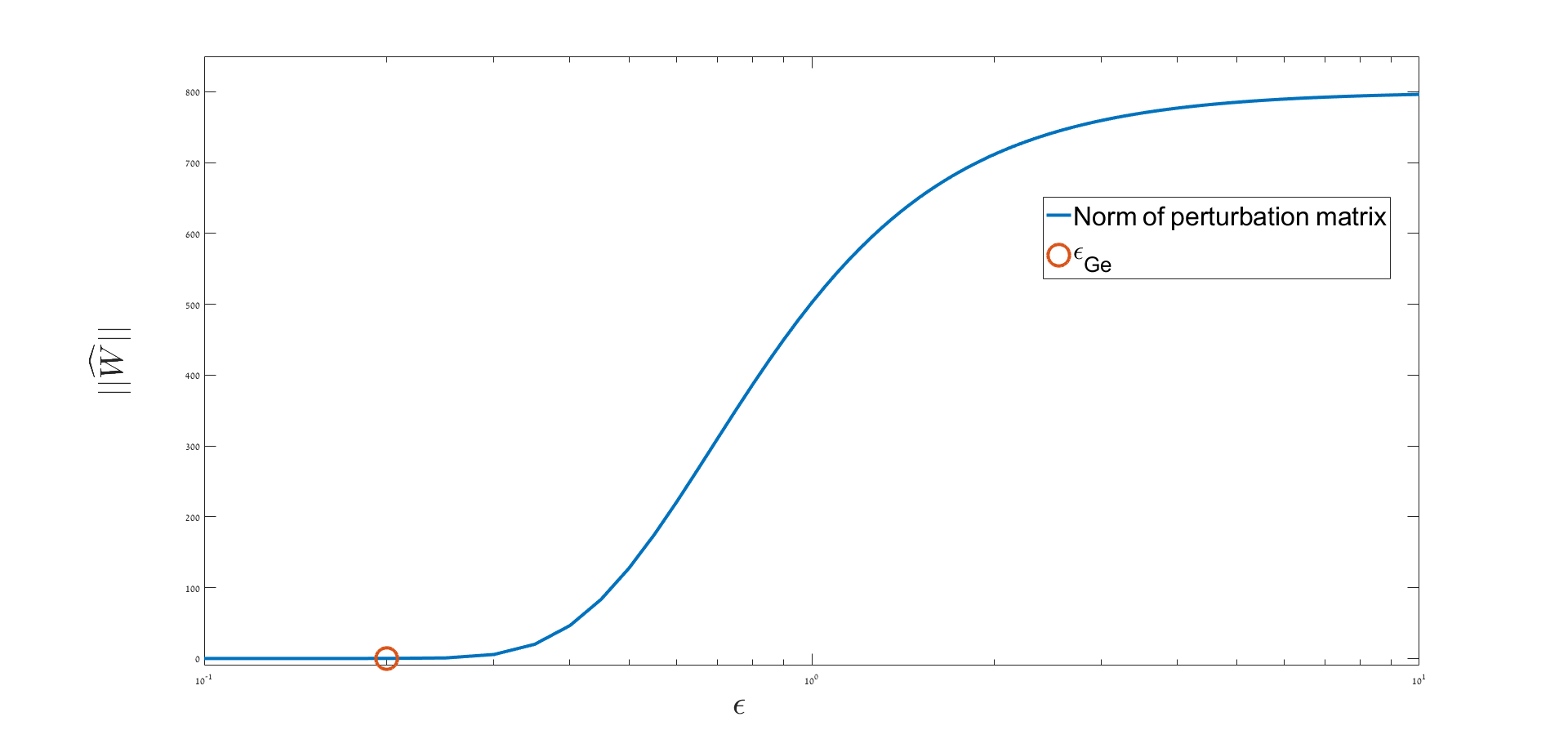}
	
	\caption{The Frobenius norm of the perturbation matrix $\widehat{W}$. The annotated point is the approximated scale $\epsilon_{Ge}$}
	\label{fig:Wpert}
\end{figure}

\subsubsection{Classes based on an artificial physical process}
For the non-ideal case, we generate classes using a non-linear function. This non-linear function is designed to model an unknown underlying nonlinear physical process governed by a small number of parameters. Consequently, the classification task is essentially expected to provide an estimate of these hidden parameters. An example for such a problem is studied, e.g., in \cite{lindenbaum2015musical}, where a musical key is estimated by applying a classifier to a low-dimensional representation extracted from the raw audio signals. In the following steps we describe how we generate classes from a Spiral structure:
\begin{enumerate}
	\item{Set the number of classes $N_C$ and a gap parameter $G$. Each class $\myvec{C}_\ell$,  $\ell=1,...,N_C,$ consists of $N_P$ data points drawn from a uniformly dense distribution within the line  $[(\ell-1)\cdot L_C, \ell\cdot L_C-G ]$, $\ell=1,...,N_C$. $L_C$ is the class-length, set as $L_C=\frac{1}{N_C}$. Let $N=N_CN_P$ denote the total number of points.}
	\item{Denote $\{r_i\}_{i=1}^N$ as the set of all points from all classes.}
	\item{Project each $r_i$ into the ambient space using the following spiral-like function
		\begin{equation}
		\label{eq:Spiral}
		\bar{\myvec{x}}_i=
		\begin{bmatrix}
		\bar{x}_i(1)\\
		\bar{x}_i(2)\\
		\bar{x}_i(3)\\
		
		\end{bmatrix}
		=
		\begin{bmatrix}
		{(6\pi r_i)\cos(6\pi r_i)}\\
		{(6\pi r_i)\sin(6\pi r_i)}\\
		{r_i^3-r_i^2}\\
		
		\end{bmatrix}
		+\myvec{n}^2_i,
		\end{equation}}
\end{enumerate} 
where $\myvec{n}^2_i\in \mathbb{R}^3$ are drawn independently from a zero-mean Gaussian distribution with covariance $\myvec{\Lambda}=\sigma_S\cdot \myvec{I}_3$.
Two examples of the spiral-based classes are shown in Fig.\ \ref{fig:Spiral1}. For both examples, we use $N_C=4,N_P=100,\sigma_S=0.4$ with different values for the gap parameter $G$.
\begin{figure}[H]
   \centering
   \begin{subfigure}
   	\centering
   	\includegraphics[scale = 0.09]{./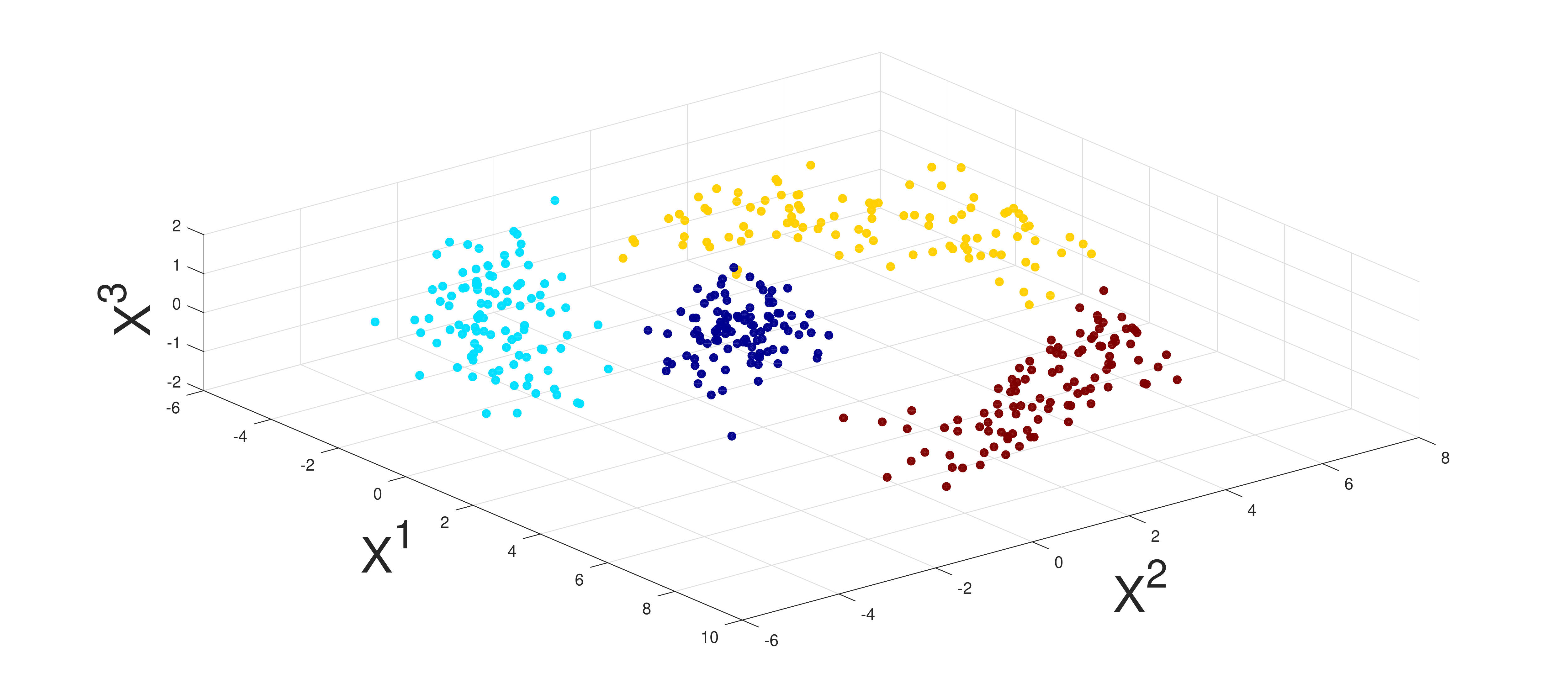}
   	\label{fig:gull}
   \end{subfigure}%
        \begin{subfigure}
        	\centering
	\includegraphics[scale = 0.09]{./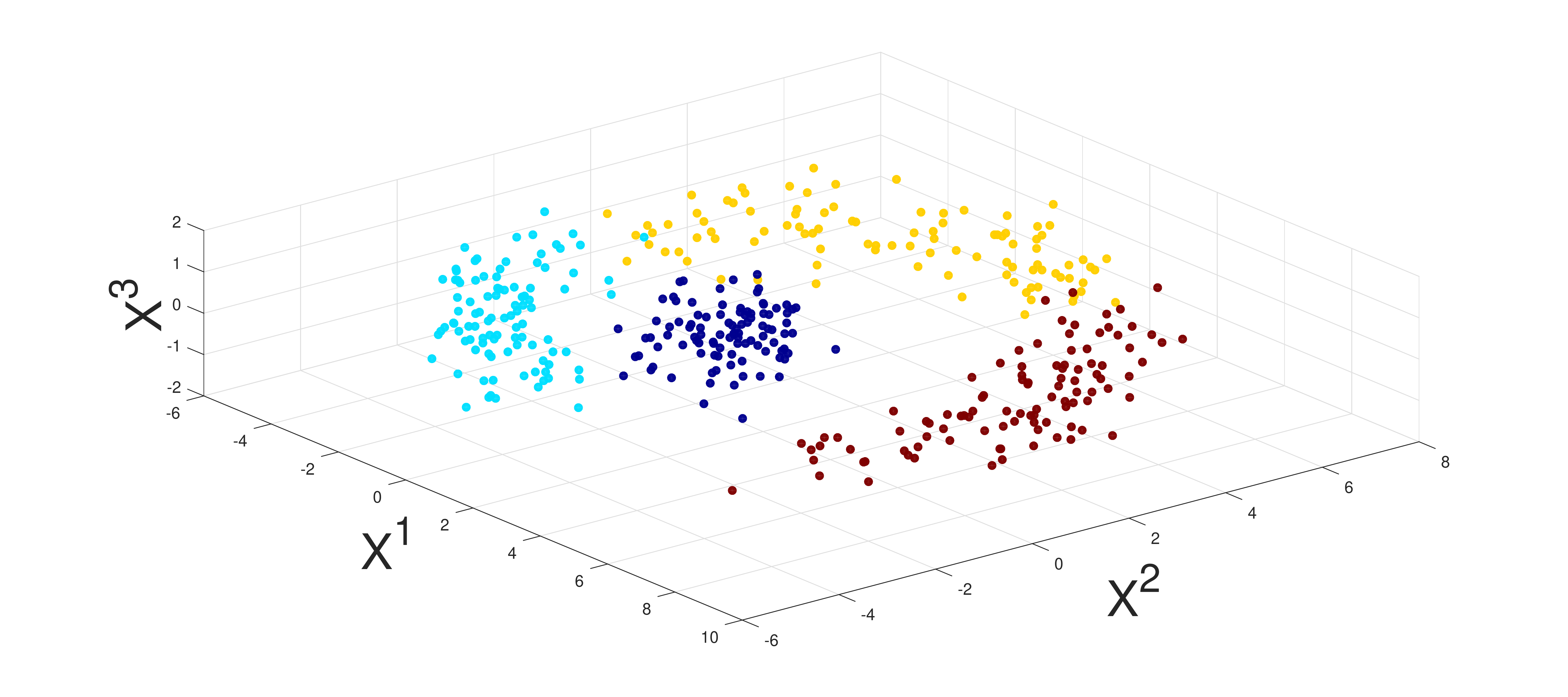}
        	\label{fig:gull2}
        \end{subfigure}

	\caption{Two examples of the generated three-dimensional spiral that are based on Eq.\ (\ref{eq:Spiral}) using $N_C=4$ classes with $N_P=100$ data points within each class. The gaps are set to be $G=0.02,0.04$ left and right, respectively.}
	\label{fig:Spiral1}
\end{figure}

\begin{figure}[H]
	\includegraphics[scale = 0.24]{./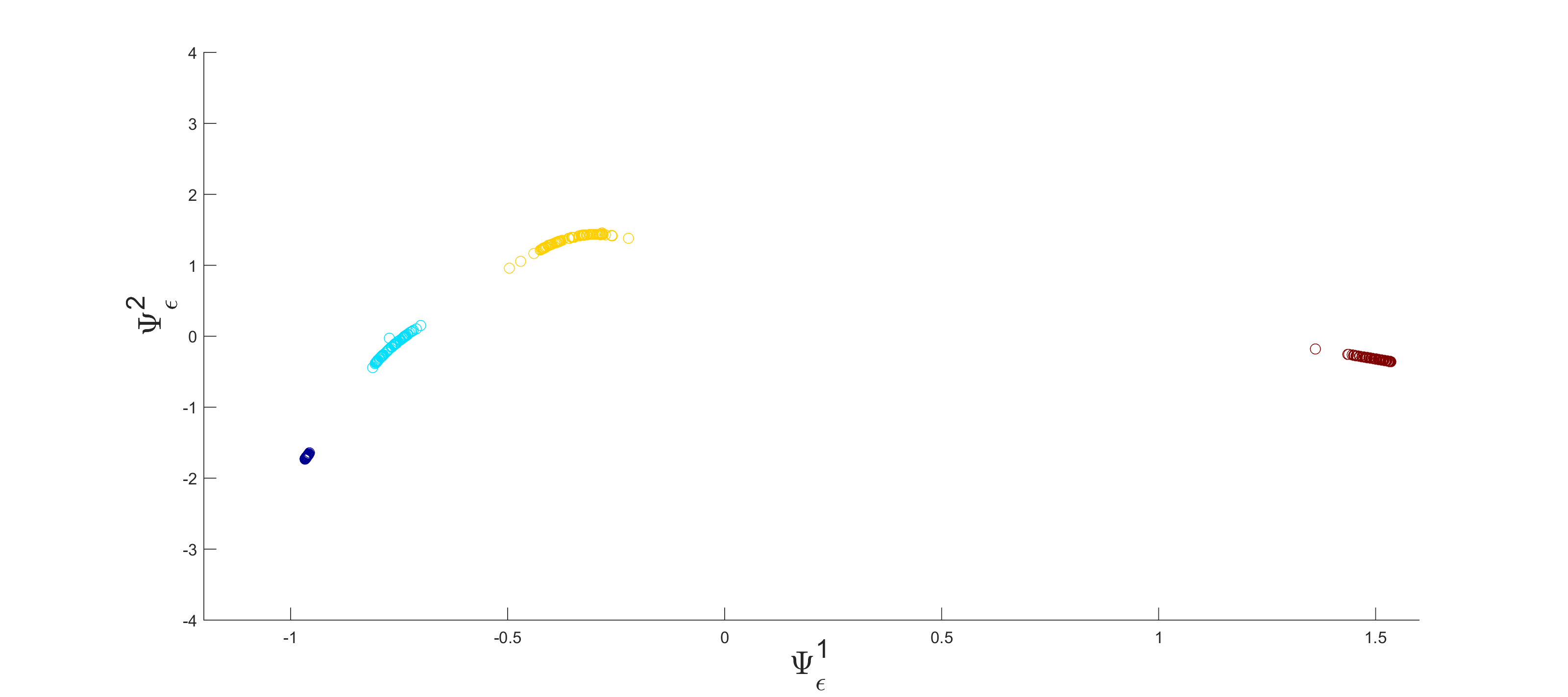}
	\includegraphics[scale = 0.24]{./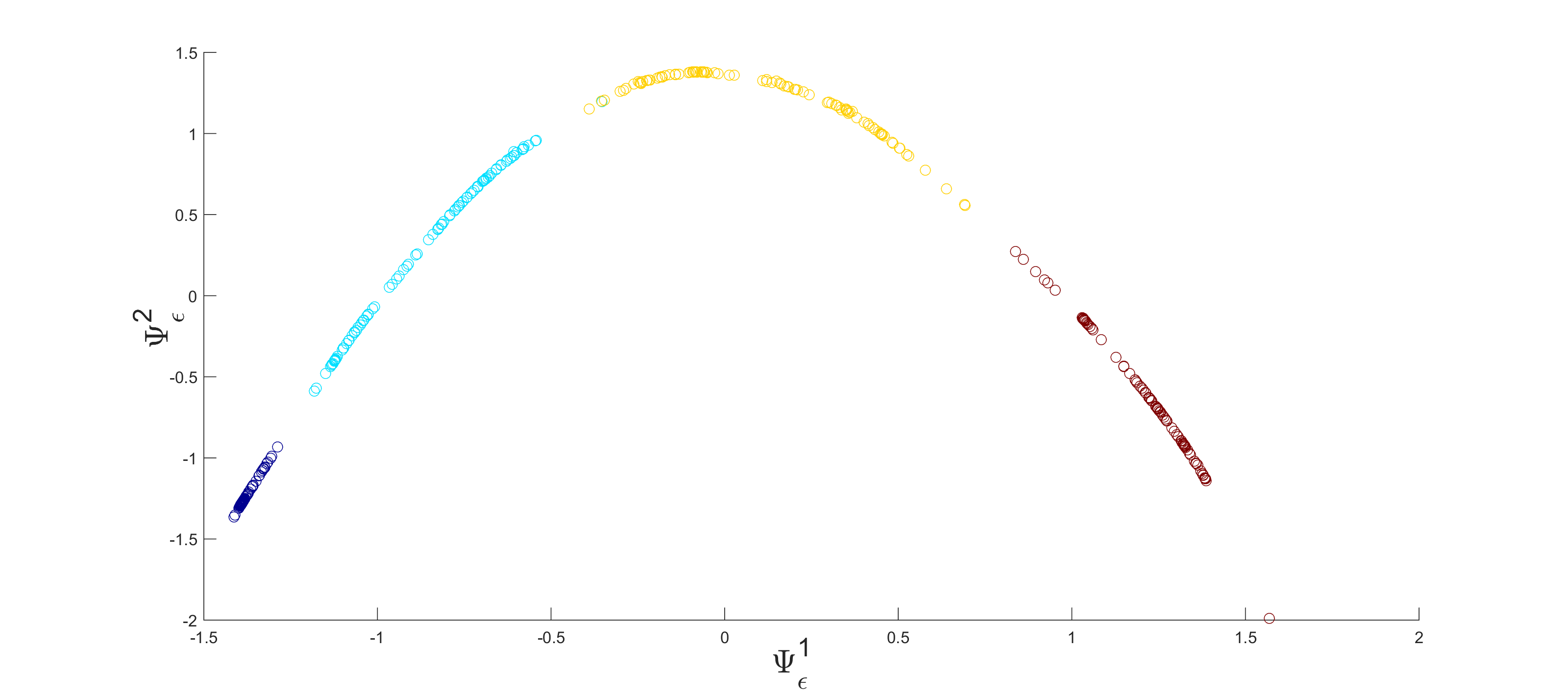}
	
	\caption{A 2-dimensional mapping extracted from both spirals presented in Fig.\ \ref{fig:Spiral1}.}
	\label{fig:Spiral2}
\end{figure}
To evaluate the advantage of the proposed scale parameters $\epsilon_{{\Psi}}$ and $\epsilon_{P}$ (Eqs.\ (\ref{eq:rhopsi}) and (\ref{eq:rhop}), resp.) for classification tasks, we calculate the ratios $\rho_P \text{ and }\rho_{\Psi}$ for various values of $\epsilon$, and then we evaluate the resulting classification (which is based on the low-dimensional embedding). Examples of embeddings of the two spirals from Fig.\ \ref{fig:Spiral1} are shown in Fig.\ \ref{fig:Spiral2}. This merely demonstrates the effect of $\epsilon$ on the quality of separation. 

We apply classification in the low-dimensional space using k-NN ($k=1$). The k-NN classifier is evaluated based on Leave-One-Out cross validation. The results are shown in Fig.\ \ref{fig:Spiral3}, where it is evident that the classification results in the ambient space are highly influenced by the scale parameter $\epsilon$. Furthermore, peak classification results occur at a value of $\epsilon$ corresponding to the maximal values of $\rho_{Ge}$ and $\rho_{\Psi}$. The value of $\rho_P$ did not indicate the peak classification scale, however, its computation complexity is lighter compared to $\rho_{Ge}$ and $\rho_{\Psi}$.

\begin{figure}[H]
	\includegraphics[scale = 0.26]{./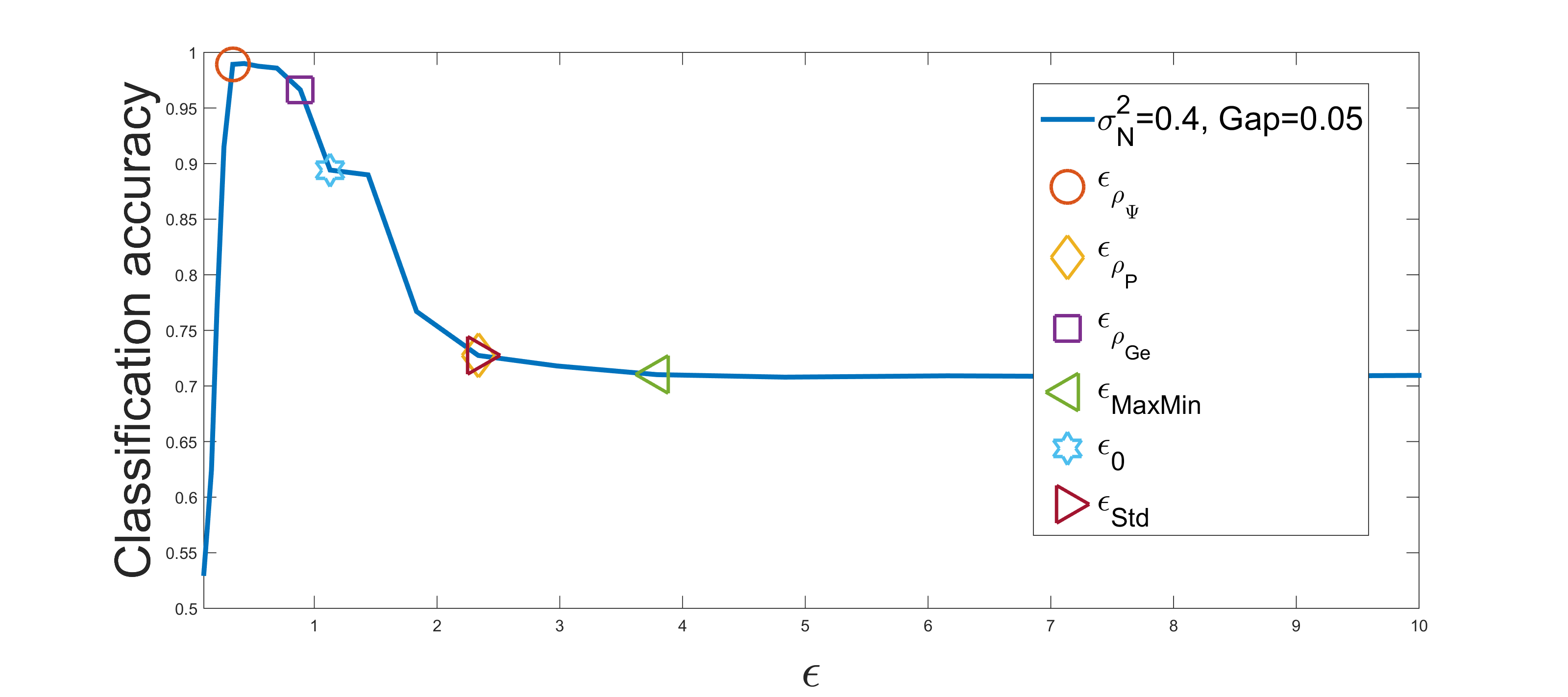}
	\includegraphics[scale = 0.26]{./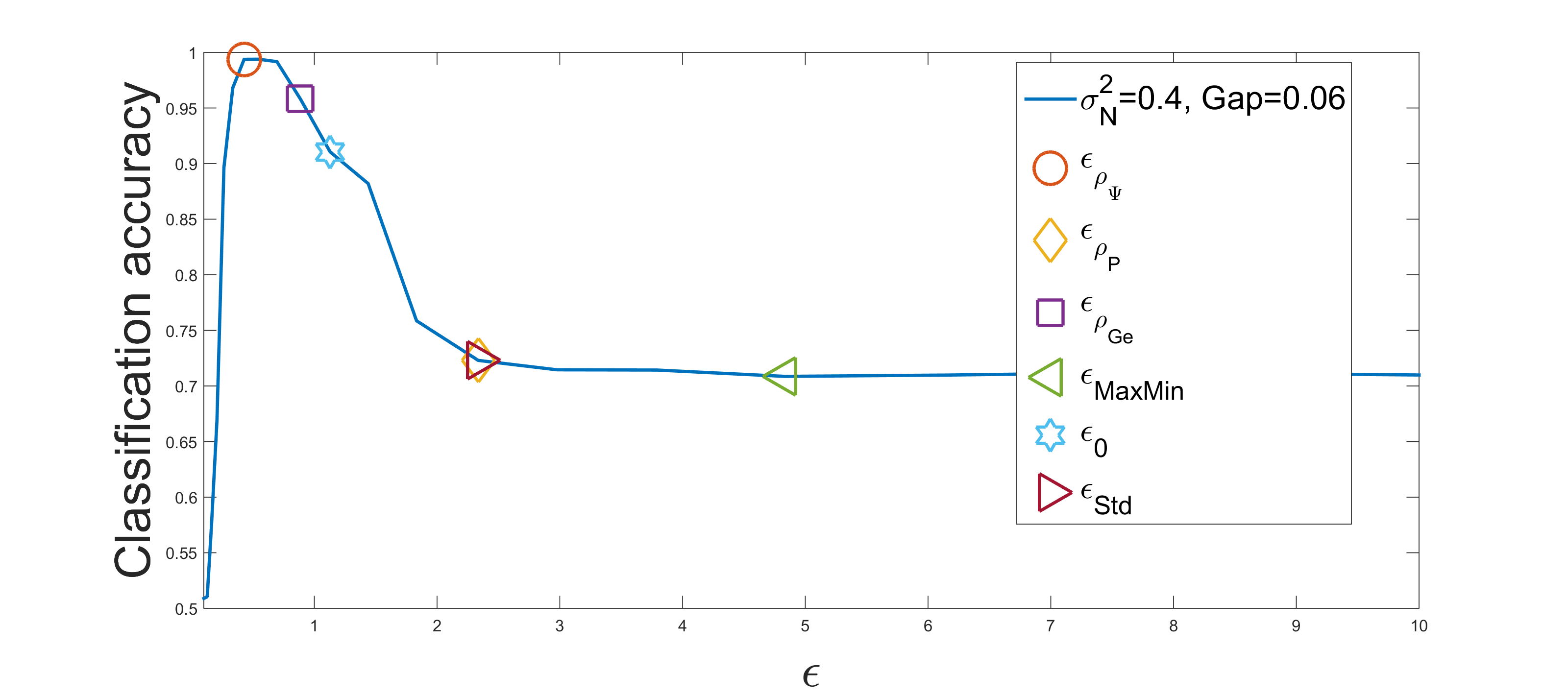}
	\includegraphics[scale = 0.26]{./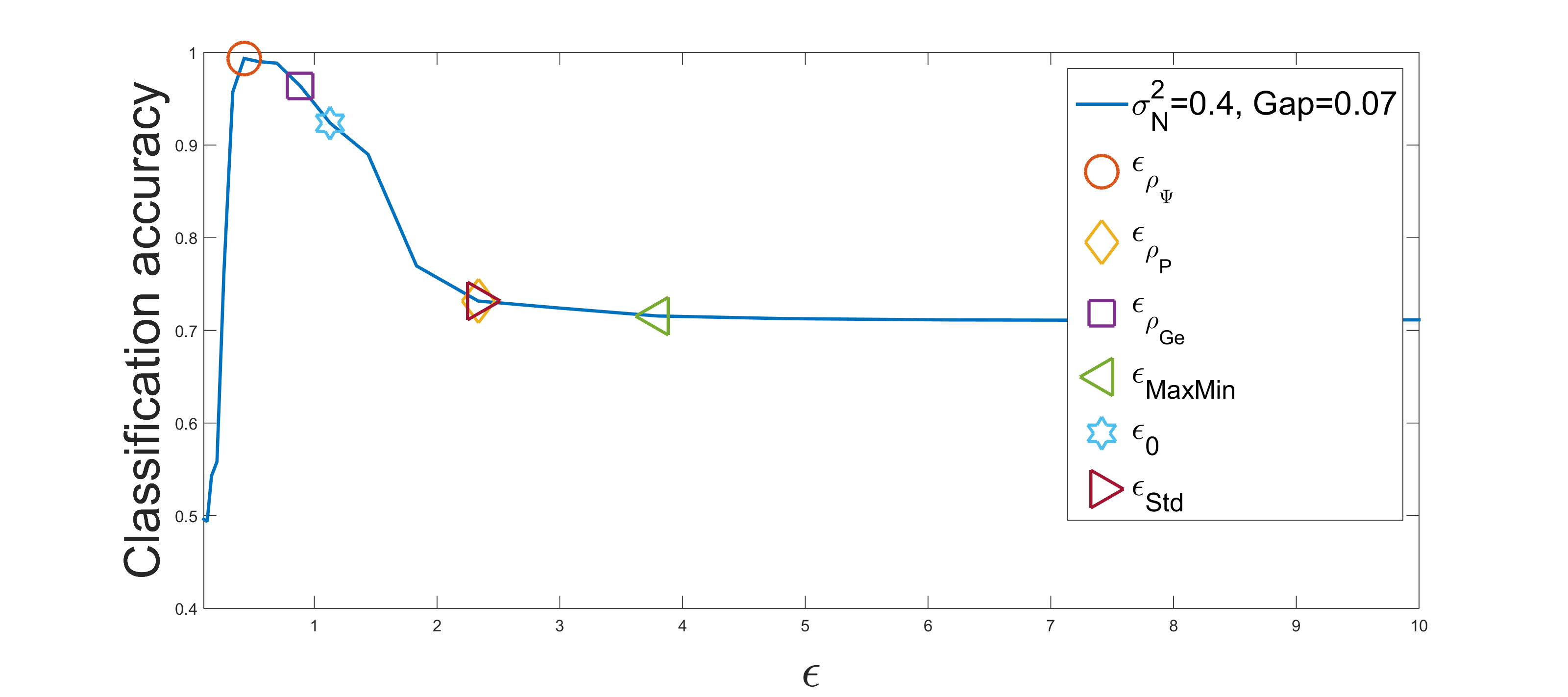}
	\includegraphics[scale = 0.26]{./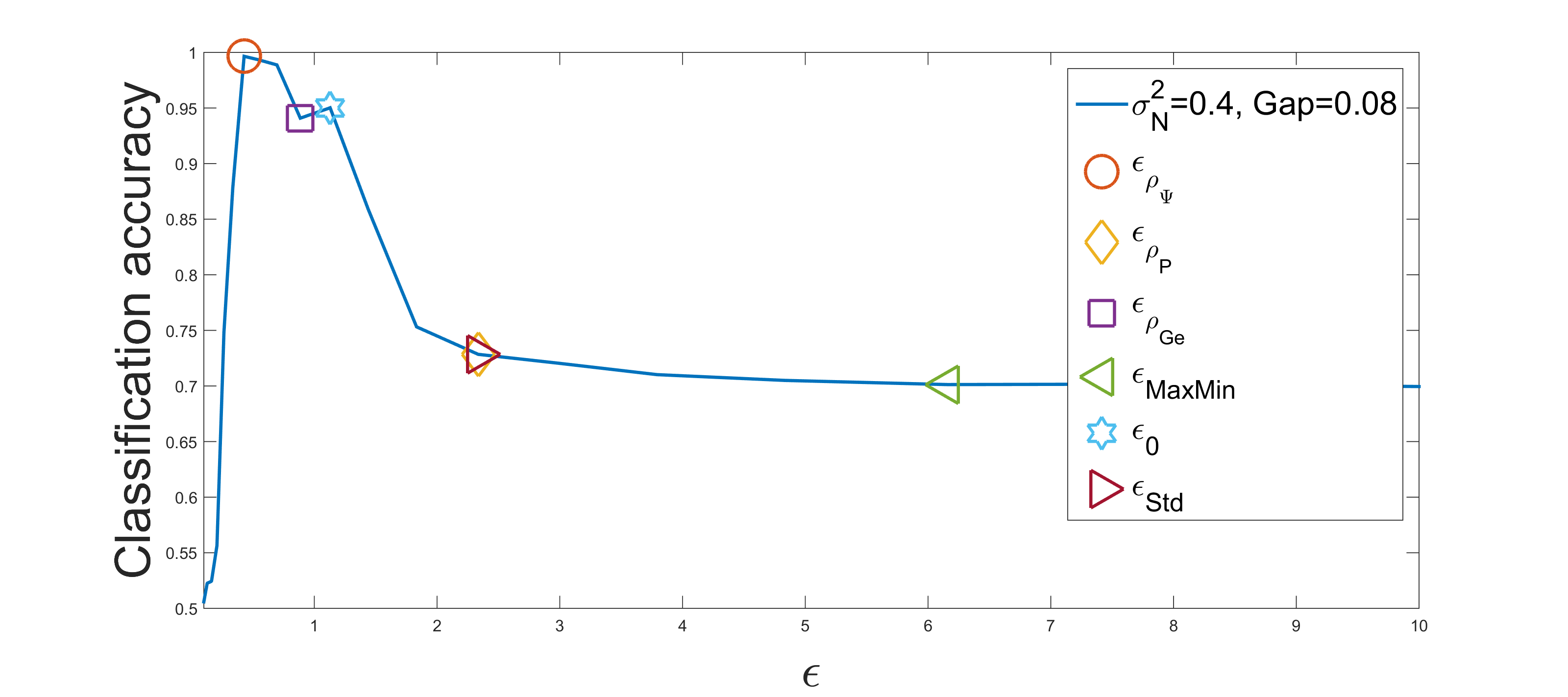}
	\caption{Accuracy of classification in the spiral artificial dataset for different values of the gap parameter $G$. The data is generated based on Eq.\ \ref{eq:Spiral}. The proposed scales ($\epsilon_{{\Psi}},\epsilon_{Ge},\epsilon_{P}$) and existing methods ($\epsilon_0,\epsilon_{\text{MaxMin}},\epsilon_{std}$) are annotated on the plots.}
	\label{fig:Spiral3}
\end{figure}
\subsubsection{{Classification of Handwritten Digits}}
In the following experiment, we use the dataset from the UCI machine learning repository \cite{Lichman:2013}. The dataset consists of $2000$ data points describing $200$ instances of each digit from $0$ to $9$, extracted from a collection of Dutch utility maps. The dataset consists of multiple features of different dimensions. We use a concatenation of the Zerkine moment (ZER), morphological (MOR), profile correlations
(FAC) and the Karhunen-lo\'{e}ve coefficients (KAR) as our features space.

We compute the proposed ratios $\rho_P$ and $\rho_{\Psi}$ for various values of $\epsilon$, and estimate the optimal scale based on Eqs.\ (\ref{eq:rhopsi}), (\ref{eq:rhop}). We evaluate the extracted embedding using $20$-fold cross validation ($5\%$ left out as a test set). The classification is done by applying k-NN (with $k=1$) in the $d$-dimensional embedding. In Fig.\ \ref{fig:Mnist}, we present the classification results and the proposed optimal scales $\epsilon$ for classification. Our proposed scale concurs with the scale that provides maximal classification rate. 
\begin{figure}[H]
	\centering
	\includegraphics[scale = 0.25]{./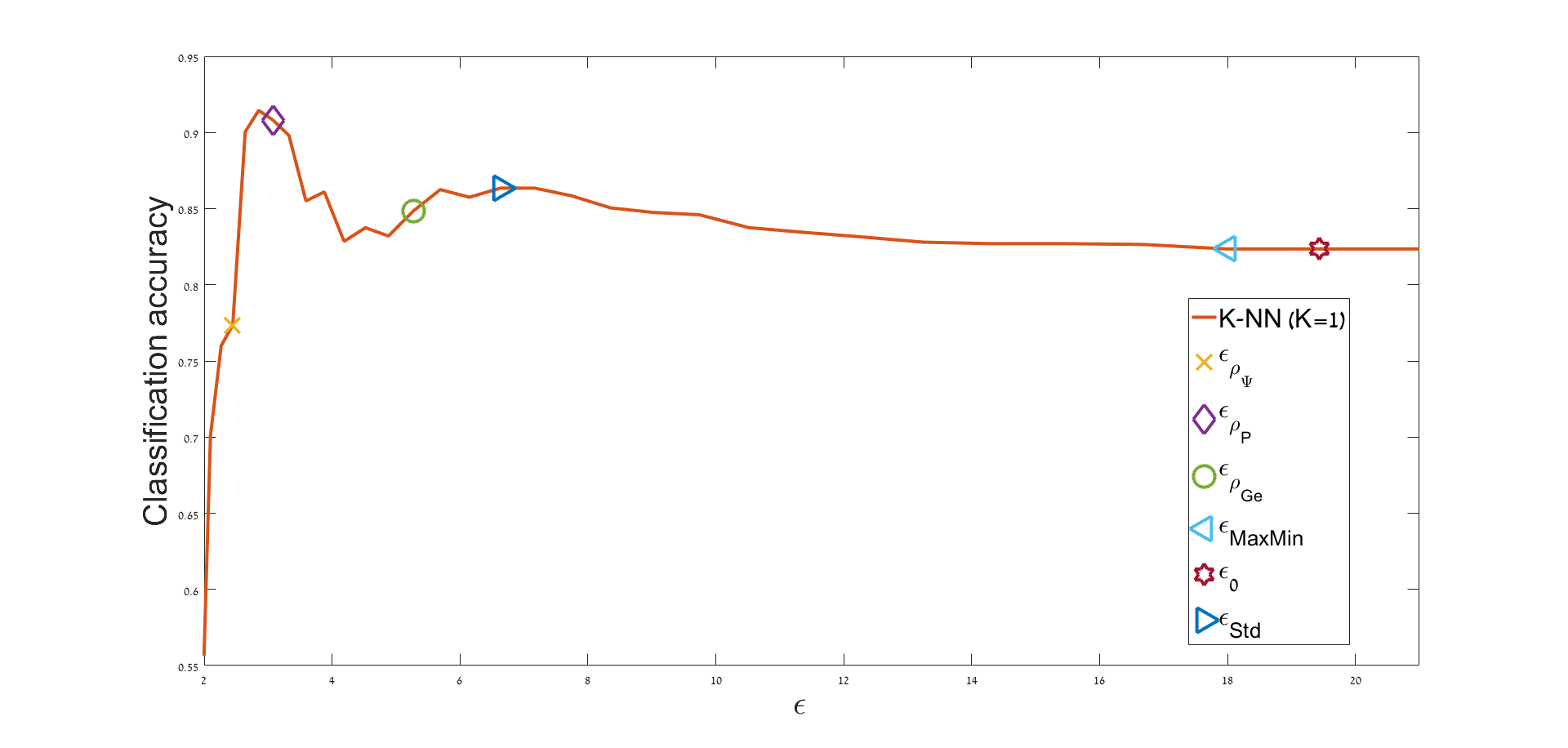}
	\caption{Accuracy of classification in the multiple features dataset. k-NN ($k=1$) is applied in a $d=4$ dimensional diffusion based representation. The proposed scales ($\epsilon_{{\Psi}},\epsilon_{Ge},\epsilon_{P}$) and existing methods ($\epsilon_0,\epsilon_{\text{MaxMin}},\epsilon_{std}$) are annotated on the plots.}
	\label{fig:Mnist}
\end{figure}

\section{Application: Learning Seismic Parameters}
\label{sec:application}
In this section we demonstrate the capabilities of the proposed approach for extracting meaningful parameters from raw seismic recordings.
Extracting reliable seismic parameters is a challenging task. Such parameters could help discriminate earthquakes from explosions, moreover, they can enable automatic monitoring of nuclear experiments. Traditional methods such as \cite{rodgers1997comparison,Blandford} use signal processing to try to analyze seismic recordings. More recent methods, such as \cite{Kortstrom,ruano2014seismic,Ohrnberger, Beyreuther,Hammer,DelPezzo,Tiira} use machine learning to construct a classifier for a variety of seismic events. Here, we extend our result from \cite{rabin2016earthquake,lindenbaum2017Seismic}, in which we have demonstrated the strength of DM for extracting seismic parameters. Our proposed method performs a vector scaling for manifold learning. Thus, essentially if the data lies on a manifold, our scaling combined with DM will extract the manifold from high-dimensional seismic recordings. Moreover, it will provide a natural feature selection procedure, thus if some features are corrupt, the proposed scaling may be able to reduce their influence. 

As a test case, we use a dataset from \cite{rabin2016earthquake,lindenbaum2017Seismic}, which was recorded in Israel and Jordan between $2005$-$2015$. All recordings were collected in HRFI (Harif) station located in the south of Israel. The station collects three signals from north (N), east (E) and vertical (Z). Each signal is sampled using a broadband seismometer at $40$[Hz] and consists of $10,\!000$ samples.

\subsection{Feature Extraction}
Seismic events usually generate two waves, primary-waves (P) and secondary waves (S). The primary wave arrives directly from the source of the event to the recorder, while the secondary wave is a shear wave and thus arrives at some time delay. Both waves pass through different material thus have different spectral properties. This motivates the use of a time-frequency representation as the feature vector for each seismic event. The time-frequency representation used in this study is a Sonogram \cite{joswig1990pattern}, which offers computation simplicity while retaining the sufficient spectral resolution for the task in hand. The Sonogram is basically a spectrogram, renormalized and rearranged in a logarithmic manner. Given a seismic signal $\myvec{z}(n)$, the Sonogram is extracted using the following steps:
\begin{enumerate}

\item {Compute a discrete-time Short Term Fourier Transform:
\begin{equation}
\label{eq:STFT}
\myvec{\widehat{Z}}(f,t) = \sum\limits_{n = 1}^{N} {w(n-\ell(t)) \cdot {z}(n)}  \cdot {e^{ - j2\pi f n}} ,
\end{equation}
where $w(n-\ell(t))$ is a window function of length $N_0=512$, with a shift value of $\ell(t)=\lfloor (1-s)\cdot N_0\cdot t\rfloor$ time steps. We use overlap of $s=0.85$ and compute for $N_0$ frequency bins, such as the values of $f$ are spread uniformly on a logaritmic scale. }
\item{Normalize the energy by the number of frequency bins
\begin{equation}
\label{eq:NSPEC}
\myvec{\widetilde{{Z}}}(f,t) = \frac{|\myvec{\widehat{Z}}(f,t)|^2 }{N_0}.
\end{equation} }
\item{Reshape the time frequency representation into a vector, by concatenating columns. The resulted vector representation for a signal $\myvec{z}(n)$ is denoted by $\myvec{x}$.}


\end{enumerate}
These steps are applied to each of the channels separately. This results in three sets $\myvec{X}_E$ for the east channel, $\myvec{X}_N$ for the north channel and $\myvec{X}_Z$ for the horizontal. Examples for seismic recording of an explosion and of an earthquake are presented on Fig.\ \ref{fig:seis_exp} and \ref{fig:seis_eq}. Examples of corresponding Sonograms are presented in Fig.\ \ref{fig:seis_exp} and \ref{fig:seis_eq}.
\begin{figure}[H]
	
	\centering
	\subfigure[Seismic recording of an explosion ] {\label{fig:seis_exp} \includegraphics[width=0.45\textwidth] {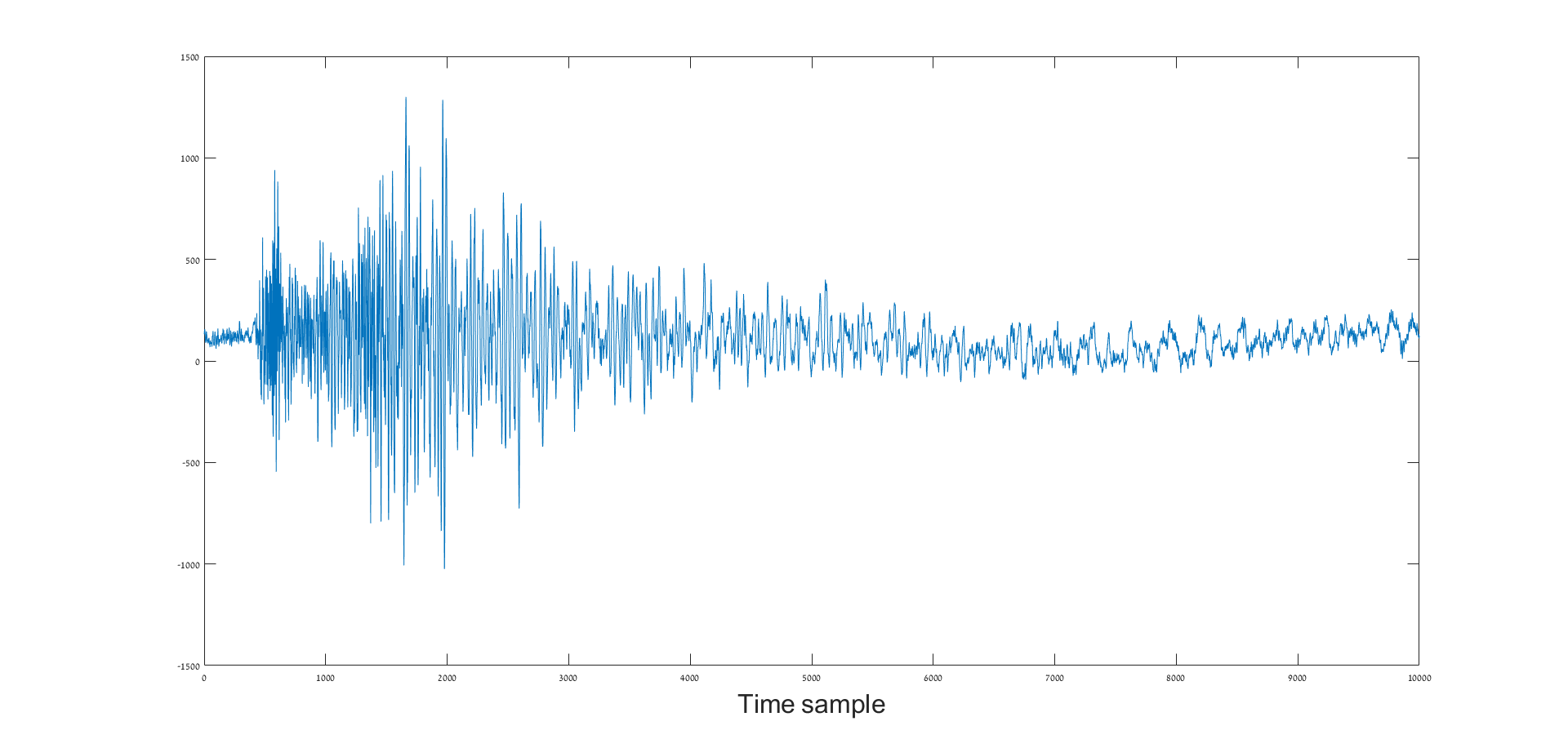}}
	\subfigure[Seismic recording of an earthquake] {\label{fig:seis_eq} \includegraphics[width=0.45\textwidth] {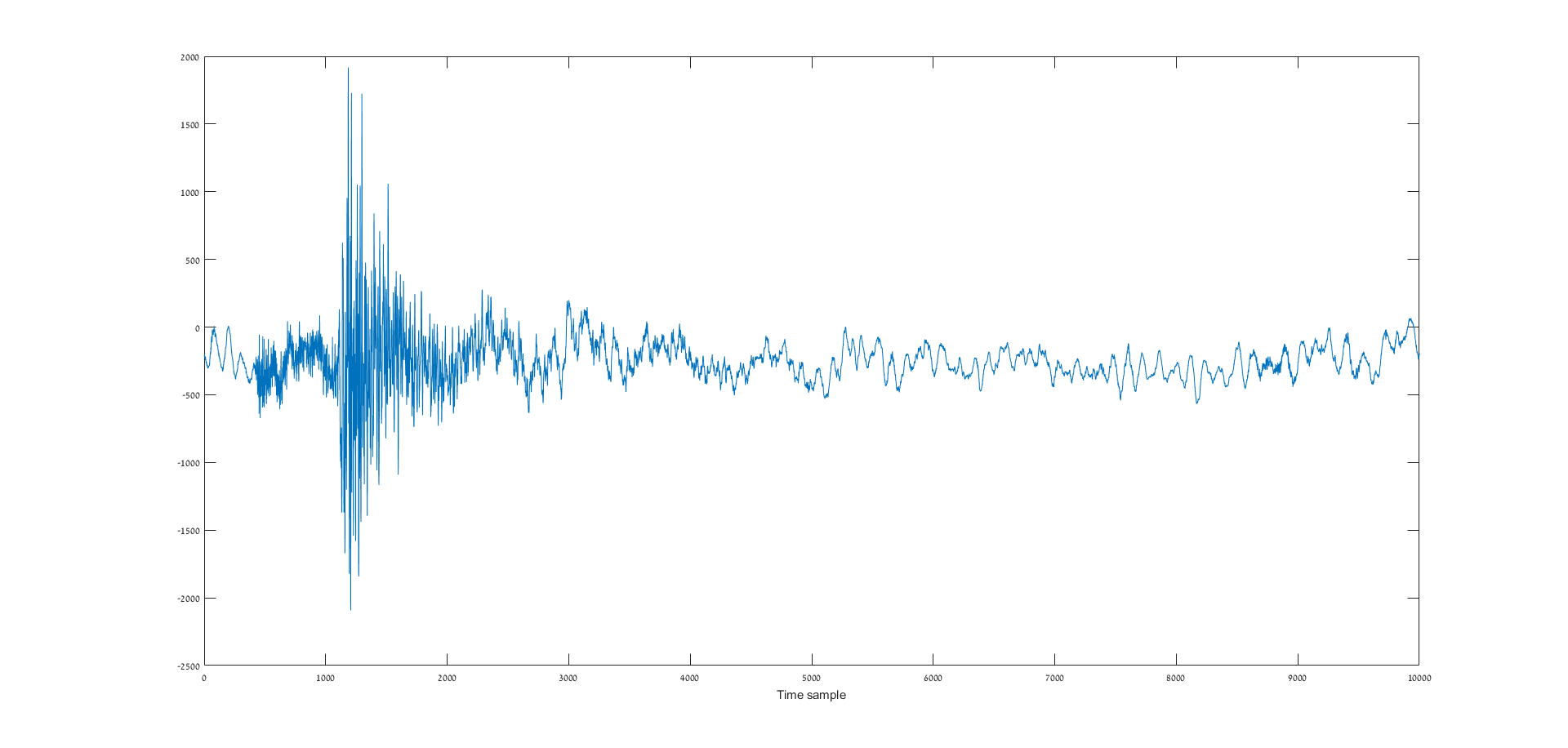}}
	\subfigure[Sonogram of an explosion] {\label{fig:sono_exp} \includegraphics[width=0.45\textwidth] {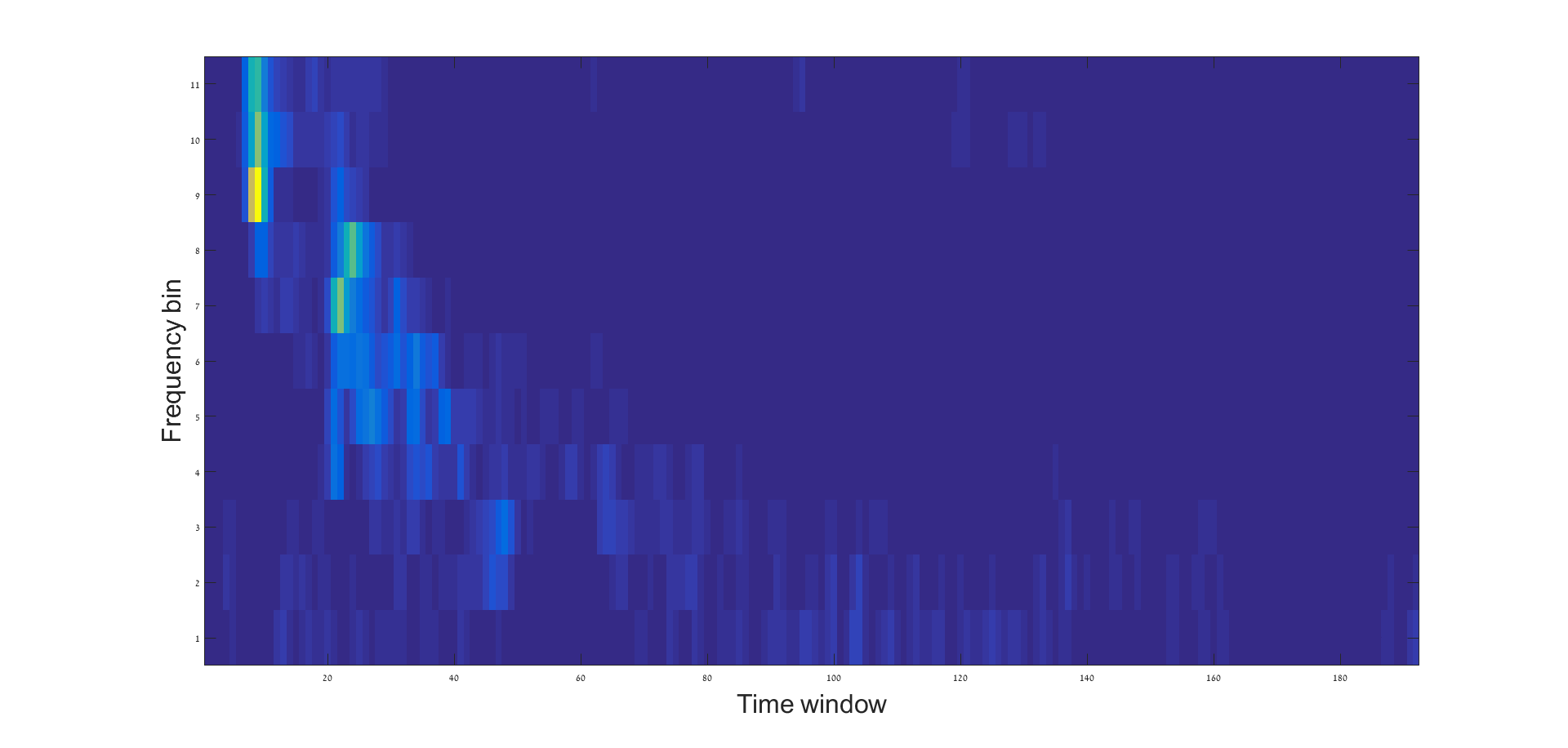}}
		\subfigure[Sonogram of an earthquake] {\label{fig:sono_eq} \includegraphics[width=0.45\textwidth] {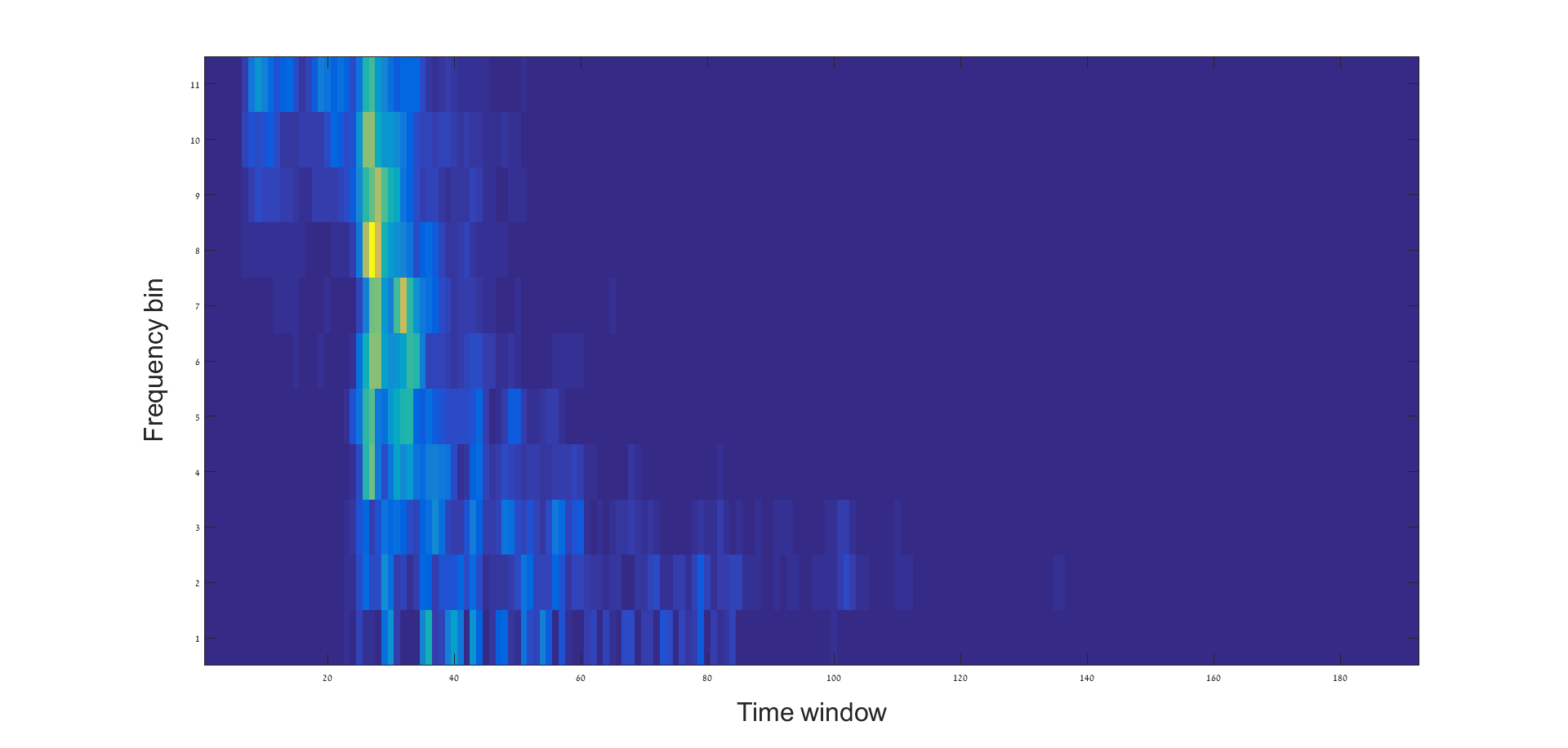}}
	\caption{Top: Example of a raw signal recorded from (a) an explosion  and (b) earthquake. Bottom: The Sonogram matrix extracted from (c) an explosion and (d) earthquake.}
	\label{fig:SonoExample1}
\end{figure}
\subsection{Seismic Manifold Learning}
To evaluate the proposed scaling for manifold learning, we use a subset of the seismic recording with 352 quarry blasts. The explosions have occurred at 4 known quarries surrounding the recording station HRFI. Our study in \cite{rabin2016earthquake,lindenbaum2017Seismic}, has demonstrated that most of the variability of quarry blasts stems from the source location of each quarry, therefor, we assume that the 352 blasts lie on some low-dimensional manifold. Where the parameters of the manifold should correlate with location parameters. Our approach for setting the scale parameter provides a natural feature selection procedure. To evaluate the capabilities of this procedure we ``destroy'' the information in some of the features. We do this by applying a deformation function to one channel out of the three seismometer recordings. We define the input for Algorithm \ref{alg:solv_global} as 
\begin{equation}
\myvec{X}=[\myvec{X}_N,\myvec{X}_E,g(\myvec{X}_Z)],
\end{equation} where $g(\cdot)$ is an element-wise deformation function. In the first test case the deformation function is defined by $g(y)=y^{0.1}$.
Then, we apply DM with various scaling schemes and examine the extracted representation. In Fig.\ref{fig:seismicloc}  the two leading DM coordinates of different scaling methods are presented.  

\begin{figure}[H]
	\centering
\subfigure[ ]{\label{fig:Seis_Std} \includegraphics[scale = 0.15]{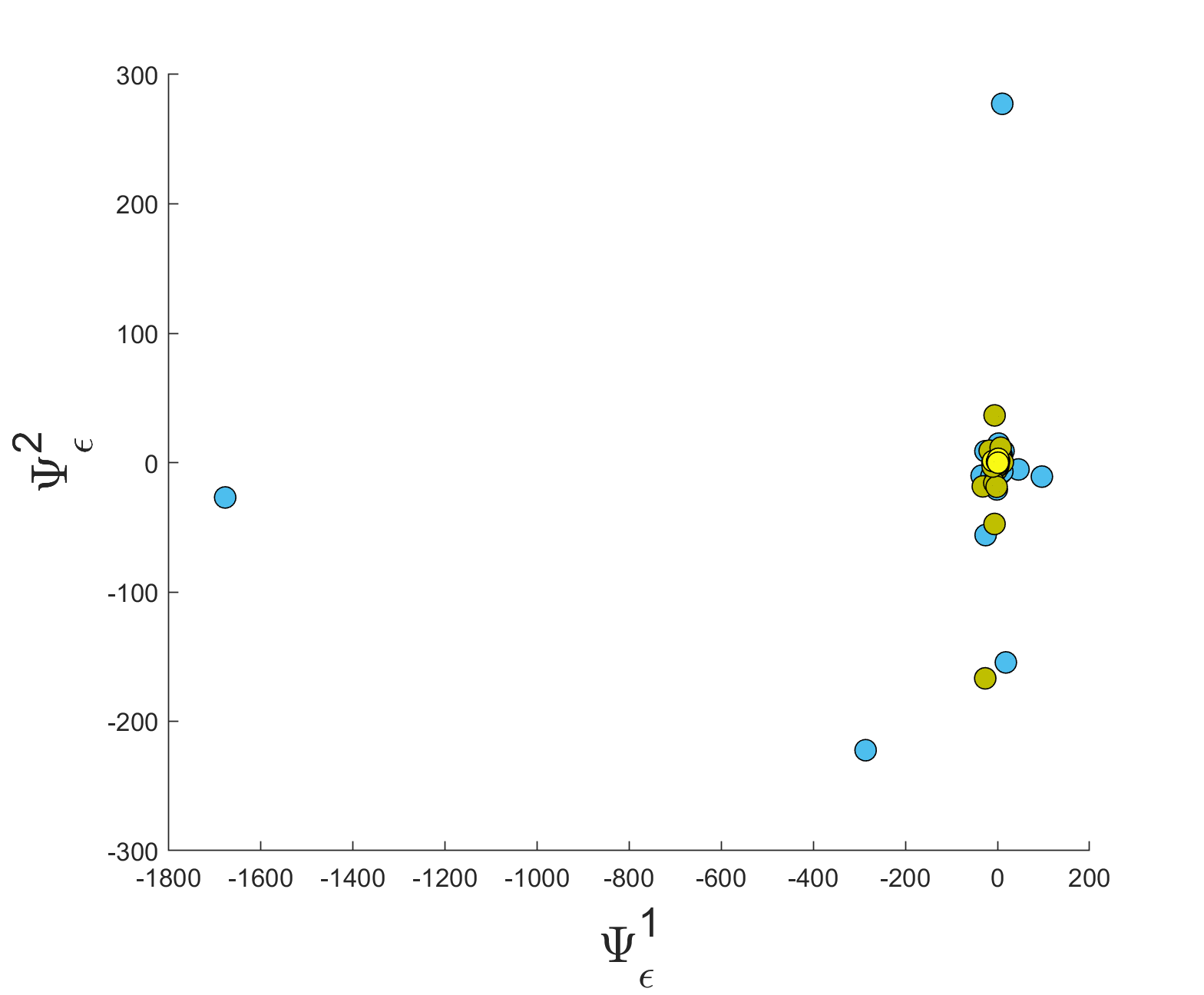}}
\subfigure[ ]{\label{fig:Seis_Singer} \includegraphics[scale = 0.15]{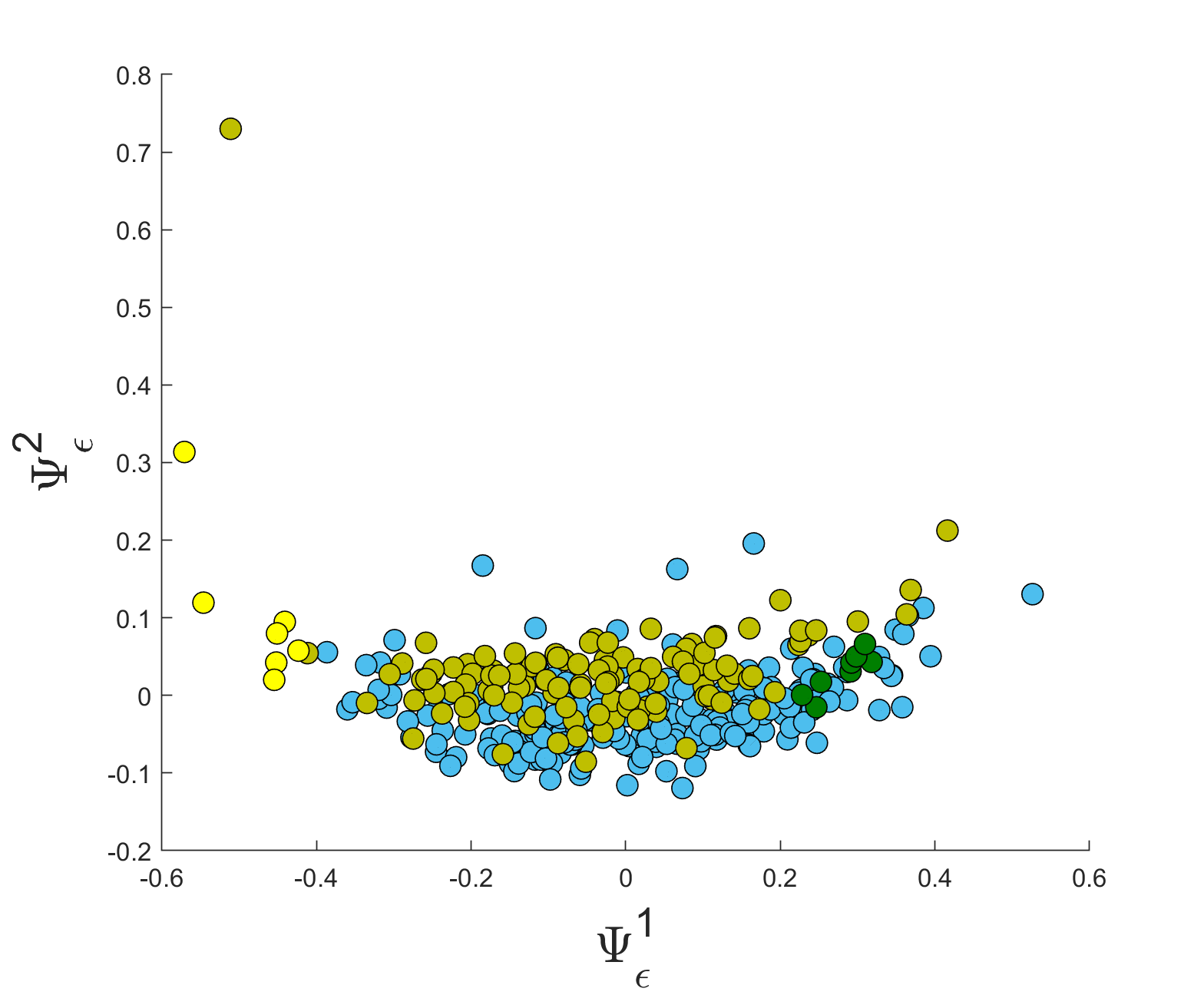}}
\subfigure[ ]{\label{fig:Seis_MinMax} \includegraphics[scale = 0.15]{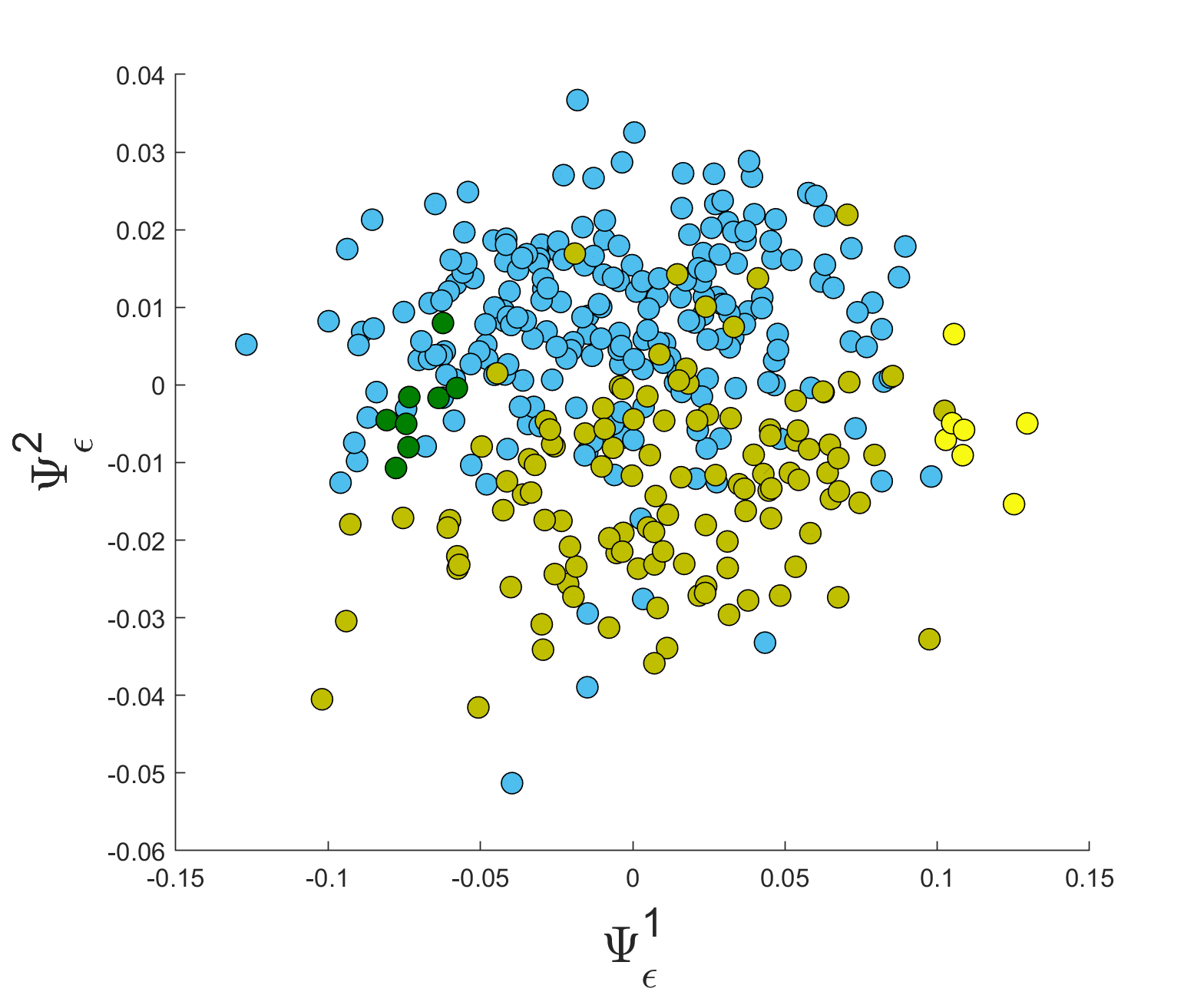}}
\subfigure[ ]{\label{fig:Seis_Prop} \includegraphics[scale = 0.15]{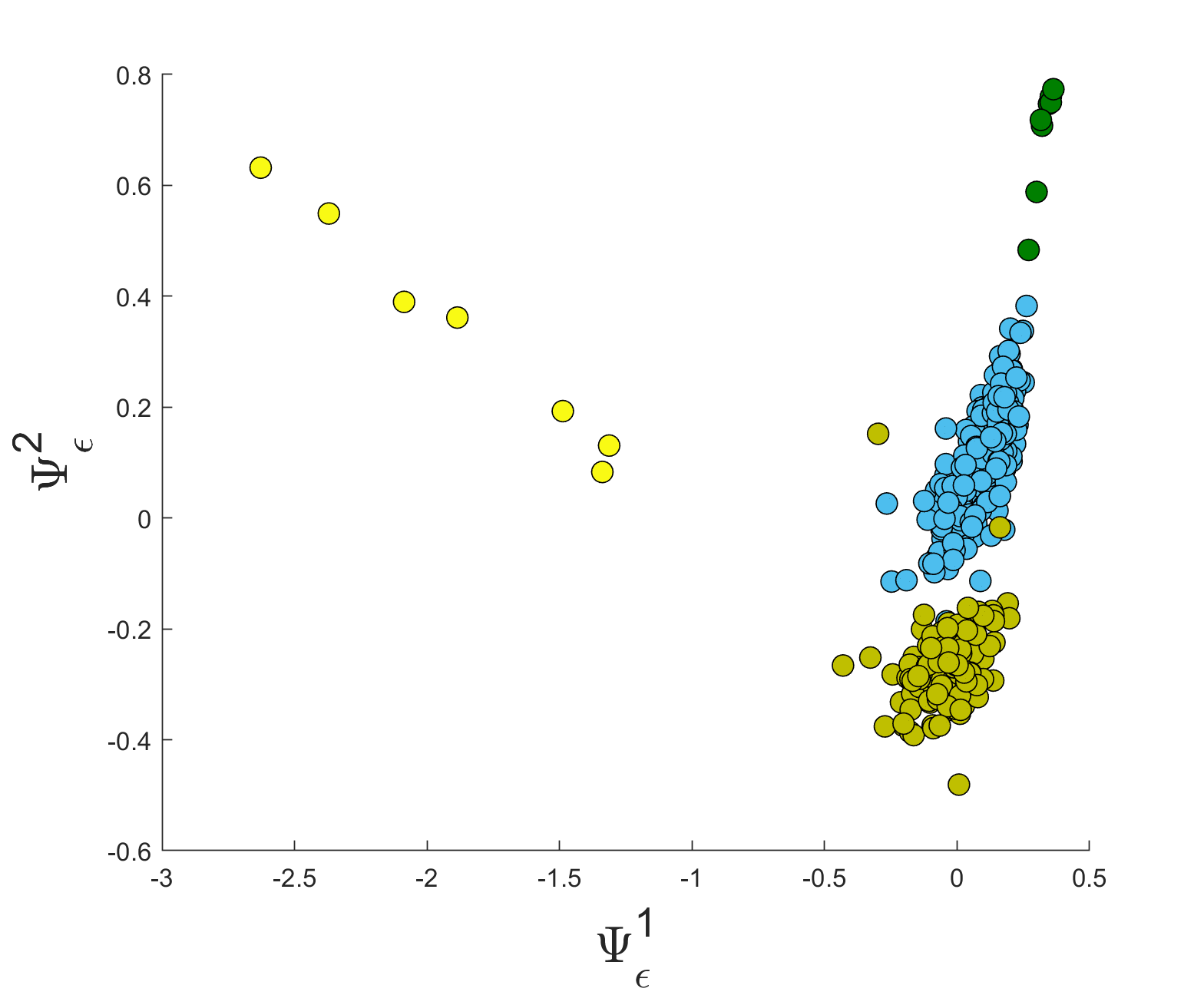}}
	\caption{The two leading DM coordinates of the 352 quarry blasts, colored by source quarry cluster. Scaling method based on: (a) The standard deviation of the data. (b) Singer's \cite{Singer} approach $\epsilon_0$ (detailed in Algorithm \ref{alg:Singer}. (c) The max-min methods $\epsilon_{MaxMin}$ (Eq.\ \ref{eq:MaxMin}). (d) Proposed scaling for manifold learning (detailed in Algorithm \ref{alg:solv_global}). }
	\label{fig:seismicloc}
\end{figure}
The quarry cluster separation is clearly evident in Fig.\ \ref{fig:Seis_Prop}. To further evaluate how well the low-dimensional representation correlates with the source location we use a list of source locations. A list  with the explosions locations is provided to us based on manual calculations, performed by an analyst by considering the phase difference between the signals' arrival times to different stations. We note that this estimation is accurate up to a few kilometers. A map of the location estimates colored by source quarry is presented in Fig.\ \ref{fig:map}. Then, we apply Canonical Correlation Analysis (CCA) to find the most correlated representation. The transformed representations $\myvec{U}$ and $\myvec{V}$ are presented in Fig.\ \ref{fig:CCAU} and \ref{fig:CCAV} respectively. The two correlation coefficients between coordinates of $\myvec{U}$ and $\myvec{V}$ are $0.88$ and $0.72$.  

\begin{figure}[H]
	\centering
	\subfigure[ ]{\label{fig:map}\includegraphics[scale = 0.72]{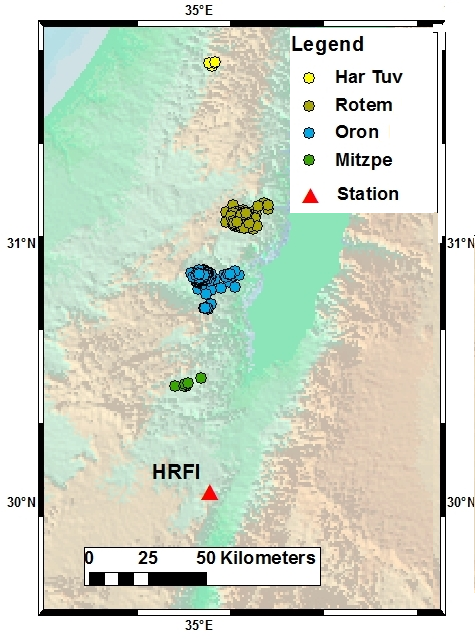}}
    \subfigure[ ]{\label{fig:CCAU}\includegraphics[scale = 0.14]{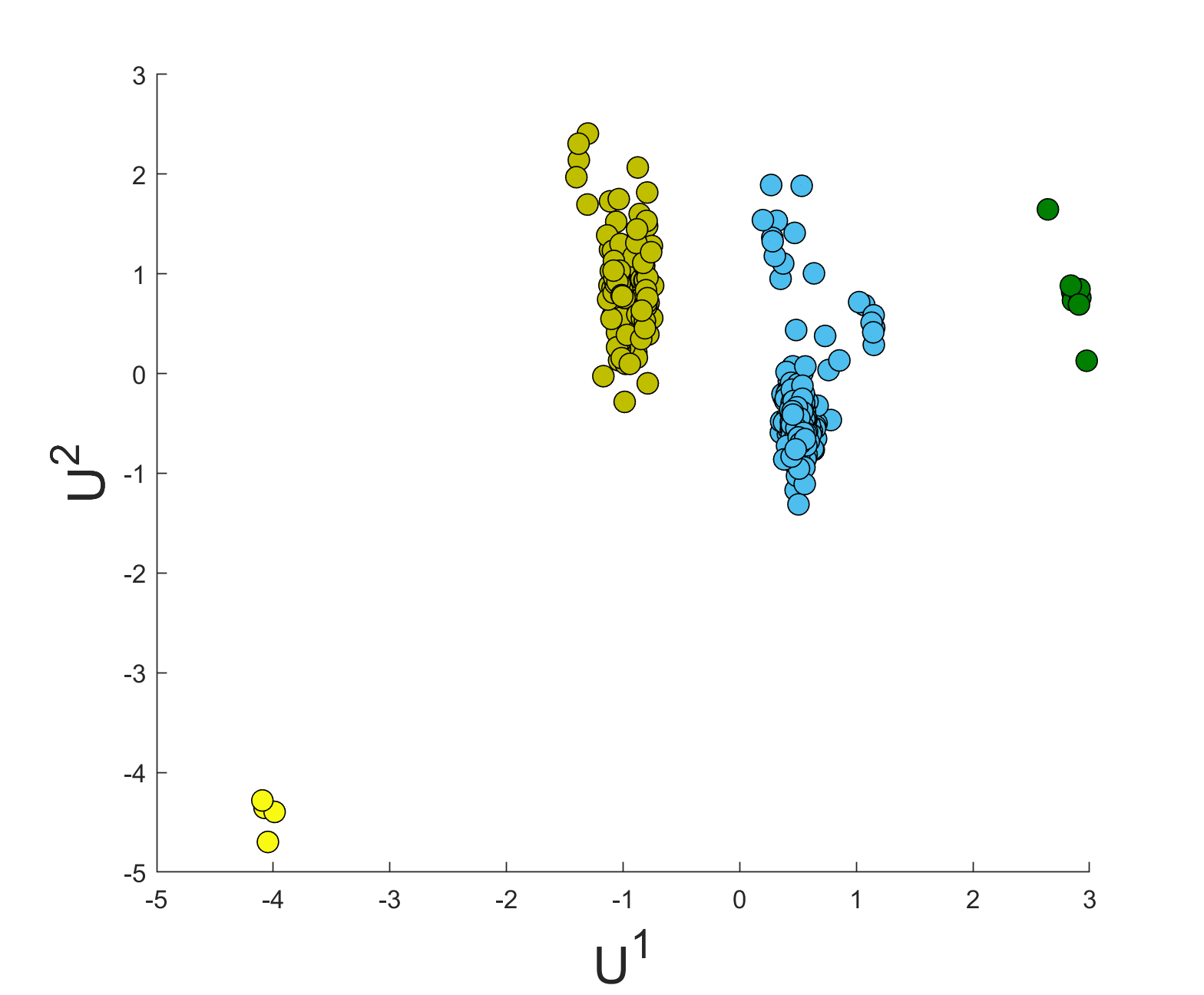}}
    \subfigure[ ]{\label{fig:CCAV}\includegraphics[scale = 0.14]{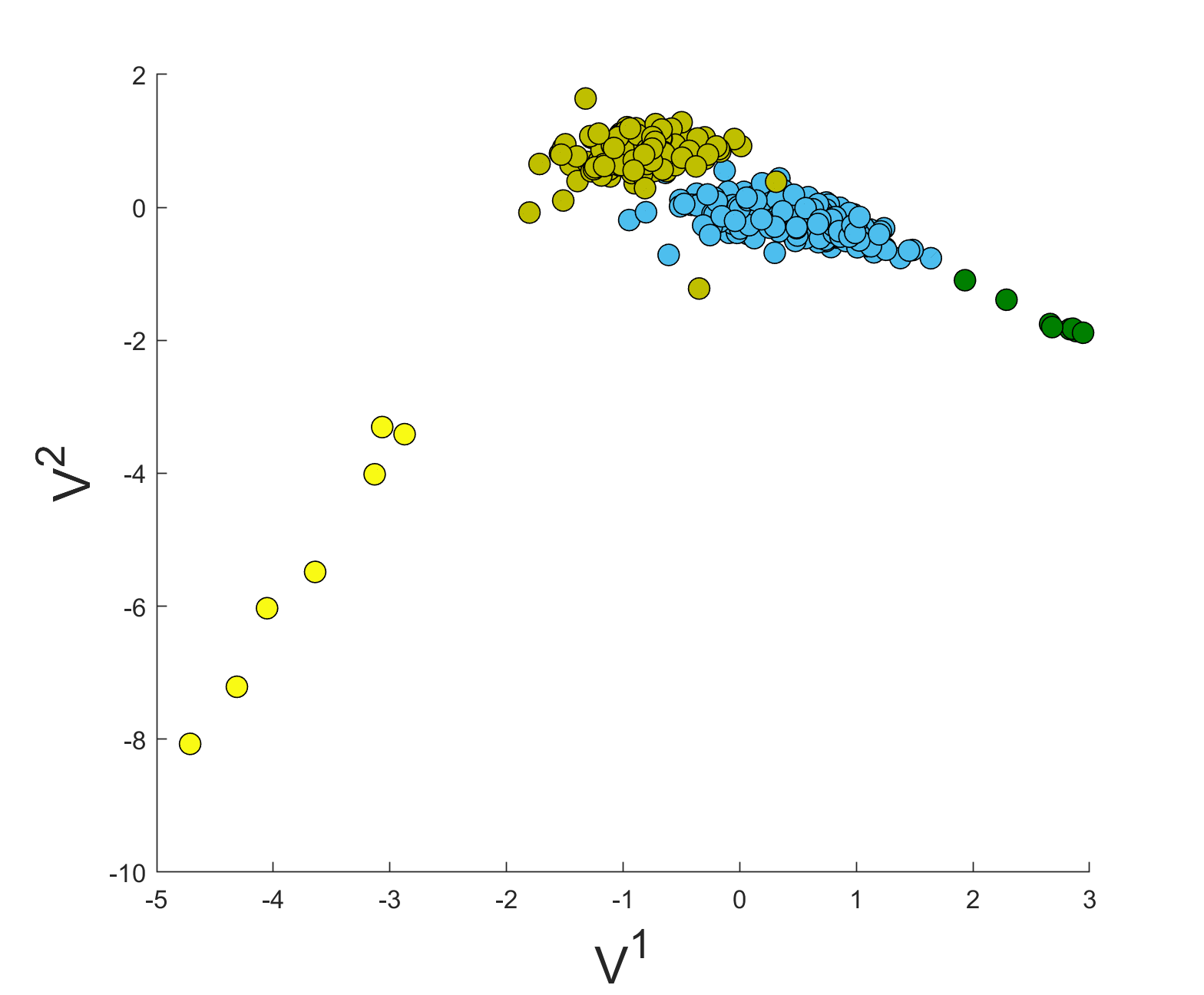}}

	\caption{(a) A map with source locations of 352 explosions. Points are colored by quarry cluster. (b) A CCA based representation of the latitude and longitude of the explosions. (c) A CCA based representation of the two leading DM coordinates extracted based on the proposed scaling (appear in Fig.\ \ref{fig:Seis_Prop}).  }
	\label{fig:CCA}
\end{figure}
In the second test case, we use additive Gaussian noise to ``degrade'' the signal. We use $g(y)=y+n_1$ as the defoemation function, where $n_1$ is drawn from a zero-mean Gaussian distribution with variance of $\sigma^2_N$. We estimate the scaling $\epsilon$ based on the proposed and alternative methods. Then, we apply CCA to the two leading DM coordinated and the estimated source locations. The top correlation coefficients for various values of $\sigma^2_N$ are presented in Fig.\ \ref{fig:CorrGauss}. Both the max-min method $\epsilon_{MaxMin}$ and Singer's $\epsilon_0$\cite{Singer} scheme seem to break at the same noise level. The standard deviation approach is robust to the noise level, this is because it essentially performs whitening of the data. However, this also obscures some of the information content when the noise is of low power. The proposed approach seems to outperform all alternative schemes for this test case.
\begin{figure}[H]
	\centering
	\includegraphics[scale = 0.25]{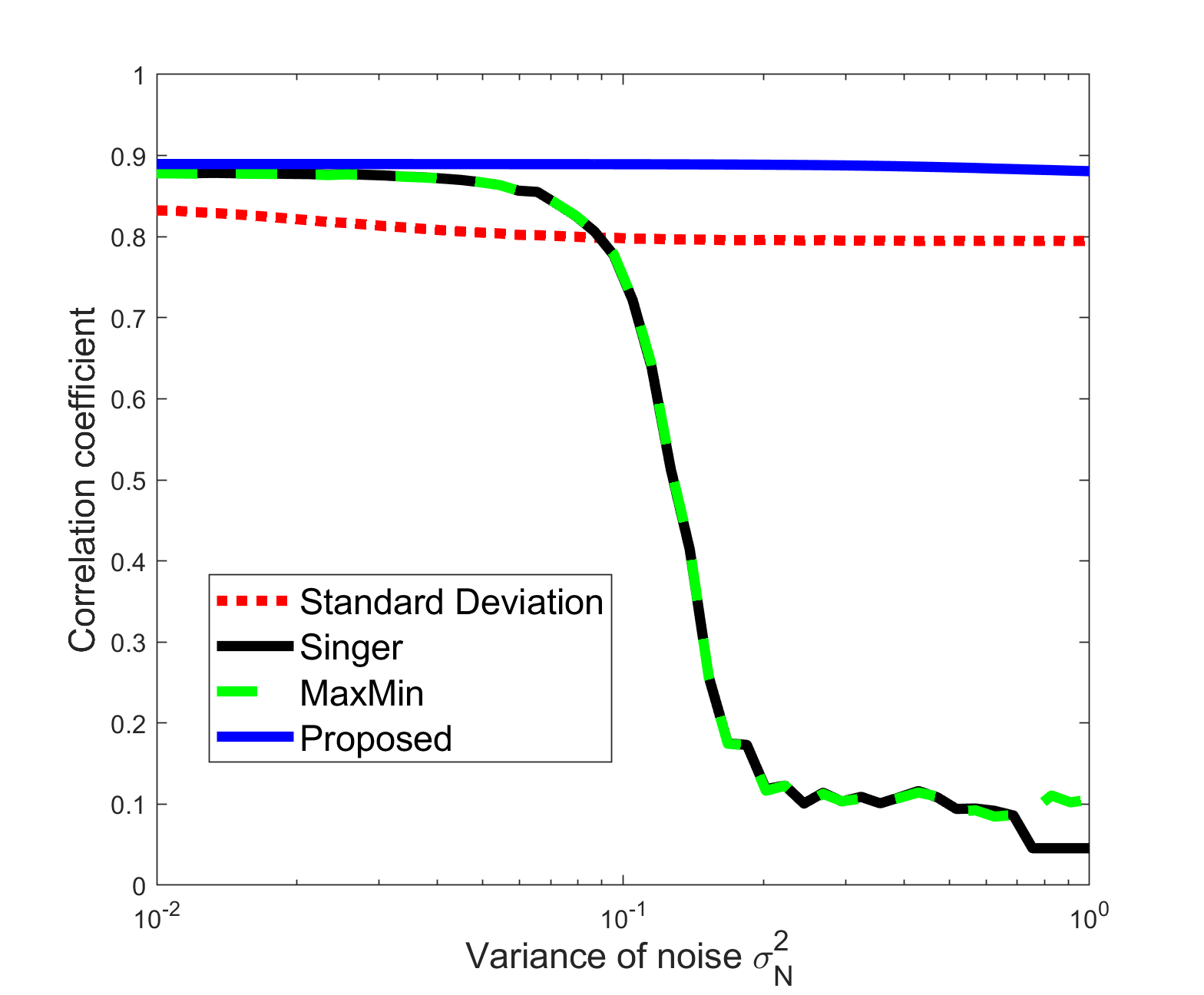}

	\caption{Highest correlation coefficient between the DM representation extracted using various scaling schemes. The x-axis corresponds to the variance of the additive Gaussian noise.}
	\label{fig:CorrGauss}
\end{figure}

\subsection{Classification of seismic events}
Automatic classification of seismic events is useful as it may reduce false alarm warnings on one hand, and enable monitoring nuclear events on the other hand. To evaluate the proposed scaling for classification of seismic events, we use a set with $46$ earthquakes and $62$ explosions all of which were recorded in Israel. A low-dimensional mapping is extracted by using DM with various values of $\epsilon$, and binary classification was applied using k-NN ($k=5$) in a leave-one-out fashion. The accuracy of the classification for each value of $\epsilon$ is presented in Fig.\ \ref{fig:SonoClass}. The estimated values of $\epsilon_{Ge}$, $\epsilon_{\rho_{P}}$ and $\epsilon_{\rho_{\Psi}}$ were annotated. It is evident that for classification the estimated values are indeed close to the optimal values, although they do not fully coincide. Nevertheless, they all achieve high classification accuracy.

\begin{figure}[H]
	\centering
	\includegraphics[scale = 0.46]{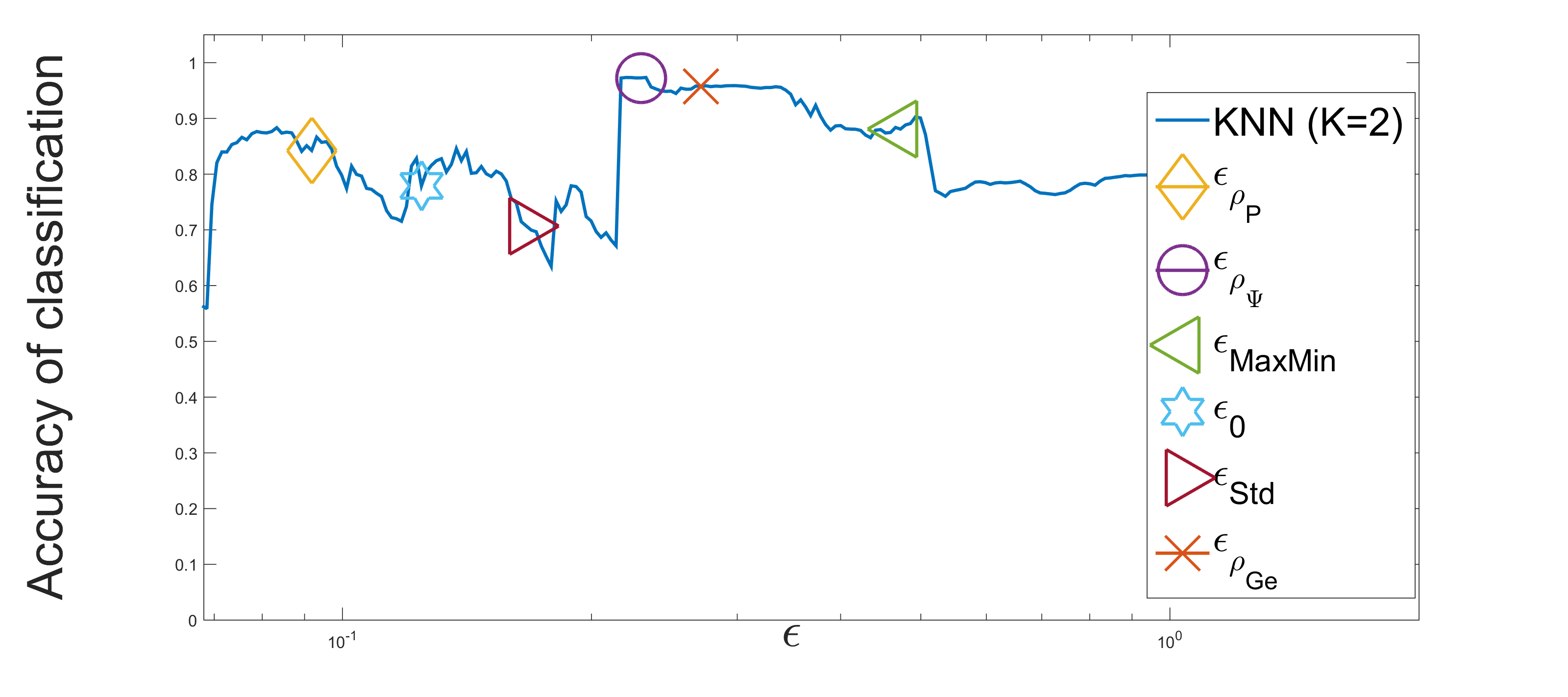}

	\caption{Classification accuracy vs. value of $\epsilon$. The proposed scales ($\epsilon_{{\Psi}},\epsilon_{Ge},\epsilon_{P}$) and existing methods ($\epsilon_0,\epsilon_{\text{MaxMin}},\epsilon_{std}$) are annotated on the plots. }
	\label{fig:SonoClass}
\end{figure}

\section{Conclusions}
\label{sec:Conclusions}
The scaling parameter $\epsilon$ of the widely used Gaussian kernel is often crucial for machine learning algorithms. As happens in many tasks in the field, there does not seem to be one global scheme that is optimal for all applications. For this reason, we propose two new frameworks for setting a kernel's scale parameter tailored for two specific tasks. The first approach is useful when the high-dimensional data points lie on some lower dimensional manifold. By exploiting the properties of the Gaussian kernel, we extract a vectorized scaling factor that provides a natural feature selection procedure. Theoretical justification and simulations on artificial data demonstrate the strength of the scheme over alternatives. The second approach could improve the performance of a wide range of kernel based classifiers. The capabilities of the proposed methods are demonstrated using artificial and real datasets. Finally, we present an application for the proposed approach that helps learn meaningful seismic parameters in an automated manner. In the future, we intend to generalize the approach for the multi-view setting recently studied in \cite{lindenbaum2015multiview,salhov2016multi,lederman2014common}.

\section{Appendix}
{\bf{Dimensionality from Angle and Norm Concentration (DANCo) \cite{danco}}}
DANCo is a recent method for estimating the intrinsic dimension based on high-dimensional measurements. The estimate is based on the following steps:
\begin{enumerate}
\item For each point $\myvec{x}_i,i=1,...,N$, find the set of $\ell+1$ nearest neighbors ${\cal{S}}^{\ell+1}(\myvec{x}_i)=\{\myvec{x}_{s_j} \}^{\ell+1}_{j=1}$. Denote the farthest neighbor of $\myvec{x}_i$ by $\widehat{{S}}(\myvec{x}_i)$. The value of $\ell$ depends on the density of the dataset, and is usually not higher than $10$. 
\item{Calculate the normalized closest distance for $\myvec{x}_i$ as $\rho(\myvec{x}_i)=\underset{\myvec{x}_j\in {\cal{S}^{\ell+1}}(\myvec{x}_i)}{\min} \frac{||\myvec{x}_i-\myvec{x}_j||}{||\myvec{x}_i-\widehat{{S}}(\myvec{x}_i)||} $. }
\item{Use Maximum Likelihood (ML) to estimate $\hat{d}_{ML}= \arg\max {\cal{L}}(d)$, where the log likelihood is
	\begin{equation}{\cal{L}}(d)= N \log \ell d +(d-1)\sum_{\myvec{x}_i\in \mymat{X}}\log \rho(\myvec{x}_i)+ (\ell-1)\sum_{\myvec{x}_i\in \mymat{X}} \log(1-\rho^d(\myvec{x}_i)) .\end{equation}	
}
\item{For each point $\myvec{x}_i$, find the $\ell$ nearest neighbors and center them relative to $\myvec{x}_i$. The translated points are denoted as $\tilde{\myvec{x}}_{s_j}\defeq\myvec{x}_{s_j}-\myvec{x}_i$. The set of $\ell$ nearest neighbors for point $\myvec{x}_i$ is denoted by $\tilde{\cal{S}}^{\ell}(\myvec{x}_i)= \{ \tilde{\myvec{x}}_{s_j} \}^{\ell}_{j=1}$. The distribution model is explained in \cite{camastra2003data}.}
\item{Calculate the ${\ell}\choose{2}$} angles for all pairs of vectors within $\tilde{\cal{S}}^{\ell}(\myvec{x}_i)$. The angles are calculated using
\begin{equation}\label{eq:angles}\theta(\myvec{x}_{s_j},\myvec{x}_{s_m})= \arccos \frac{\myvec{\tilde{x}}_{s_j} \cdot \myvec{\tilde{x}}_{s_m}  }{\myvec{||\tilde{x}}_{s_j}|||| \myvec{\tilde{x}}_{s_m}||}. \end{equation} For each point $\myvec{x}_i$ concatenate all angles from Eq.\ \ref{eq:angles} into a vector  $\myvec{\bar{\theta}}_i$ and the set of vectors by $\myvec{\widehat{\theta}}\defeq \{\myvec{\bar{\theta}}_i\}^N_{i=1}$. 
\item{Estimate the set of parameters $\myvec{\hat{\nu}}=\{\hat{\nu}_i\}^N_{i=1} $ and $\hat{\myvec{\tau}}=\{\hat{\tau}_i\}^N_{i=1}$ based on a ML estimation using the von Mises (VM) distribution with respect to $\myvec{X}$. The VM pdf describes the probability for $\theta$ given the mean direction $\nu$ and the concentration parameter $\tau\geq0$. The VM pdf, as well as the ML solution, are presented in \cite{danco}. The means of $\myvec{\hat{\nu}}$ and $\hat{\myvec{\tau}}$ are denoted as $\hat{\mu}_\nu \text{ and } \hat{\mu}_\tau,$ respectively.}  
\item{For each hypothesis of $d=1,...,D$, draw a set of $N$ data points $\myvec{Y}^d= \{ \myvec{y}^d_i\}^N_{i=1}$ from a $d$-dimensional  unit hypersphere.  }
\item{Repeat steps 1-6 for the artificial dataset $\myvec{Y}^d$. Denote the maximum likelihood estimated set of parameters as $\tilde{d}_{ML},\tilde{\myvec{\nu}},\tilde{\myvec{\tau}},\tilde{\mu}_{\nu},\tilde{\mu}_{\tau}$.}
\item{Obtain $\hat{d}$ by minimizing the Kullback-Leibler (KL) divergence between the distribution based on $\myvec{X}$ and $\myvec{Y}^d$. The estimator takes the following form
	\[ \hat{d}=\underset{d=1,...,D}{\arg \min} \myvec{{KL}}(g(\cdot;\ell,\hat{d}_{ML}),g(\cdot;\ell,\tilde{d}_{ML})) +\myvec{{KL}}(q(\cdot;\hat{\mu}_{\nu},\hat{\mu}_{\tau}),q(\cdot;\tilde{\mu}_{\nu},\tilde{\mu}_{\tau})),\] where $g$ is the pdf of the normalized distances and $q$ is the VM pdf. Both $g$ and $q$ are described in \cite{danco}.
	
}
\end{enumerate}

\section*{Acknowledgment}

This research was partially supported by the US-Israel Binational Science Foundation (BSF 2012282), Blavatnik Computer Science Research Fund , Blavatink ICRC Funds and Pazy Foundation.



\noindent {\bf Ofir Lindenbaum} {recived B.Sc. degrees in 2010, in electrical engineering and in physics (both summa cum laude), from the Technion --— Israel Institute of Technology. In 2018, he received his Ph.D in electrical engineering at the School of Electrical Engineering at Tel-Aviv University. His areas of interest include machine learning, applied and computational harmonic analysis, musical signals analysis.

	{\bf Moshe Salhov} received his B.Sc. (magna cum laude) in electrical engineering from Ben-Gurion University, Beer-Sheva, Israel and  M.Sc. in electrical engineering from the Technion, Institute of Technology, Haifa, Israel, in 1998 and 2006, respectively.
	In 2017, he received his Ph.D  in computer science from Tel Aviv University, Tel Aviv, Israel. His research interests include wide area of machine learning, deep learning, big data analysis, scientific computing, optimization, computer vision and wireless communication.
	
	{\bf Arie Yeredor} received the B.Sc. (summa cum laude) and Ph.D. degrees in electrical engineering from Tel-Aviv University, where he is currently an Associate Professor. His research and teaching areas are in statistical signal processing and estimation theory. He served as an Associate Editor and Guest Editor for several journals, and currently serves as a Senior Area Editor for IEEE TRANSACTIONS ON SIGNAL PROCESSING.

	{\bf Amir Averbuch} received the B.Sc and M.Sc degrees in Mathematics from the Hebrew University in Jerusalem, Israel in 1971 and 1975, respectively. He received the Ph.D degree in Computer Science from Columbia University, New York, in 1983. In 1987, he joined the School of Mathematical Sciences (later split to School of Computer Science), Tel Aviv University, where he is now Professor emerita of Computer Science.}
	
\end{document}